\newcommand{\la}{\langle}
\newcommand{\ra}{\rangle}
\newcommand{\qvalue}{Q}
\newcommand{\vvalue}{V}
\newcommand{\reward}{r}
\newcommand{\regret}{\mathrm{Regret}}
\newcommand{\conf}{\mathrm{conf}}
\newcommand{\var}{\mathrm{var}}
\definecolor{LightCyan}{rgb}{0.8, 0.9, 1}
\newcommand*{\rom}[1]{\expandafter\@slowromancap\romannumeral #1@}
\title{\huge Variance-Dependent Regret Bounds for Linear Bandits and Reinforcement Learning: Adaptivity and Computational Efficiency}
\date{}
\author
{
    Heyang Zhao\thanks{Department of Computer Science, University of California, Los Angeles, CA 90095, USA; e-amil:{\tt hyzhao@cs.ucla.edu}}
    ~~
    Jiafan He\thanks{Department of Computer Science, University of California, Los Angeles, CA 90095, USA; e-mail: {\tt jiafanhe19@ucla.edu}}
    ~~
	Dongruo Zhou\thanks{Department of Computer Science, University of California, Los Angeles, CA 90095, USA; e-mail: {\tt drzhou@cs.ucla.edu}} 
	~~
    Tong Zhang \thanks{‡Google Research and The Hong Kong University of Science and Technology; e-mail:{\tt tongzhang@tongzhang-ml.org}}
    ~~
    Quanquan Gu\thanks{Department of Computer Science, University of California, Los Angeles, CA 90095, USA; e-mail: {\tt qgu@cs.ucla.edu}}
}\end{small}
\begin{document}

\maketitle
\begin{abstract}%


    Recently, several studies \citep{zhou2021nearly, zhang2021variance, kim2021improved, zhou2022computationally} have provided variance-dependent regret bounds for linear contextual bandits, which interpolates the regret for the worst-case regime and the deterministic reward regime. However, these algorithms are either computationally intractable or unable to handle unknown variance of the noise. 
      In this paper, we present a novel solution to this open problem by proposing the \emph{first computationally efficient} algorithm for linear bandits with heteroscedastic noise. Our algorithm is adaptive to the unknown variance of noise and achieves an $\tilde{O}(d \sqrt{\sum_{k = 1}^K \sigma_k^2} + d)$  regret, where $\sigma_k^2$ is the \emph{variance} of the noise at the round $k$, $d$ is the dimension of the contexts and $K$ is the total number of rounds. Our results are based on an adaptive variance-aware confidence set enabled by a new Freedman-type concentration inequality for self-normalized martingales and a multi-layer structure to stratify the context vectors into different layers with different uniform upper bounds on the uncertainty. 
      
      Furthermore, our approach can be extended to linear mixture Markov decision processes (MDPs) in reinforcement learning. We propose a variance-adaptive algorithm for linear mixture MDPs, which achieves a problem-dependent horizon-free regret bound that can gracefully reduce to a nearly constant regret for deterministic MDPs. Unlike existing nearly minimax optimal algorithms for linear mixture MDPs, our algorithm does not require explicit variance estimation of the transitional probabilities or the use of high-order moment estimators to attain horizon-free regret. We believe the techniques developed in this paper can have independent value for general online decision making problems.

    \end{abstract}

    
    \section{Introduction}
    
    The Multi-Armed Bandits (MAB) problem has been persistently studied since 1933 \citep{thompson1933likelihood,robbins1952some, cesa1998finite, auer2002finite}. In the past decades, a variety of bandit algorithms have been developed under different settings, emerging their practicality in assorted real world tasks such as online advertising \citep{li2010contextual}, clinical experiments \citep{villar2015multi} and resource allocations \citep{lattimore2015linear}, to mention a few. For a thorough review of bandit algorithms, please refer to \citet{bubeck2012regret,lattimore2020bandit}.
    
    
    To deal with an excessive number of arms, contextual linear bandits \citep{auer2002using,abe2003reinforcement,li2010contextual}, where each arm is associated with a context vector and the expected reward is a linear function of the context vector, have garnered a lot of attention. Numerous studies have attempted to design algorithms to achieve the optimal regret bound for linear bandits \citep{chu2011contextual,AbbasiYadkori2011ImprovedAF}. Despite the achievement of minimax-optimal regret bounds in various settings, they only quantify the performance of a specific algorithm under the worst-case scenario. 
    However, in the noiseless scenario (i.e., the variance of the noise equal $0$), the learner only requires $\tilde{O}(d)$ regret to recover the underlying  coefficients of the linear function. This motivates a series of work on variance-dependent regret for linear bandits \citep{zhou2021nearly, zhang2021variance, zhou2022computationally, zhao2022bandit}, which bridges the gap between the worst-case constant-variance regime (i.e., noisy case) and the deterministic regime (i.e., noiseless case). In these works, the regret bounds depend on the variance of noise at each round, i.e., $\{\sigma_i^2\}_{i = 1}^K$ where $K$ is the total number of rounds. Unfortunately, all these prior approaches are either computationally inefficient or non-adaptive, meaning the agent must possess prior knowledge of the variance to learn the reward function. As a result, none of  the existing algorithms are practical enough for real-world use, despite being designed for better performance in reality. Therefore, an open question arises: \begin{center}
    \emph{Can we design computationally tractable algorithms for linear bandits with heteroscedastic noise to obtain a variance-dependent regret bound without prior knowledge on the noise?}
    \end{center}
    
    \subsection{Our Contributions}
    
    
    In this paper, we answer this question affirmatively by proposing the first computationally efficient algorithm for heteroscedastic linear bandits with unknown variance and attaining a regret bound scales as
    $
    \tilde{O}\big(d\sqrt{\textstyle{\sum_{k = 1}^K} \sigma_k^2} + d\big),$
    where $\sigma_k^2$ is the \emph{variance} of the noise at the round $k$, $d$ is the dimension of the contexts and $K$ is the total number of rounds. Our result is significant in the sense that it is minimax optimal in both the deterministic case and the worst case. When there is no noise, the above regret degenerates to $\tilde{O}(d)$, which corresponds to the benign regime. In the worst case when the noise is $R$-sub-Gaussian, the above regret reduces to $\tilde{O}(dR\sqrt{K} + d)$, which matches the minimax regret bound proved in \citet{AbbasiYadkori2011ImprovedAF}. Please refer to Table \ref{table:1} for a comparison between our result and the previous results in linear contextual bandits. 
    
    
    \newcolumntype{g}{>{\columncolor{LightCyan}}c}
    \begin{table*}[t!]
    \caption{Comparison between different algorithms for stochastic linear contextual bandits. $d$, $K$, $\{\sigma_k\}_{k \in [K]}$ are the dimension of context vectors, the number of rounds and the variance of noise at round $k \in [K]$. The last column indicates whether the corresponding algorithm requires the variance information to achieve variance-dependent regret. } \label{table:1}
    \centering
    \resizebox{\columnwidth}{!}{%
    \begin{tabular}{ggggg}
    \toprule 
    \rowcolor{white} Algorithm & Regret (General-Case) & Regret  (Deterministic-Case)\footnotemark[1] &  Efficiency  & Variances  \\
    \midrule
    \rowcolor{white} $\text{ConfidenceBall}_2$ & & & & \\
    \rowcolor{white} \small{\citep{dani2008stochastic}}  & \multirow{-2}{*}{$\tilde O(d\sqrt{K})$} & \multirow{-2}{*}{$\tilde O(d\sqrt{K})$} & \multirow{-2}{*}{Yes} & \multirow{-2}{*}{N/A}\\ 
    \rowcolor{white} OFUL & & & & \\
    \rowcolor{white} \small{\citep{AbbasiYadkori2011ImprovedAF}}  & \multirow{-2}{*}{$\tilde O(d\sqrt{K})$} & \multirow{-2}{*}{$\tilde O(d\sqrt{K})$} & \multirow{-2}{*}{Yes} & \multirow{-2}{*}{N/A}\\
    \rowcolor{white} Weighted OFUL & &  & &\\ 
    \rowcolor{white} \small{\citep{zhou2021nearly}}  & \multirow{-2}{*}{$\tilde O\Big(d\sqrt{\sum_{k=1}^K \sigma_k^2}+\sqrt{dK}\Big)$} & \multirow{-2}{*}{$\tilde O(\sqrt{dK})$} & \multirow{-2}{*}{Yes} & \multirow{-2}{*}{Known}\\
    \rowcolor{white} Weighted OFUL+ & &  & &\\   
    \rowcolor{white} \small{\citep{zhou2022computationally}}  & \multirow{-2}{*}{$\tilde O\Big(d\sqrt{\sum_{k=1}^K \sigma_k^2}+d\Big)$} & \multirow{-2}{*}{$\tilde O(d)$} & \multirow{-2}{*}{Yes} & \multirow{-2}{*}{Known}\\
    \rowcolor{white} VOFUL & & & & \\ 
    \rowcolor{white}\small{\citep{zhang2021variance}}& \multirow{-2}{*}{$\tilde O\Big(d^{4.5}\sqrt{\sum_{k=1}^K \sigma_k^2}+d^5\Big)$} & \multirow{-2}{*}{$\tilde O(d^2)$} & \multirow{-2}{*}{No}& \multirow{-2}{*}{Unknown}\\
    \rowcolor{white} VOFUL2 & & & & \\ 
    \rowcolor{white}\small{\citep{kim2021improved}}& \multirow{-2}{*}{$\tilde O\Big(d^{1.5}\sqrt{\sum_{k=1}^K \sigma_k^2}+d^2\Big)$} & \multirow{-2}{*}{$\tilde O(d^5)$} & \multirow{-2}{*}{No}& \multirow{-2}{*}{Unknown}\\
    \texttt{SAVE} & & & & \\
    \small{(Theorem \ref{thm:regret1})}& \multirow{-2}{*}{$\tilde O\Big(d\sqrt{\sum_{k=1}^K \sigma_k^2}+d\Big)$}& \multirow{-2}{*}{$\tilde O(d)$} & \multirow{-2}{*}{Yes}& \multirow{-2}{*}{Unknown}\\
    \bottomrule
    \end{tabular}
    }
    
    \end{table*}
    \footnotetext[1]{For the deterministic-case, the variance at stage $k\in[K]$ satisfies $\sigma_k=0$. The regret guarantee is the same as the general case for variance-unaware algorithms.}

    Our algorithm and its analysis rely on the following new techniques.
    \begin{itemize}[leftmargin =*]
        \item We propose a new Freedman-type concentration inequality for vector-valued self-normalized martingales, which is applicable to the heteroscedastic random variables. This strictly generalizes the previous Bernstein-type concentration inequality (Theorem 4.1, \citealt{zhou2021nearly}) for vector-valued self-normalized martingales with homoscedastic random variables. 
        \item  We employ a multi-layer structure to partition the observed context vectors according to their elliptical norm. Different from the classic \texttt{SupLinUCB} algorithm \citep{chu2011contextual}, we use carefully designed weights within each layer to ensure that all the reweighted context vectors in the same layer have a uniform elliptical norm.   
        \item Equipped with the new concentration inequality, we design a new adaptive variance-aware exploration strategy. Specifically, we adopt a self-adaptive confidence set whose radius is updated at each round based on the `square loss' incurred by the online estimator. 
    \end{itemize}

    Furthermore, we apply our novel techniques to episodic Markov decision processes, where the agent interacts with the environment by taking actions and observing states and rewards generated by the unknown dynamics over time. We focus on linear mixture MDPs \citep{jia2020model,ayoub2020model,zhou2021provably} in this paper, whose transition dynamic is assumed to be a linear combination of $d$ basic transition models. For linear mixture MDPs, both minimax optimal horizon-dependent regret \citep{zhou2021nearly} and horizon-free regret \citep{zhang2021variance, kim2021improved, zhou2022computationally} have been achieved. We propose an algorithm named \texttt{UCRL-AVE} and derive a tight problem-dependent regret bound that has no explicit polynomial dependency on neither the number of episodes $K$ nor the planing horizon $H$. Our regret bound gracefully degrades to the nearly minimax optimal horizon-free regret bound achieved by \citet{zhou2022computationally} in the worst case. See Table \ref{table:2} for a comparison between our regret bound with the previous results regarding linear mixture MDPs. 
    


    \newcolumntype{g}{>{\columncolor{LightCyan}}c}
    \begin{table*}[t!]
    \caption{Comparison of our variance-dependent regret with existing regret bounds for linear mixture MDPs. $H$, $d$, $K$, are the horizon of the underlying MDP, the dimension of the feature vectors and the number of episodes. $\Var_K^*$ is defined in Section \ref{sec-4} to characterize the randomness of MDPs. It is shown later in Section \ref{sec-4} that our variance-dependent regret degrades to $\tilde{O}(d\sqrt{K} + d^2)$ in the worst case, which matches the nearly minimax optimal horizon-free regret in linear mixture MDPs. }\label{table:2}
    \centering 
    \resizebox{\columnwidth}{!}{%
    \begin{tabular}{ggggg}
    \toprule 
    \rowcolor{white} Algorithm & Regret (General-Case) & Variance-Dependent &   Assumption & Efficiency  \\
    \midrule  
    \rowcolor{white} $\text{UCRL-VTR}$ & & & Homogeneous & \\
    \rowcolor{white} \small{\citep{ayoub2020model,jia2020model}}  & \multirow{-2}{*}{$\tilde O(d\sqrt{H^3K})$} & \multirow{-2}{*}{No} & {$\sum_{h=1}^{H} r_h \leq H$} & \multirow{-2}{*}{Yes}\\ 
    \rowcolor{white} UCRL-VTR+ & $\tilde O(\sqrt{d^2H^3+dH^4}\sqrt{K}$ & & Inhomogeneous & \\
    \rowcolor{white} \small{\citep{zhou2021nearly}}  & {$+d^2H^3+d^3H^2)$} & \multirow{-2}{*}{No} & {$\sum_{h=1}^{H} r_h \leq H$}  & \multirow{-2}{*}{Yes}\\
    \rowcolor{white} VarLin & &  & Homogeneous &\\ 
    \rowcolor{white} \small{\citep{zhang2021variance}}  & \multirow{-2}{*}{$\tilde O\Big(d^{4.5}\sqrt{K}+d^9\Big)$} & \multirow{-2}{*}{No} & {$\sum_{h=1}^{H} r_h \leq 1$} & \multirow{-2}{*}{No}\\
    \rowcolor{white} VarLin2 & &  & Homogeneous &\\   
    \rowcolor{white} \small{\citep{kim2021improved}}  & \multirow{-2}{*}{$\tilde O(d\sqrt{K}+d^2)$} & \multirow{-2}{*}{No}  & {$\sum_{h=1}^{H} r_h \leq 1$} & \multirow{-2}{*}{No}\\
    \rowcolor{white} HF-UCRL-VTR+ & & &  Homogeneous & \\ 
    \rowcolor{white}\small{\citep{zhou2022computationally}}& \multirow{-2}{*}{$\tilde O(d\sqrt{K}+d^2)$} & \multirow{-2}{*}{No}& {$\sum_{h=1}^{H} r_h \leq 1$} & \multirow{-2}{*}{Yes}\\
    \texttt{UCRL-AVE} & & & Homogeneous & \\
    \small{(Theorem \ref{thm:regret1})}& \multirow{-2}{*}{$\tilde O\Big(d \sqrt{\Var_K^*} + d^{2}\Big)$}& \multirow{-2}{*}{Yes} & {$\sum_{h=1}^{H} r_h \leq 1$} & \multirow{-2}{*}{Yes}\\
    \bottomrule 
    \end{tabular}
    }
    \end{table*}

    \subsection{Related Work}
    
    \noindent \textbf{Problem-dependent regret in RL. }
    Most of the performance guarantees for episodic MDPs have been focused on worst-case regret bounds. However, some works have achieved problem-dependent regret, which holds in various scenarios, as demonstrated by studies such as \citet{zanette2019tighter, Simchowitz2019NonAsymptoticGR, jin2020reward, he2021logarithmic,Dann2021BeyondVG, xu2021fine, wagenmaker2022first}. These results can be broadly categorized into two groups. The first group is first-order regret in RL, which was originally proposed by \citet{zanette2019tighter} and later extended to the linear MDP setting by \citet{wagenmaker2022first}. The second group is gap-dependent regret guarantees, which have been studied for both tabular MDPs \citep{Simchowitz2019NonAsymptoticGR, xu2021fine, Dann2021BeyondVG} and linear MDPs/linear mixture MDPs \citep{he2021logarithmic}. We also notice that a concurrent work by \citet{zhou2023sharp} considers variance-dependent bound in tabular MDPs. Our paper utilizes the same definition of \emph{trajectory-based total variance} as \citet{zhou2023sharp}, which characterizes the \emph{randomness} of an episodic MDP. 

    \noindent\textbf{Horizon-free regret in tabular RL. }
    RL is considered to be more challenging than contextual bandits due to the non-trivial planning horizon and uncertain state transitions. \citet{jiang2018open} conjectured that any algorithm seeking an $\epsilon$-optimal policy for tabular RL, where the total reward is bounded by $1$, would require a sample complexity with a polynomial dependence on the planning horizon $H$. However, this conjecture was disproven by \citet{wang2020long}, who introduced a horizon-free algorithm with a sample complexity of $\tilde O(|\cS|^5|\cA|^4\epsilon^{-2}\text{polylog}(H))$ that only has a polylogarithmic dependence on $H$. \citet{zhang2021reinforcement} then proposed a near-optimal algorithm with a regret of $O((\sqrt{|\cS||\cA|K}+|\cS|^2|\cA|)\text{polylog}(H))$ and a similar sample complexity. Later, \citet{li2022settling} and \citet{zhang2022horizon} presented algorithms with sample complexity guarantees that are independent of $H$. 

    \noindent \textbf{Heteroscedastic linear bandits. }
    The worst-case regret of linear bandits has been extensively studied \citep{auer2002using, dani2008stochastic, li2010contextual, chu2011contextual, AbbasiYadkori2011ImprovedAF, li2019nearly}. Recently, there is a series of works considering a heteroscedastic variant of the classic linear bandit problem where the noise distribution is assumed to vary over time. \citet{kirschner2018information} is the first to formally propose linear bandit model with heteroscedastic noise. In their setting, the noise at round $k \in [K]$ is assumed to be $\sigma_k$-sub-Gaussian. Some recent works relaxed the sub-Gaussian assumption in the sense that the noise at the $k$-th round is assumed to be of variance $\sigma_k^2$ instead of $\sigma_k$-sub-Gaussian \citep{zhou2021nearly, zhang2021variance, kim2021improved, zhou2022computationally, dai2022variance}. Among these works, \citet{zhou2021nearly} and \citet{zhou2022computationally} considered known-variance case where $\sigma_k$ is observed by the learner after the $k$-th round, while \citet{zhang2021variance, kim2021improved} proposed statistically efficient but computationally inefficient algorithms for the unknown-variance case. \citet{dai2022variance} considered a more specific model, heteroscedastic sparse linear bandits, and proposed a general framework which converts any heteroscedastic linear bandit algorithm to an algorithm for heteroscedastic sparse linear bandits. 
    
    \noindent \textbf{RL with linear function approximation. }
    There is a huge body of literature on RL with linear function approximation \citep{jiang2017contextual, dann2018oracle, yang2019sample, jin2020provably, wang2020optimism, du2019good, sun2019model, zanette2020frequentist, zanette2020learning, weisz2021exponential, yang2020reinforcement, modi2020sample, ayoub2020model, zhou2021nearly, he2021logarithmic, zhou2022computationally}. Several types of assumptions on the linear structure of the underlying MDPs have been made in these works, including the \emph{linear MDP} assumption \citep{yang2019sample, jin2020provably, hu2022nearly, he2022nearly, agarwal2022vo}, the \emph{low Bellman-rank} assumption \citep{jiang2017contextual}, the \emph{low inherent Bellman error} assumption \citep{zanette2020learning}, and the \emph{linear mixture MDP} assumption \citep{modi2020sample,jia2020model, ayoub2020model, zhou2021nearly}. In this paper, we focus on linear mixture MDPs, where the transition probability function is assumed to be a linear function of a known feature mapping over the state-action-next-state triplet. Recently, there is a line of works attaining horizon-free regret bounds \citep{zhang2021variance, kim2021improved, zhou2022computationally} for linear mixture MDPs, which are most related to our work.

    \paragraph{Notation} 
    We use lower case letters to denote scalars, and use lower and upper case bold face letters to denote vectors and matrices respectively. We denote by $[n]$ the set $\{1,\dots, n\}$. For a vector $\xb\in \RR^d$ and a positive semi-definite matrix $\bSigma\in \RR^{d\times d}$, we denote by $\|\xb\|_2$ the vector's Euclidean norm and define $\|\xb\|_{\bSigma}=\sqrt{\xb^\top\bSigma\xb}$. For $\xb, \yb\in \RR^d$, let $\xb\odot\yb$ be the Hadamard (componentwise) product of $\xb$ and $\yb$. For two positive sequences $\{a_n\}$ and $\{b_n\}$ with $n=1,2,\dots$, 
    we write $a_n=O(b_n)$ if there exists an absolute constant $C>0$ such that $a_n\leq Cb_n$ holds for all $n\ge 1$ and write $a_n=\Omega(b_n)$ if there exists an absolute constant $C>0$ such that $a_n\geq Cb_n$ holds for all $n\ge 1$. We use $\tilde O(\cdot)$ to further hide the polylogarithmic factors. We use $\ind\{\cdot\}$ to denote the indicator function. For $a,b \in \RR$ satisfying $a \leq b$, we use 
    $[x]_{[a,b]}$ to denote the truncation function $x\cdot \ind\{a \leq x \leq b\} + a\cdot \ind\{x<a\} + b\cdot \ind\{x>b\}$.

    \section{Variance-Aware Learning for Heteroscedastic Linear Bandits} \label{sec:bandits}
    
    In this section, we propose a computationally efficient variance-aware algorithm, dubbed \texttt{SAVE} (\textbf{S}uplin + \textbf{A}daptive \textbf{V}ariance-aware \textbf{E}xploration), for stochastic linear contextual bandits and present its theoretical guarantees. \texttt{SAVE} does not require the knowledge of the noise variance (or its upper bound), making it adaptable to varying levels of noise variance.
    
    
    \subsection{Problem Setup}
    
    We consider a heteroscedastic variant of the classic linear contextual bandit problem \citep{zhou2021nearly,zhang2021variance}. Let $K$ be the total  number of rounds. At each round $k\in[K]$, the interaction between the agent and the environment  is as follows: 
        (1) the environment generates an arbitrary decision set $\cD_k \subseteq \RR^d$ where each element represents a feasible action that can be selected by the agent; (2) the agent observes $\cD_k$ and selects $\ab_k \in \cD_k$; and (3) the environment generates the stochastic noise $\epsilon_k$ at round $k$ and reveal the stochastic reward $r_k = \la \btheta^*, \ab_k \ra + \epsilon_k$ to the agent. We assume that there exists a uniform bound $A > 0$ for the $\ell_2$ norm of the feasible actions, i.e., for all $k \in [K]$, $\ab \in \cD_k$, it holds that $\|\ab\|_2 \le A$. Following \citet{zhou2021nearly}, we assume the following condition on the random noise $\epsilon_k$ at each round $k$: 
    \begin{align} 
        \PP\left(|\epsilon_k| \le R\right) = 1,\quad \EE[\epsilon_k | \ab_{1:k}, \epsilon_{1:k - 1}] = 0,\quad\EE[\epsilon_k^2 | \ab_{1:k}, \epsilon_{1:k - 1}] = \sigma_k^2. \label{eq:noise:condition}
    \end{align}
    Without loss of generality, we assume that the size of $\cD_k$ is finite and is bounded by $|\cD|$ for all $k \in [K]$. If the size of $\cD_k$ is infinite, we can use standard covering argument to convert it to be finite. 
    
    The goal of the agent is to minimize the cumulative regret defined as follows: \begin{align} 
        \regret(K) = \sum_{k \in [K]} \big(\la \ab_k^*, \btheta^*\ra - \la \ab_k, \btheta^*\ra\big), \quad \text{where}\  \ab_k^* = \argmax_{\ab \in \cD_k} \la \ab, \btheta^* \ra. \label{eq:def:regret}
    \end{align}
    \subsection{Technical Challenges} \label{subsec:challenges}
    
    The key technical challenge we face is to provide a tight upper bound of $\la \ab_k^*, \btheta^*\ra - \la \ab_k, \btheta^*\ra$. The classical approach is to use the \emph{optimism-in-the-face-of-uncertainty} principle \citep{AbbasiYadkori2011ImprovedAF}, and construct a confidence set $\cC_k$ which includes $\btheta^*$ w.h.p., then upper bound $\la \ab_k^*, \btheta^*\ra$ with $\la \ab_k, \btheta_k\ra$, where $\btheta_k \in \cC_k$. Starting from here, there are two main approaches to bound $\la\ab_k, \btheta_k - \btheta^* \ra$ for heteroscedastic linear bandits.

    The first approach bounds $\la\ab_k, \btheta_k - \btheta^* \ra$ with $\|\ab_k\|_{\hat\bSigma_k^{-1}} \|\btheta_k - \btheta^*\|_{\hat\bSigma_k}$ by Cauchy-Schwarz inequality.  \citet{zhou2021nearly, zhou2022computationally} constructed $\cC_k$ as an ellipsoid centering at $\hat\btheta_k$, which is the solution to a \emph{weighted linear regression} problem over the past contexts $\ab_i$, and their weight is based on the upper bound of the variance of heteroscedastic noise $\sigma_k^2$. Then they bound $ \|\btheta_k - \btheta^*\|_{\hat\bSigma_k}$ by $\|\btheta_k - \hat\btheta_k\|_{\hat\bSigma_k}$ and $\|\btheta^* - \hat\btheta_k\|_{\hat\bSigma_k}$ separately,  each of which can be bounded properly by using the self-normalized concentration inequalities proposed in \citet{zhou2021nearly, zhou2022computationally}. However, as we have mentioned before, their approach is limited to the case where an upper bound of $\sigma_k^2$ is known.


    The second approach \citep{zhang2021variance, kim2021improved} follows the \emph{test-based} framework. Instead of constructing $\cC_k$ as an ellipsoid centering at a least square estimator $\hat{\btheta}_k$ for each round $k$, \citet{zhang2021variance, kim2021improved} constructed $\cC_k$ as the intersection of a series of sub-confidence sets denoted by different tests, where each test represents a constraint over a potential direction of the action $\ab_k$. The limitation of their approach is that, in order to have a uniform upper bound on $\la\ab_k, \btheta_k - \btheta^* \ra$, they have to cover all possible directions of $\ab_k$, which leads to an $\exp(d)$ number of test sets by the standard covering argument. This makes the computational time of their test-based algorithms depend on $d$ exponentially, which is computationally inefficient.



    \subsection{A New Freedman-Type Concentration Inequality for Vector-Valued Martingales} \label{subsec:bernstein}
    
    To tackle the above technical challenges posed by both weighted linear regression and test-based approach, we seek to develop a new Freedman-type concentration inequality for vector-valued self-normalized martingales with heteroscedastic noise (i.e., non-uniform variance). 
    \begin{theorem}\label{thm:bernstein1}
        Let $\{\cG_k\}_{k=1}^\infty$ be a filtration, and $\{\xb_k,\eta_k\}_{k\ge 1}$ be a stochastic process such that
        $\xb_k \in \RR^d$ is $\cG_k$-measurable and $\eta_k \in \RR$ is $\cG_{k+1}$-measurable.
        Let $L,\sigma,\lambda, \epsilon>0$, $\bmu^*\in \RR^d$. 
        For $k\ge 1$, 
        let $y_k = \la \bmu^*, \xb_k\ra + \eta_k$, where $\eta_k, \xb_k$ satisfy 
        \begin{align}
            \EE[\eta_k|\cG_k] = 0, \ |\eta_k| \leq R, \ \sum_{i = 1}^k \EE[\eta_i^2|\cG_i] \le v_k,  \ \ \text{for}\ \forall \ k\geq 1 \notag
        \end{align}
        For $k\ge 1$, let $\Zb_k = \lambda\Ib + \sum_{i=1}^{k} \xb_i\xb_i^\top$, $\bbb_k = \sum_{i=1}^{k}y_i\xb_i$, $\bmu_k = \Zb_k^{-1}\bbb_k$, and
        \begin{align}
            \beta_k &= 16\rho \sqrt{v_k \log(4k^2 / \delta)} + 6 \rho R  \log(4k^2 / \delta), \notag
        \end{align}
        where $\rho \ge \sup_{k \ge 1}\|\xb_k\|_{\Zb_{k - 1}^{-1}}$. 
    Then, for any $0 <\delta<1$, we have with probability at least $1-\delta$ that, 
        \begin{align}
            \forall k\geq 1,\ \big\|\textstyle{\sum}_{i=1}^{k} \xb_i \eta_i\big\|_{\Zb_k^{-1}} \leq \beta_k,  \|\bmu_k - \bmu^*\|_{\Zb_k} \leq \beta_k + \sqrt{\lambda}\|\bmu^*\|_2. \notag
        \end{align}
    \end{theorem}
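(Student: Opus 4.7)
The bound on $\|\bmu_k - \bmu^*\|_{\Zb_k}$ reduces immediately to the self-normalized bound. Starting from the identity
\begin{align*}
\bmu_k - \bmu^* = \Zb_k^{-1}\Big(\textstyle\sum_{i=1}^k \eta_i \xb_i - \lambda \bmu^*\Big),
\end{align*}
I would take the $\Zb_k$-norm on both sides, apply the triangle inequality, and bound the contribution of $\lambda\bmu^*$ using $\Zb_k^{-1}\preceq \lambda^{-1}\Ib$, which gives $\lambda\|\bmu^*\|_{\Zb_k^{-1}}\le \sqrt{\lambda}\|\bmu^*\|_2$. The heart of the proof is therefore establishing the self-normalized inequality $\|\sum_{i=1}^k \xb_i \eta_i\|_{\Zb_k^{-1}}\le \beta_k$.

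For this I plan to adapt the method-of-mixtures argument of \citet{AbbasiYadkori2011ImprovedAF}, combined with the Bernstein-type extension of \citet{zhou2021nearly}, so that it accommodates heteroscedastic bounded noise. For fixed $t>0$ and $\mathbf{z}\in\RR^d$, Bennett's MGF bound applied to the conditionally centered bounded variable $t\mathbf{z}^\top \xb_i \eta_i$ yields
\begin{align*}
\EE\bigl[\exp(t\mathbf{z}^\top \xb_i \eta_i)\mid \cG_i\bigr] \le \exp\Big(\tfrac{t^2}{2}\,\EE[\eta_i^2\mid \cG_i]\,(\mathbf{z}^\top \xb_i)^2\,\phi(tR|\mathbf{z}^\top \xb_i|)\Big),
\end{align*}
where $\phi(u)=2(e^u-1-u)/u^2$. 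Multiplying across $i$ yields a super-martingale indexed by $(t,\mathbf{z})$, which I would then integrate against a Gaussian prior on $\mathbf{z}$ of the form $\exp(-\tfrac{1}{2}\mathbf{z}^\top \Zb_k \mathbf{z})$ (realized rigorously via a stopping-time version of the mixture argument). The resulting tail bound on $\|\sum_{i=1}^k \xb_i\eta_i\|_{\Zb_k^{-1}}$ is dimension-free. The hypothesis $\|\xb_i\|_{\Zb_{i-1}^{-1}}\le\rho$ enters here to keep $tR|\mathbf{z}^\top \xb_i|$ inside the regime where $\phi\le 2$, so that the variance proxy $\sum_i \EE[\eta_i^2\mid \cG_i](\mathbf{z}^\top \xb_i)^2$ can be controlled by something of order $\rho^2 v_k\,\|\mathbf{z}\|_{\Zb_k}^2$; this is precisely the origin of the $\rho\sqrt{v_k}$ and $\rho R$ factors in $\beta_k$.

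Because $v_k$ varies with $k$, no single $t$ can be optimal simultaneously for every $k$, so the final step is a peeling argument: I would dyadically partition the values of $v_k$ (equivalently, the corresponding optimal $t$) into $O(\log k)$ layers, apply Ville's maximal inequality to the mixture super-martingale within each layer with a layer-appropriate $t_\ell$, and then union-bound over the layers and over the stopping times. This stitching produces the $\log(4k^2/\delta)$ factor in $\beta_k$ and yields a \emph{time-uniform} Freedman-type self-normalized inequality. The principal obstacle is juggling three demands simultaneously: dimension-freeness (which rules out a naive $\varepsilon$-net on directions and forces the method of mixtures), Freedman-type dependence on $v_k$ and $R$ (which requires Bennett's MGF and hence strict control of $tR|\mathbf{z}^\top \xb_i|$ via $\rho$), and time-uniformity (which requires the dyadic peeling combined with stopping-time arguments). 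Coordinating the prior covariance, the per-layer choice of $t_\ell$, and the stopping-time construction so that all three properties hold at once is where the bulk of the technical effort will lie.
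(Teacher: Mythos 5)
There is a genuine gap in your plan for the core self-normalized bound. The method of mixtures with a Gaussian prior cannot deliver the theorem's $\beta_k$ as stated, for two reasons. First, integrating the exponential supermartingale against a Gaussian prior unavoidably produces a normalization term of the form $\sqrt{\log\bigl(\det(\Zb_k)/\det(\lambda\Ib)\bigr)}\sim\sqrt{d\log k}$ (this is exactly the $\sqrt{d}$ that appears in \citet{AbbasiYadkori2011ImprovedAF} and in the Bernstein-type bound of \citet{zhou2021nearly}); the claimed bound $\beta_k = 16\rho\sqrt{v_k\log(4k^2/\delta)}+6\rho R\log(4k^2/\delta)$ has no such determinant factor, and your assertion that the mixture bound is ``dimension-free'' is precisely what does not hold. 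Second, your use of $\rho$ to keep $tR|\mathbf{z}^\top\xb_i|$ in the regime $\phi\le 2$ does not go through: $\rho\ge\|\xb_i\|_{\Zb_{i-1}^{-1}}$ only gives $|\mathbf{z}^\top\xb_i|\le\rho\|\mathbf{z}\|_{\Zb_{i-1}}$, and $\|\mathbf{z}\|_{\Zb_{i-1}}$ is unbounded on the support of any Gaussian prior, so the Bennett correction cannot be controlled pointwise under the mixture. (Relatedly, a prior with covariance built from $\Zb_k$ is data- and time-dependent, which breaks the supermartingale property unless handled with additional care.) These are not bookkeeping issues; they are the reasons Bernstein-type mixture bounds carry an extra $\sqrt{d}$, so your route would prove a strictly weaker statement than the theorem.

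The paper avoids the mixture machinery entirely and works directly with the scalar process $q_k=\|\sum_{i\le k}\xb_i\eta_i\|_{\Zb_k^{-1}}$: it expands $q_k^2\le q_{k-1}^2+I_{1,k}+I_{2,k}$, uses the Sherman--Morrison identity to write the cross term $I_{1,k}$ as a martingale difference whose increments and conditional variances are bounded by $R\beta_k\rho$ and $\beta_k^2\rho^2 v_k$ \emph{on the event that the bound held at earlier rounds} (encoded by an indicator), applies scalar Freedman's inequality to $\sum_i I_{1,i}$ and to $\sum_i I_{2,i}$, and closes the loop by induction on $k$. This bootstrap is where $\rho$ genuinely enters and is what makes the bound free of any $\log\det$ term. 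Your final step (triangle inequality plus $\lambda\|\bmu^*\|_{\Zb_k^{-1}}\le\sqrt{\lambda}\|\bmu^*\|_2$) matches the paper, but the heart of the argument needs the recursive indicator-plus-Freedman idea rather than a mixture bound.
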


    Theorem \ref{thm:bernstein1} can be viewed as an extension of Freedman's inequality \citep{Freedman1975OnTP} from scalar-valued martingales to vector-valued self-normalized martingales. 
    Though \citet{zhou2021nearly} proposed Bernstein-type concentration inequalities for vector-valued martingales (Theorem 4.1, \citealt{zhou2021nearly}), their inequality relies on a uniform upper bound on the variance of random variables, i.e., $\|\bmu_k - \bmu^*\|_{\Zb_k} \le \tilde{O}(\sigma\sqrt{d} + R)$, where $\sigma^2 \ge \sup_{k \ge 1} \EE[\eta_i^2|\cG_i]$. In contrast, the upper bound provided by Theorem \ref{thm:bernstein1} depends on the maximum of $\|\xb_k\|_{\Zb_{k-1}^{-1}}$, which is of the order $\tilde{O}(\sqrt{d/k})$ under certain conditions \citep{carpentier2020elliptical}. For these cases, our upper bound for $\|\bmu_k - \bmu^*\|_{\Zb_k}$ scales as $\tilde{O}(\sqrt{d \cdot v_k/k}  + R \cdot  \sqrt{d / k})$, 
    which is more fine-grained and strictly tighter than the previous upper bounds when $k \ge d$. 

    \subsection{The Proposed Algorithm} \label{subsec:algorithm1}
    
    Equipped with the new Freedman-type concentration inequality, we will design a new algorithm that is adaptive to the unknown variance of noise. 
    
    \begin{algorithm*}[t!] 
        \caption{SupLin + Adaptive Variance-aware Exploration (SAVE)} \label{alg:1}
        \begin{algorithmic}[1]
            \REQUIRE $\alpha > 0$, and the upper bound on the $\ell_2$-norm of $\ab$ in $\cD_k (k\ge 1)$, i.e., $A$. 
        \STATE Initialize $L \leftarrow \lceil \log_2 (1 / \alpha) \rceil$. 
        \STATE Initialize the estimators for all layers: $\hat\bSigma_{1, \ell} \leftarrow 2^{-2\ell} \cdot \Ib$, $\hat\bbb_{1, \ell} \leftarrow \zero$, $\hat\btheta_{1, \ell} \leftarrow \zero$, $\Psi_{k, \ell} \leftarrow \varnothing$, $\hat \beta_{1, \ell} \leftarrow 2^{-\ell + 1}$ for all $\ell \in [L]$. 
        \FOR{$k=1,\ldots, K$}
        \STATE Observe $\cD_k$. 
        \STATE Let $\cA_{k, 1} \leftarrow \cD_k$, $\ell \leftarrow 1$. \label{alg:1:line:5}
        \WHILE{$\ab_k$ is not specified}
        \IF{$\|\ab\|_{\hat\bSigma_{k, \ell}^{-1}} \le \alpha$ for all $\ab \in \cA_{k, \ell}$}
            \STATE Choose $\ab_k \leftarrow \argmax_{\ab \in \cA_{k, \ell}} \la \ab, \hat{\btheta}_{k, \ell}\ra + \hat\beta_{k, \ell} \|\ab\|_{\hat\bSigma_{k, \ell}^{-1}}$ and observe $r_k$. \label{line:selection} 
            \STATE Keep the same index sets at all layers: $\Psi_{k + 1, \ell'} \leftarrow \Psi_{k, \ell'}$ for all $\ell' \in [L]$. \label{alg1:line:10}
        \ELSIF{$\|\ab\|_{\hat\bSigma_{k, \ell}^{-1}} \le 2^{-\ell}$ for all $\ab \in \cA_{k, \ell}$} \label{alg1:line:11}
            \STATE $\cA_{k, \ell + 1} \leftarrow \big\{\ab \in \cA_{k, \ell} \big| \la \ab, \hat\btheta_{k, \ell} \ra \ge \max_{\ab' \in \cA_{k, \ell}} \la \ab', \hat\btheta_{k, \ell} \ra - 2 \cdot 2^{-\ell} \hat\beta_{k, \ell}\big\}$. \label{alg1:line:12}
        \ELSE \label{alg1:line:13}
        \STATE Choose $\ab_k$ such that $\|\ab_k\|_{\hat\bSigma_{k, \ell}^{-1}} > 2^{-\ell}$ and observe $r_k$. \label{alg1:line:14}
        \STATE Compute the weight: $w_k \leftarrow 2^{-\ell}/{\|\ab_k\|_{\hat\bSigma_{k, \ell}^{-1}}}$. \label{alg:1:line:15} 
        \STATE Update the index sets: $\Psi_{k + 1, \ell} \leftarrow \Psi_{k , \ell} \cup \{k\}$ and $\Psi_{k + 1, \ell'}\leftarrow \Psi_{k , \ell'}$ for $\ell' \in [L]\backslash \{\ell\}$. \label{alg1:line:19}
        \ENDIF
        \STATE $\ell \leftarrow \ell + 1$. 
        \ENDWHILE
         \STATE For $\ell\in[L]$ such that $\Psi_{k+1,\ell}\neq \Psi_{k,\ell}$, update the estimators as follows: \label{alg1:line:17} \begin{align} \hat\bSigma_{k + 1, \ell} \leftarrow \hat\bSigma_{k, \ell} + w_k^2 \ab_k \ab_k^{\top}, \hat{\bbb}_{k + 1, \ell} \leftarrow \hat\bbb_{k, \ell} + w_k^2 \cdot r_k \ab_k, \hat\btheta_{k + 1, \ell} \leftarrow \hat\bSigma_{k + 1, \ell}^{-1} \hat{\bbb}_{k + 1, \ell}. \notag \end{align}
        Compute the adaptive confidence radius $\hat\beta_{k+1, l}$for the next round according to \eqref{eq:def:beta}. \label{alg1:line:18}
         \STATE For $\ell\in[L]$ such that $\Psi_{k+1,\ell}=\Psi_{k,\ell}$, let $\hat\bSigma_{k + 1, \ell} \leftarrow \hat\bSigma_{k, \ell}, \hat\bbb_{k + 1, \ell} \leftarrow \hat\bbb_{k, \ell}, \hat\btheta_{k + 1, \ell} \leftarrow \hat\btheta_{k, \ell}, \hat\beta_{k + 1, \ell} \leftarrow \hat\beta_{k, \ell}.$
        \ENDFOR
        \end{algorithmic}
    \end{algorithm*}
    
    \subsubsection{SupLin with Adaptive Variance-Aware Exploration}
    
    As discussed in the last subsection, in order to exploit Theorem \ref{thm:bernstein1} effectively in the heteroscedastic linear bandits setting, we need the uncertainty/bonus term $\|\ab_k\|_{\hat\bSigma_{k-1}^{-1}}$ to be small, where $\hat\bSigma_{k-1}$ is the covariance matrix of $\ab_k$. However, such a term is in the order of $O(1)$ in the worst case. 
    Our algorithm 
    partition the observed contexts into different layers such that the uncertainty of the contexts within each layer is small. Our algorithm is displayed in Algorithm \ref{alg:1}, namely  \texttt{SAVE}. 
    \newline
    
    \noindent\textbf{Overall algorithm structure} In general, Algorithm \ref{alg:1} shares a similar multi-layer structure as \texttt{SupLinUCB} in \citet{chu2011contextual}. Algorithm \ref{alg:1} maintains $L$ context sets $\Psi_{k,\ell}, \ell \in [L]$ 
    at the $k$-th round. The goal of Algorithm \ref{alg:1} at the $k$-th round is to select $\ab_k$ which maximizes $\la \ab, \btheta^*\ra$. Since $\btheta^*$ is unknown, the selection process is based on $L$  number of estimates of $\btheta^*$, which we denote them by $\hat\btheta_{k,\ell}, \ell\in[L]$. $\hat\btheta_{k,\ell}$ is the solution to some regression problem over contexts in $\Psi_{k,\ell}$ and their corresponding rewards. Starting from $\ell=1$, the decision set $\cA_{k,\ell}$ will keep `shrinking' by eliminating all $\ab \in \cA_{k,\ell}$ which are unlikely to be the maximizer of $\la \ab, \btheta^*\ra$ (notably, since $\btheta^*$ is unknown, here $\btheta^*$ needs to be replaced by $\hat\btheta_{k,\ell}$, as displayed in Line \ref{alg1:line:12}). 
    The elimination process will not stop until some action $\ab$ with large uncertainty $\|\ab\|_{\hat\bSigma_{k,\ell}^{-1}}$ emerges. Then Algorithm \ref{alg:1} will either select the action $\ab$ with a large uncertainty (Line \ref{alg1:line:14} to Line \ref{alg1:line:19}), or the action which maximizes the upper confidence bound of estimated reward if there is no action with a large uncertainty (Line \ref{line:selection} to Line~\ref{alg1:line:10}). 
    The context set $\Psi_{k,\ell}$ will be updated by adding $k$ into it only when $\ab_k$ enjoys a large uncertainty.
    \newline
    
    \noindent\textbf{Construction of the estimate $\hat\btheta_{k,\ell}$} The first difference between our algorithm and \texttt{SupLinUCB} is the construction of the estimate $\hat\btheta_{k,\ell}$. Unlike the \emph{unweighted} ridge regression estimator applied in \texttt{SupLinUCB}, we employ a \emph{weighted ridge-regression} estimator as follows \begin{align*} 
        \forall k \in [K]\ \text{and}\ \ell\in[L], \quad \hat \btheta_{k, \ell} = \argmin_{\btheta \in \RR^d} \sum_{i \in \Psi_{k, \ell}} w_i^2\big(r_i - \la \btheta, \ab_i\ra\big)^2 + 2^{-2\ell} \lambda \|\btheta\|_2^2, 
    \end{align*}
    where the weight $w_i$ is chosen such that 
    for $\forall \ell \in [L]\ \text{and}\ i \in \Psi_{k, \ell}, \ \|w_i \ab_i\|_{\hat \bSigma_{i, \ell}^{-1}} = 2^{-\ell}. $
    We explain here why we want to adopt such a weighted regression scheme. In particular, the estimate $\hat \btheta_{k,\ell}$ can be regarded as $\bmu_k$ in Theorem \ref{thm:bernstein1}. By our construction of $w_i$, we can ensure that the context $\xb_i$ in Theorem \ref{thm:bernstein1}, which is $w_i\ab_i$ here, enjoys a uniform upper bound on the uncertainty, i.e., $\|\xb_i\|_{\Zb_i^{-1}} \leq 2^{-\ell}$. Such a result can further imply that $\|\hat\btheta_{k,\ell} - \btheta^*\|_{\hat\bSigma_{k,\ell}}$ is in the order of $O(2^{-\ell})$, which is tighter than the vanilla bound $O(1)$ deduced by previous works.



    \begin{remark} 
    It is worth noting that weighted ridge-regression technique has been used in heteroscedastic bandit setting \citep{kirschner2018information,zhou2021nearly,zhou2022computationally} for the known variance case. 
    The most related work to ours is \citet{zhou2021nearly}, which applies the following weighted ridge-regression estimator 
        $\btheta_{k} = \argmin_{\btheta \in \RR^d} \sum_{i = 1}^{k - 1} \frac{1}{\sigma_i^2}\big(r_i - \la \btheta, \ab_i\ra\big)^2 + \lambda \|\btheta\|_2^2,$
    where the $1 / \sigma_i^2$ weight is introduced to normalize the variance of noise. Our weight $w_i$, in contrast, is set to reweight the feature vectors such that they have the same elliptical norm $\|\ab_i\|_{\hat\bSigma_{i, \ell}^{-1}}$.  
    Weighted ridge-regression technique has also been applied to other bandit settings such as linear multi-resource allocation \citep{lattimore2015linear} and corruption-robust linear bandits \citep{he2022nearly1}. In particular, \citet{he2022nearly1} adopts a similar weight $w_i = O\big(1 / \|\ab_i\|_{\hat\bSigma_i^{-1}}^{1 / 2}\big)$ to balance the effect of adversarial corruption and stochastic noise. Nevertheless, the specific bandit problems they are solving are quite different from ours. 
    \end{remark}
    \noindent\textbf{Adaptive variance-aware exploration}
    According to previous discussion, we can bound the estimation error of $\hat\btheta_{k,\ell}$ following Theorem \ref{thm:bernstein1}, 
    which leads to a confidence bound of $\btheta^*$, i.e., $\{\btheta: \|\btheta - \hat\btheta_{k,\ell}\|_{\hat\bSigma_{k,\ell}} \le \tilde O(2^{-\ell}\cdot \sqrt{\sum_{i \in \Psi_{k, \ell}} w_i^2\sigma_i^2} + 2^{-\ell}R)\}$. 
    This can be used in the arm selection step (Line \ref{line:selection} and Line \ref{alg1:line:12}). 
    However, such a confidence set requires the knowledge of variances $\sigma_i^2$ apriori. To address this issue, we need to replace $\sigma_i^2$ with their \emph{empirical estimator}. In detail, since $\sigma_i^2 = \EE[(r_i - \la \btheta^*, \ab_i\ra)^2|\ab_{1:i-1}, \epsilon_{1:i-1}]$, 
    we simply use an \emph{one-point plug-in estimator} $(r_i - \la \hat\btheta_{k,\ell}, \ab_i\ra)^2$. 
    With such an estimator, we define the confidence radius $\beta_{k+1,\ell}$ at round $k+1$ and layer 
    $\ell$ as
    \begin{align} 
        \beta_{k + 1, \ell} &:= 16 \cdot 2^{-\ell} \sqrt{\left(8\hat\Var_{k + 1, \ell}  + 6R^2 \log(4(k + 1)^2 L / \delta) + 2^{-2\ell + 4}\right)\log(4k^2 L / \delta)} \notag\\&\quad + 6 \cdot 2^{-\ell} R \log(4k^2 L / \delta) + 2^{-\ell + 1} \label{eq:def:beta}, 
    \end{align}
    where 
    \begin{align*} \hat\Var_{k + 1, \ell} := \begin{cases}\sum_{i \in \Psi_{k + 1, \ell}} w_i^2 \big(r_i - \la \hat\btheta_{k + 1, \ell}, \ab_i \ra \big)^2, & 2^\ell \ge 64 \sqrt{\log\left(4(k + 1)^2 L /\delta\right)} \\
    R^2 \left|\Psi_{k + 1, \ell}\right|,  & \text{otherwise}.
    \end{cases} \end{align*}
    
    We would like to emphasize that although our used one-point estimator $(r_i - \la \hat\btheta_{k,l}, \ab_i\ra)^2$ might be an inaccurate estimator of the target $\sigma_i^2$ for some round $i$, the \emph{weighted summation} of the one-point estimators $\sum w_i^2(r_i - \la \btheta_{k,l}, \ab_i\ra)^2$ actually serves as a sufficiently accurate estimator of the total variance $\sum w_i^2\sigma_i^2$. That is because our employed weight can effectively `calibrate' the term $(r_i - \la \btheta_{k,l}, \ab_i\ra)^2$ and reduce its error, leading to an accurate estimate when these terms are summed together.

    
    \subsubsection{Computational Complexity} \label{sec:cc}
    
    At each round $k \in [K]$, the learner executes the arm elimination step (Line \ref{alg1:line:12} in Algorithm \ref{alg:1}) 
    for $O(L)$ times, and then applies Sherman-Morrison formula \citep{golub2013matrix} and matrix multiplication to update the estimator in $O(d^2)$ time (Line \ref{alg1:line:17}). 
    Note that we need to compute the confidence radius at each round in Line \ref{alg1:line:18}, which will take $O(k)$ time if we compute it directly. However, we can compute $\hat\Var_{k + 1, \ell}$ by 
    $\hat\Var_{k + 1, \ell} = \sum_{i \in \Psi_{k + 1, \ell}} w_i^2r_i^2 - 2 \hat\btheta_{k + 1, \ell}^\top \cdot  \sum_{i \in \Psi_{k + 1, \ell}} w_i r_i \ab_i + \hat\btheta_{k + 1, \ell}^\top \bigg(\sum_{i \in \Psi_{k + 1, \ell}} w_i^2 \ab_i \ab_i^\top \bigg)\hat\btheta_{k + 1, \ell}, $
    where the first term can be computed in $O(1)$ time at each round, the second term can be computed in $O(d)$ time by maintaining the prefix sum of $w_ir_i \cdot \ab_i$ and the third term can be computed in $O(d^2)$ time by maintaining the value of the weighted covariance matrix. By adding these steps together, we can conclude that the time complexity of Algorithm \ref{alg:1} is $O(K|\cD|Ld^2)$. 
    
    \subsection{Regret Bounds}
    We provide the regret guarantee of Algorithm \ref{alg:1} in the following theorem.
    
    \begin{theorem} \label{thm:regret1}
        Suppose that for all $k \ge 1$ and all $\ab \in \cD_k, \|\ab\|_2 \le A, \|\btheta^*\|_2 \le 1, \ \la \ab, \btheta^* \ra \in [-1, 1]$. If $\{\beta_{k, \ell}\}_{k\ge 1, \ell \in [L]}$ is defined in \eqref{eq:def:beta} and $\alpha = 1 / (R \cdot K^{3 / 2})$, then the cumulative regret of Algorithm~\ref{alg:1} is bounded as follows with probability at least $1 - 3\delta$: 
        \begin{align*} 
            \regret(K) = \tilde{O}\bigg(d \sqrt{\sum_{k = 1}^K \sigma_k^2} + d R  + d\bigg). 
        \end{align*}
    \end{theorem}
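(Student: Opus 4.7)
First, I would instantiate Theorem~\ref{thm:bernstein1} separately for each layer $\ell \in [L]$, taking $\xb_i \leftarrow w_i \ab_i$ and $\eta_i \leftarrow w_i \epsilon_i$ (restricted to $i \in \Psi_{k,\ell}$), with regularization $\lambda_\ell = 2^{-2\ell}$. Line~\ref{alg:1:line:15} enforces $\|w_i \ab_i\|_{\hat\bSigma_{i,\ell}^{-1}} = 2^{-\ell}$, which is exactly the uniform bound $\rho = 2^{-\ell}$ required by the theorem; boundedness and conditional variance of $\eta_i$ are inherited from \eqref{eq:noise:condition} using $w_i \le 1$. This yields, with high probability,
\begin{align*}
\|\hat\btheta_{k,\ell} - \btheta^*\|_{\hat\bSigma_{k,\ell}} \le \tilde O\Big(2^{-\ell} \sqrt{\textstyle\sum_{i \in \Psi_{k,\ell}} w_i^2 \sigma_i^2} + 2^{-\ell} R\Big) + 2^{-\ell},
\end{align*}
which matches the form of $\hat\beta_{k,\ell}$ in \eqref{eq:def:beta} once the true variance sum is replaced by its plug-in estimator $\hat\Var_{k,\ell}$.

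Second, I would close the loop by showing that $\hat\Var_{k,\ell}$ is a valid surrogate for $\sum_{i \in \Psi_{k,\ell}} w_i^2 \sigma_i^2$. Expanding $(r_i - \la\hat\btheta_{k,\ell},\ab_i\ra)^2 = \epsilon_i^2 + 2\epsilon_i\la\btheta^* - \hat\btheta_{k,\ell},\ab_i\ra + \la\btheta^* - \hat\btheta_{k,\ell},\ab_i\ra^2$ and summing with weights $w_i^2$, the leading term concentrates around $\sum w_i^2 \sigma_i^2$ via a scalar Freedman argument on the martingale $w_i^2(\epsilon_i^2 - \sigma_i^2)$; the quadratic remainder equals $\|\hat\btheta_{k,\ell}-\btheta^*\|_{\hat\bSigma_{k,\ell}}^2$ up to the ridge term and is thus bounded by $\hat\beta_{k,\ell}^2$; the cross term is handled by Cauchy-Schwarz. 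The piecewise definition of $\hat\Var_{k,\ell}$, which defaults to $R^2|\Psi_{k,\ell}|$ when $2^\ell < 64\sqrt{\log(\cdots)}$, handles the coarse layers where the plug-in is too noisy to be trusted. This is the main obstacle, because $\hat\beta_{k,\ell}$ and $\hat\Var_{k,\ell}$ are defined in a mutually recursive way; the bound must be solved self-consistently via a quadratic-in-$\hat\beta$ inequality, together with a careful union bound over layers, rounds, and the auxiliary scalar Freedman step, to ensure the total failure probability stays within $3\delta$.

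Third, I would combine the confidence inclusion with the elimination rule in Line~\ref{alg1:line:12} to prove by induction on $\ell$ that $\ab_k^* \in \cA_{k,\ell}$ for every $\ell \in [L]$ and every round $k$. The inductive step is standard: with $\btheta^*$ in the confidence set, $\la \ab, \hat\btheta_{k,\ell-1}\ra$ and $\la \ab, \btheta^*\ra$ differ by at most $2^{-(\ell-1)} \hat\beta_{k,\ell-1}$ uniformly over $\cA_{k,\ell-1}$, so the elimination threshold $2 \cdot 2^{-(\ell-1)} \hat\beta_{k,\ell-1}$ cannot discard $\ab_k^*$. Given this, the instantaneous regret at round $k$ decomposes by the layer $\ell_k$ at which $\ab_k$ is selected. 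If selected at Line~\ref{line:selection}, then $\|\ab_k\|_{\hat\bSigma_{k,\ell_k}^{-1}} \le \alpha$ and a UCB argument gives regret $O(\alpha\,\hat\beta_{k,\ell_k})$, which totals $\tilde O(1)$ across $K$ rounds by the choice $\alpha = 1/(R K^{3/2})$. If selected at Line~\ref{alg1:line:14}, then $k$ is added to $\Psi_{k+1,\ell_k}$, and since $\ab_k, \ab_k^* \in \cA_{k,\ell_k} \subseteq \cA_{k,\ell_k-1}$ both have elliptical norm at most $2^{-(\ell_k-1)}$ in $\hat\bSigma_{k,\ell_k-1}^{-1}$, the per-round regret is $O(2^{-\ell_k}\,\hat\beta_{k,\ell_k-1})$.

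Finally, I would apply the elliptical potential lemma to $\{w_i \ab_i\}_{i \in \Psi_{K+1,\ell}}$: since each such vector has elliptical norm exactly $2^{-\ell}$, the lemma yields $|\Psi_{K+1,\ell}| \cdot 4^{-\ell} = \tilde O(d)$, hence $|\Psi_{K+1,\ell}| = \tilde O(d \cdot 4^{\ell})$. Summing the per-round regret $O(2^{-\ell_k}\,\hat\beta_{k,\ell_k-1})$ over $k \in \Psi_{K+1,\ell}$, substituting the variance-aware form of $\hat\beta$ from Step~2, and invoking Cauchy-Schwarz over $\ell \in [L]$---using that the $\Psi_{K+1,\ell}$'s are disjoint so that $\sum_\ell \sum_{i \in \Psi_{K+1,\ell}} w_i^2 \sigma_i^2 \le \sum_{k=1}^K \sigma_k^2$ (as $w_i \le 1$)---yields the claimed $\tilde O(d \sqrt{\sum_k \sigma_k^2} + dR + d)$ bound.
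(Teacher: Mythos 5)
Your proposal follows essentially the same route as the paper's proof: instantiate Theorem \ref{thm:bernstein1} layer by layer with the reweighted contexts $w_i\ab_i$ (this is Lemma \ref{lemma:confidence}); compare $\sum_{i\in\Psi_{k,\ell}}w_i^2\sigma_i^2$ with the plug-in quantity $\hat\Var_{k,\ell}$ in both directions (Lemmas \ref{lemma:var} and \ref{lemma:varest}); show by induction that $\ab_k^*$ survives elimination (Lemma \ref{lemma:confset}); bound the regret of each layer by $\tilde O(d\,2^{\ell}\hat\beta_{K,\ell-1})$ using $|\Psi_{K+1,\ell}|=4^{\ell}\sum_{\tau\in\Psi_{K+1,\ell}}\|w_\tau\ab_\tau\|_{\hat\bSigma_{\tau,\ell}^{-1}}^2=\tilde O(d\,4^{\ell})$ (Lemma \ref{lemma:regret-layer}); and dispose of the rounds selected in Line \ref{line:selection} with the choice $\alpha=1/(RK^{3/2})$, exactly as in \eqref{eq:regret1:1}. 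One remark on your Step 2: the mutual recursion you flag does not really arise, because the confidence bound produced by Theorem \ref{thm:bernstein1} is stated in terms of the \emph{true} weighted variances; the paper bounds the quadratic remainder by that true-variance radius and then absorbs the resulting $\tfrac18\sum w_i^2\sigma_i^2$ term (the same quadratic self-consistency step you describe), while the reverse direction is obtained more simply from the fact that $\hat\btheta_{k,\ell}$ minimizes the weighted ridge objective.

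The genuine gap is in your Step 4 accounting for the coarse layers, i.e.\ layers with $2^{\ell}<64\sqrt{\log(4(k+1)^2L/\delta)}$ (and also $\ell_k=1$, where there is no layer $\ell_k-1$ to appeal to in your per-round bound). You substitute "the variance-aware form of $\hat\beta$" uniformly over all layers, but on a coarse layer the radius is driven by the default $\hat\Var_{k,\ell}=R^2|\Psi_{k,\ell}|$; since $|\Psi_{K,\ell}|=\tilde O(d\,4^{\ell})$, this gives $\hat\beta_{K,\ell-1}=\tilde O(R\sqrt{d})$, so the bonus-based layer contribution is $\tilde O(d\,2^{\ell}\hat\beta_{K,\ell-1})=\tilde O(d^{3/2}R)$, which exceeds the claimed $\tilde O(dR+d)$. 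The paper avoids this by treating all layers $\ell\le\ell^*$ (with $2^{\ell^*}$ only polylogarithmic) via the trivial per-round bound $\la\ab_\tau^*-\ab_\tau,\btheta^*\ra\le 2$, so that their total regret is $2|\Psi_{K+1,\ell}|=2\cdot4^{\ell}\sum_{\tau\in\Psi_{K+1,\ell}}\|w_\tau\ab_\tau\|_{\hat\bSigma_{\tau,\ell}^{-1}}^2=\tilde O(d)$; see \eqref{eq:regret1:2}. Without this separate treatment (or an equivalent device), your argument as written yields $\tilde O\big(d\sqrt{\sum_{k}\sigma_k^2}+d^{3/2}R+d\big)$ rather than the stated bound, so you should add the coarse-layer case explicitly; the rest of your plan then goes through as in the paper.
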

    \begin{remark}
    If we treat $R$ as a constant, the  regret can be simplified as  $ \tilde{O}\Big(d \sqrt{\sum_{k = 1}^K \sigma_k^2}   + d\Big)$. Compared with Weighted OFUL+ \citep{zhou2022computationally}, our algorithm achieves the same order of regret guarantee and does not require any prior knowledge about the variance $\sigma_k$. Compared with VOFUL2 \citep{kim2021improved}, our \texttt{SAVE} algorithm improves the regret from $\tilde{O}\Big(d^{1.5} \sqrt{\sum_{k = 1}^K \sigma_k^2}   + d^2\Big)$ to $ \tilde{O}\Big(d \sqrt{\sum_{k = 1}^K \sigma_k^2}   + d\Big)$. Furthermore, \texttt{VOFUL2} needs to perform the arm elimination for each possible direction $\bmu$ in the $d$-dimension unit ball, which requires an exponential computational time (See the discussion in Section \ref{subsec:challenges}).
    \end{remark}
    \begin{remark}
        Consider the deterministic reward setting where $\sigma_k=0$ holds for all round $k\in[K]$. If we treat $R$ as a constant, then Theorem \ref{thm:regret1} suggests an $\tilde O(d)$ regret guarantee, which matches the $\Omega(d)$ lower bound up to logarithmic factors \citep{chu2011contextual}. 
    \end{remark}

    \section{Variance-Aware Learning for Linear Mixture MDPs}
    \label{sec-4}
    In this section, we apply the techniques developed in Section \ref{sec:bandits} to reinforcement learning, and propose a variance-aware algorithm for linear mixture MDPs. 
    \begin{algorithm}[!ht] 
        \caption{UCRL-AVE} \label{alg:2}
        \begin{algorithmic}[1]
        \REQUIRE Regularization parameter $\lambda > 0$, $\alpha > 0$, $B$, an upper bound on the $\ell_2$-norm of $\btheta^*$. 
        \STATE Set $L = \lceil \log_2(1 / \alpha) \rceil$. 
        \STATE Initialize: $\hat\bSigma_{0, H + 1, \ell} \leftarrow 2^{-2\ell} \lambda \cdot \Ib$, $\hat\bbb_{0, H + 1, \ell} \leftarrow \zero$, $\hat\btheta_{1, \ell} \leftarrow \zero$ for all $\ell \in [L]$. 
        \FOR{$k=1,\ldots, K$}
            \STATE $V_{k, H + 1}(\cdot) \leftarrow 0$. 
            \STATE Update the current estimators: $\hat\bSigma_{k, 1, \ell} \leftarrow \hat\bSigma_{k - 1, H + 1, \ell}$, $\hat\bbb_{k, 1, \ell} \leftarrow \hat\bbb_{k - 1, H + 1, \ell}$, $\hat \btheta_{k, \ell} \leftarrow \hat\bSigma_{k, 1, \ell}^{-1} \hat\bbb_{k, 1, \ell}$ for all $\ell \in [L]$. 
            \STATE Compute $\hat\beta_{k, \ell}$ according to \eqref{eq:def:beta:mdp}.  
            \FOR{$h = H, \ldots, 1$}
            \STATE \begin{small}$Q_{k, h}(\cdot, \cdot) \leftarrow \min\Big\{1, \min_{\ell \in [L]}\Big[r(\cdot, \cdot) + \big\la\hat \btheta_{k, \ell}, \bphi_{V_{k, h + 1}}(\cdot, \cdot)\big\ra + \hat \beta_{k, \ell} \left\|\bphi_{V_{k, h + 1}}(\cdot, \cdot)\right\|_{\hat \bSigma_{k, 1, \ell}^{-1}}\Big]\Big\}$. \end{small} \label{alg2:line:7}
            \STATE $\pi_k(\cdot, h) \leftarrow \argmax_{a \in \cA} Q_{k, h}(\cdot, a)$,\quad  $V_{k, h}(\cdot) \leftarrow \max_{a \in \cA} Q_{k, h}(\cdot, a)$. 
            \ENDFOR
        \STATE Observe $s_1^k$. 
        \FOR{$h = 1, \ldots H$}
            \STATE Take action $a_h^k \leftarrow \pi_k(s_h^k, h)$ and observe $s_{h + 1}^k$. \label{alg2:line:13}
            \STATE $\cL_{k, h} \leftarrow \Big\{\ell \in [L] \Big| \left\|\bphi_{V_{k, h + 1}}(s_h^k, a_h^k)\right\|_{\bSigma_{k, h, \ell}^{-1}} \ge 2^{-\ell}\Big\}$. \label{alg2:line:14}
            \STATE Set $\ell_{k, h} \leftarrow \begin{cases}
                L + 1, & \cL_{k, h}  = \varnothing \\
                \min\left(\cL_{k, h}\right), & \text{otherwise}
            \end{cases} $. \label{alg2:line:15}
            \IF{$\ell_{k, h} \neq L + 1$}
                \STATE $w_{k, h} \leftarrow 2^{-\ell_{k, h}}/{\left\|\bphi_{V_{k, h + 1}}(s_h^k, a_h^k)\right\|_{\bSigma_{k, h, \ell_{k, h}}^{-1}}}$. \label{alg2:line:17}
                \STATE $\hat \bSigma_{k, h + 1, \ell_{k, h}} \leftarrow \hat \bSigma_{k, h, \ell_{k, h}} + w_{k, h}^2 \bphi_{V_{k, h + 1}}(s_h^k, a_h^k)\bphi_{V_{k, h + 1}}(s_h^k, a_h^k)^\top$. 
                \STATE $\hat \bbb_{k, h + 1, \ell_{k, h}} \leftarrow \hat \bbb_{k, h, \ell_{k, h}} + w_{k, h}^2 V_{k, h + 1}(s_{h + 1}^k)\bphi_{V_{k, h + 1}}(s_h^k, a_h^k)$. 
            \ENDIF
            \STATE $\hat \bSigma_{k, h + 1, \ell} \leftarrow \hat \bSigma_{k, h, \ell}$, $\hat \bbb_{k, h + 1, \ell} \leftarrow \hat \bbb_{k, h, \ell}$ for all $\ell \in [L]$ and $\ell \neq \ell_{k, h}$. 
        \ENDFOR
        
        \ENDFOR
        \end{algorithmic}
    \end{algorithm}
    
    \subsection{Problem Setup}
    
    \noindent\textbf{Episodic MDPs.} A time-homogenous episodic MDP \citep{puterman2014Markov} is denoted by a tuple $M = M(\cS, \cA, H, r, \PP)$. Here, $\cS$ is the state space, $\cA$ is a finite action space,  $H$ is the planning horizon (i.e., length of each episode), $r: \cS \times \cA \rightarrow [0,1]$ is a deterministic reward function, $\PP(s'|s,a)$ is the transition probability function denoting the probability of transition from state $s$ to state $s'$ under action $a$. A policy $\pi: \cS \times [H] \rightarrow \cA$ is a function which maps a state $s$ and the stage number $h$ to an action $a$. 
    For any policy $\pi$ and stage $h\in [H]$, we define the following action-value function $\qvalue_h^{\pi}(s,a)$ and value function $\vvalue_h^{\pi}(s)$ as follows
    \begin{align}
    \qvalue^{\pi}_h(s,a) =r(s,a) + \EE\bigg[\sum_{h'=h+1}^H r\big(s_{h'}, \pi(s_{h'},h')\big)\bigg| s_h=s,a_h=a\bigg], \quad
    \vvalue_h^{\pi}(s) = \qvalue_h^{\pi}(s, \pi(s,h)),\notag
    \end{align}
    where $s_{h'+1}\sim \PP(\cdot|s_{h'},a_{h'})$.
    We further define the optimal value function $V_h^*$ and the optimal action-value function $\qvalue_h^*$ as $V_h^*(s) = \max_{\pi}\vvalue_h^{\pi}(s)$ and $\qvalue_h^*(s,a) = \max_{\pi}\qvalue_h^{\pi}(s,a)$. In addition, for any function $\vvalue: \cS \rightarrow \RR$, we denote $[\PP \vvalue](s,a)=\EE_{s' \sim \PP(\cdot|s,a)}\vvalue(s')$. Therefore, for each stage $h\in[H]$ and policy $\pi$, we have the following Bellman equation, as well as the Bellman optimality equation:
    \begin{align}
        \qvalue_h^{\pi}(s,a) = \reward(s,a) + [\PP\vvalue_{h+1}^{\pi}](s,a), 
    \quad \qvalue^{*}(s,a) = \reward(s,a) + [\PP\vvalue_{h+1}^{*}](s,a), \notag
    \end{align}
    where $\vvalue^{\pi}_{H+1}(\cdot)=\vvalue^{*}_{H+1}(\cdot)=0$. At the beginning of episode $k$, the agent chooses a policy $\pi$ to guide its actions throughout the episode. At each stage $h\in[H]$, the agent observes the state $s_h^k$, chooses an action by the policy $\pi$ and observes the next state with $s_{h+1}^k \sim \PP(\cdot|s_h^k,a_h^k)$. 
    
    Following previous work on horizon-free regret in linear mixture MDPs \citep{zhang2021variance, kim2021improved, zhou2022computationally}, we consider the setting where the total reward (i.e., return of an episode) is bounded by $1$. 
    \begin{assumption} 
        For any policy $\pi$, let $(s_h, a_h)_{h=1}^H$ be one trajectory following $\pi$, then $$\sum_{h \in [H]} r(s_h, a_h) \le 1$$ almost surely. 
    \end{assumption}
    For simplicity, let $\left[\VV V\right](s, a) = \left[\PP V^2\right](s, a) - \left(\left[\PP V\right](s, a)\right)^2$ denote the conditional variance of $V$ conditioned on $(s,a)$. We define the following instance-dependent quantity: 
    \begin{align} 
    \Var_K^* = \sum_{k = 1}^K \sum_{h = 1}^H [\VV V_{h + 1}^*](s_h^k, a_h^k). \label{def:newv}
    \end{align}
    The quantity \eqref{def:newv} characterizes the stochasticity of the MDP under the optimal policy. For a deterministic MDP where the transition function is deterministic , we have $\Var_K^* = 0$. Similar quantities have been considered in \citet{maillard2014hard, zanette2019tighter}, and the same quantity has been proposed by a concurrent work \citep{zhou2023sharp} on tabular RL.


    \noindent\textbf{Linear Mixture MDPs.}
    We consider a special MDP class called \emph{linear mixture MDPs}. 
    \begin{definition}[Episodic linear mixture MDPs, \citealt{jia2020model, ayoub2020model}]\label{assumption-linear}
        An episodic MDP $\cM(\cS, \cA, H, \reward, \PP)$ is a homogeneous, episodic $B$-bounded linear mixture MDP if there exists vectors $\btheta^* \in \RR^d$ with $\|\btheta^*\|_2 \le B$ and $\bphi(\cdot | \cdot, \cdot)$ satisfying \eqref{eq:feature-bound}, such that for each $(s,a) \in \cS \times \cA$, $s' \in \cS$ and stage $h \in [H]$, $\PP(s'|s,a) = \big\la \bphi(s'| s,a), \btheta^*\big\ra$. Moreover, $\bphi$ satisfies that for any bounded function $V: \cS \to [0, 1]$ and any tuple $(s, a) \in \cS \times \cA$, 
    \begin{align} 
        \|\bphi_V(s, a)\|_2 \le 1, \text{where}\ \bphi_V(s, a) := \sum_{s' \in \cS} \bphi(s'|s, a) V(s'). \label{eq:feature-bound}
    \end{align}
    \end{definition}
    The goal of the agent is to minimize the following cumulative regret at the first $K$ rounds: \begin{align} 
        \regret(K) = \sum_{k \in [K]}\big[V_1^*(s_1^k) - V_1^{\pi^k}(s_1^k)\big]. \notag
    \end{align}
    \subsection{The Proposed Algorithm}
    
    We present an adaptive variance-aware algorithm named UCRL with \textbf{A}daptive \textbf{V}ariance-Aware \textbf{E}xploration (\texttt{UCRL-AVE}) in Algorithm \ref{alg:2}. The backbone of our algorithm is the \emph{value-targeted-regression} scheme proposed by \texttt{UCRL-VTR} \citep{jia2020model, ayoub2020model}. In detail, Algorithm~\ref{alg:2} aims to estimate the optimal value function $Q^*_h$ by $Q_{k,h}$, utilizing the Bellman optimal equation. Since $\PP V_{k,h+1}$ is not attractable ($\PP$ is unknown), Algorithm \ref{alg:2} uses the fact that $\PP V_{k,h+1}(s,a) = \la \bphi_{V_{k,h+1}}(s,a), \btheta^*\ra$ is a linear function of the feature $\bphi_{V_{k,h+1}}(s,a)$, and estimates $\PP V_{k,h+1}(s,a)$ by a plug-in estimator $\la \bphi_{V_{k,h+1}}(s,a), \hat\btheta_{k}\ra$, where $\hat\btheta_{k}$ is the estimate of $\btheta^*$. Then \texttt{UCRL-VTR} computes $Q_{k,h}$ by the \emph{upper confidence bound} of the empirical estimator with truncation (Line \ref{alg2:line:7}).
    
    The main difference between Algorithm \ref{alg:2} and \texttt{UCRL-VTR} is the construction of $\hat\btheta_k$: instead of using a single estimate, Algorithm \ref{alg:2} maintains $L$ estimates $\hat\btheta_{k,\ell}$, constructed on a multi-layer structure of feature vectors. We highlight several important technical innovations here.


    \paragraph*{Multi-layer structure of feature vectors.} We first demonstrate how Algorithm \ref{alg:2} utilizes $L$ number of estimates $\hat\btheta_{k,\ell}$ to build the value function estimate $Q_{k,h}$, then we show how Algorithm \ref{alg:2} updates $\hat\btheta_{k,\ell}$ accordingly. Algorithm \ref{alg:2} constructs $Q_{k,h}$ as the minimum of $L$  optimistic estimates computed by $\la \hat\btheta_{k,\ell}, \bphi_{V_{k,h+1}}\ra$. The minimum step makes the estimate $Q_{k,h}$ tighter than that in \texttt{UCRL-VTR}.

    Similar to Algorithm \ref{alg:1}, $\hat\btheta_{k,l}$ is the solution to some regression problem over the features $\phi_{V}(s,a)$ and their corresponding target values. For simplicity, we define the following subsets of $[K] \times [H]$: \begin{align} 
        \Psi_{k, \ell} = \left\{(i, h) \in [k - 1] \times [H] | \ell_{i, h} = \ell\right\}, \quad \ \text{for} \ k \in [K + 1], \ell \in [L + 1], \label{eq:def:index} 
    \end{align}which represents the indices of feature vectors in layer $\ell$ at the beginning of round $k$. Note that $\hat\btheta_{k,\ell}$ will be updated if the feature $\phi_{V_{h + 1}^k}(s_h^k, a_h^k)$ is added to the feature set $\Psi_{k,\ell}$. The rule that whether to add such a feature or not is based on the uncertainty of $\phi_{V_{h + 1}^k}(s_h^k, a_h^k)$ within the feature set $\Psi_{k,\ell}$, which is similar to the multi-layer structure adopted by \citet{he2021uniform} for uniform-PAC bounds in linear MDPs. Finally, $\hat\btheta_{k,\ell}$ is computed as the solution to the weighted regression problem over the feature set $\Psi_{k,\ell}$, where the weight $w_{k,h}$ is selected to guarantee that $\left\|w_{k, h} \bphi_{V_{k, h + 1}}(s_h^k, a_h^k)\right\|_{\bSigma_{k, h, \ell_{k, h}}^{-1}} = 2^{-\ell_{k, h}}$, similar to that in Algorithm \ref{alg:1}.




    
    \paragraph*{Adaptive variance-aware exploration. } Similar to Algorithm \ref{alg:1}, we will also face the problem to construct a confidence set of $\btheta^*$ \emph{without} knowing the variance of value functions $V_{k,h+1}$. Here we take the same approach: to replace the variance of $V_{k, h+1}$, $\PP[V_{k,h+1} - \PP V_{k,h+1}]^2$ with its one-point empirical estimate $(V_{i,h+1}(s_{h+1}^i) - \la \hat\btheta_{k,l}, \bphi_{V_{i,h+1}}(s_h^i, a_h^i)\ra)^2$. In detail, the confidence radius is: 
    \begin{small}
    \begin{align} 
    \hat \beta_{k, \ell} &:= 16 \cdot 2^{-\ell} \sqrt{\big(8\hat\Var_{k, \ell} + 8 \log(4k^2H^2 L /\delta) + 2^{-2\ell + 5} \cdot \lambda B^2 \big)\log(4k^2H^2 L / \delta)}\notag \\&\quad + 6 \cdot 2^{-\ell} \log(4k^2H^2 L /\delta) + 2^{-\ell} \sqrt{\lambda} \cdot B, \label{eq:def:beta:mdp}
    \end{align}\end{small}
    where 
    \begin{small}
    \begin{align*}\hat\Var_{k, \ell} = \begin{cases} 8 \sum_{(i, h) \in \Psi_{k, \ell}} w_{i, h}^2 \big(V_{i, h + 1}(s_{h + 1}^i) - \la \hat \btheta_{k, \ell}, \bphi_{V_{i, h + 1}}(s_h^i, a_h^i)\ra\big)^2, & 2^\ell \ge 64\sqrt{\log(4k^2H^2 L / \delta)}, \\
    |\Psi_{k, \ell}|, & \text{otherwise}.
    \end{cases}\end{align*}\end{small}
    
    We can adopt the method discussed in Subsection \ref{sec:cc} to compute $\hat\Var_{k, \ell}$ in an efficient way. We call the construction of the confidence set along with its radius $\hat\beta_{k,\ell}$ as \emph{adaptive variance-aware exploration}. 
    
    Compared with \texttt{UCRL-VTR+} \citep{zhou2021nearly} and \texttt{HF-UCRL-VTR+} \citep{zhou2022computationally}, our algorithm does not need to estimate the conditional variance using another ridge regression estimator on the second-order moment of value functions. Furthermore, in contrast to \texttt{HF-UCRL-VTR+} \citep{zhou2022computationally}, our algorithm does not explicitly estimate the high-order moments of value functions. Thus, our algorithm is much simpler. It is also worth noting that the multi-layer structure in Algorithm \ref{alg:2} is an alternative of the \texttt{SupLinUCB}-type design in Algorithm \ref{alg:1}. Since linear bandits can be seen as a special case of linear mixture MDPs, Algorithm \ref{alg:2} implies another algorithm for heteroscedastic linear bandits, which enjoys the same regret guarantee as Algorithm \ref{alg:1}.   
    
    \subsection{Regret Bounds}
    We provide the regret guarantee of Algorithm \ref{alg:2} in the following theorem.
    \begin{theorem} \label{thm:regret2}
    Set $\hat \beta_{k, \ell}$ as in \eqref{eq:def:beta:mdp}, $\alpha = 1 / (KH)^{3 / 2}$ and $\lambda = 1 / B^2$ in Algorithm \ref{alg:2}. 
    Then with probability at least $1 - (4\lceil \log_2 2HK \rceil + 9)\delta$ , 
    the regret of Algorithm \ref{alg:2} is bounded by: 
        \begin{align*} 
            \regret(K) = \tilde{O}\Big(d \sqrt{\Var_K^*} + d^{2}\Big). 
        \end{align*}
    \end{theorem}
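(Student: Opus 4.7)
The plan is to follow the same high-level recipe as in the bandit analysis (Theorem \ref{thm:regret1}), but layered with the MDP-specific steps needed to pass from the per-episode regret decomposition to the trajectory-based variance $\Var_K^*$. First, for each layer $\ell \in [L]$ I would instantiate Theorem \ref{thm:bernstein1} with $\xb_i = w_{i,h}\bphi_{V_{i,h+1}}(s_h^i,a_h^i)$, $\eta_i = w_{i,h}(V_{i,h+1}(s_{h+1}^i)-[\PP V_{i,h+1}](s_h^i,a_h^i))$ restricted to $(i,h)\in\Psi_{k,\ell}$, and $\rho = 2^{-\ell}$ by construction (the weights $w_{k,h}$ in Line \ref{alg2:line:17} are precisely chosen so that the weighted features have elliptical norm exactly $2^{-\ell}$). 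This gives the good event that $\btheta^*$ lies in the confidence ellipsoid centered at $\hat\btheta_{k,\ell}$ with radius $\hat\beta_{k,\ell}$ as defined in \eqref{eq:def:beta:mdp}. A parallel Bernstein application to the scalar one-point estimators establishes that $\hat\Var_{k,\ell}$ approximates the true weighted conditional variance $\sum_{(i,h)\in\Psi_{k,\ell}} w_{i,h}^2 [\VV V_{i,h+1}](s_h^i,a_h^i)$ up to lower-order terms.

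On this good event, optimism $Q_{k,h}\ge Q_h^*$ and $V_{k,h}\ge V_h^*$ follows by backward induction in $h$: the minimum-over-$\ell$ in Line \ref{alg2:line:7} preserves optimism because each layer's bonus individually dominates $|\langle\btheta^*-\hat\btheta_{k,\ell},\bphi\rangle|$ via Cauchy-Schwarz. With optimism in hand, $\regret(K)\le\sum_k (V_{k,1}-V_1^{\pi_k})(s_1^k)$, and the standard VTR telescoping along the trajectory yields
\[
\regret(K)\le \sum_{k,h} \hat\beta_{k,\ell_{k,h}}\bigl\|\bphi_{V_{k,h+1}}(s_h^k,a_h^k)\bigr\|_{\hat\bSigma_{k,1,\ell_{k,h}}^{-1}} + \cM_K + \text{lower order},
\]
where $\cM_K$ is a martingale difference sum I will control via Azuma/Freedman. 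The $\ell_{k,h}=L+1$ trajectories contribute at most $O(\alpha KH)=\tilde O(1)$ by the choice $\alpha=1/(KH)^{3/2}$.

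The main work is bounding the layer-wise bonus sums. Inside layer $\ell$, the weighted features satisfy $\|w_{i,h}\bphi_{V_{i,h+1}}\|_{\hat\bSigma^{-1}}=2^{-\ell}$, so the elliptical potential lemma gives $|\Psi_{K+1,\ell}|\le \tilde O(d\cdot 2^{2\ell})$, and splitting the bonus term $\hat\beta_{k,\ell}\cdot \|\bphi\|_{\hat\bSigma^{-1}}$ into a $\sqrt{\hat\Var_{k,\ell}}$-piece and a logarithmic piece, Cauchy--Schwarz plus the potential bound yields
\[
\sum_{(k,h):\,\ell_{k,h}=\ell} \hat\beta_{k,\ell}\|\bphi\|_{\hat\bSigma^{-1}} \;\lesssim\; d\sqrt{\textstyle\sum_{(i,h)\in\Psi_{K+1,\ell}} w_{i,h}^2\,[\VV V_{i,h+1}](s_h^i,a_h^i)} + d^2,
\]
after absorbing the $R=1$ noise term and using $\sum_\ell 1 = L = O(\log(KH))$. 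Summing this across $\ell$ and applying Cauchy--Schwarz over layers leaves a term of the form $d\sqrt{\sum_{k,h}[\VV V_{k,h+1}](s_h^k,a_h^k)}$ (the weights collapse because $w_{i,h}^2\le 1$ and the layer partition is disjoint).

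The hardest step is converting the variance of the \emph{optimistic} value functions $V_{k,h+1}$ back to $\Var_K^*$. I would use the decomposition $[\VV V_{k,h+1}]\le 2[\VV V_{h+1}^*] + 2[\PP(V_{k,h+1}-V_{h+1}^*)^2]$, and then apply the law of total variance together with the fact that $V_{k,h+1}-V_{h+1}^*\ge 0$ is bounded by the per-episode regret via the Bellman recursion; this produces a term of the form $\regret(K)$ multiplied by $\max_{k,h}(V_{k,h+1}-V_{h+1}^*)\le 1$. Summed over the trajectory, the total-reward-bounded-by-one assumption caps $\sum_h [\VV V_{h+1}^*](s_h,a_h)$ so that $\sum_{k,h}[\VV V_{k,h+1}](s_h^k,a_h^k)\le 2\Var_K^* + O(\regret(K))$. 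Plugging this back yields the self-bounded inequality
\[
\regret(K)\;\lesssim\; d\sqrt{\Var_K^*} + d\sqrt{\regret(K)} + d^2,
\]
and solving for $\regret(K)$ via the quadratic-form trick $x\le a\sqrt{x}+b\Rightarrow x\le 2(a^2+b)$ gives the claimed $\tilde O(d\sqrt{\Var_K^*}+d^2)$ bound. Accounting for the union bound across the $L=O(\log(KH))$ layers, the four or so concentration events (confidence set, variance estimator, martingale $\cM_K$, and the law-of-total-variance step) produces the failure probability $(4\lceil\log_2 2HK\rceil+9)\delta$ stated in the theorem.
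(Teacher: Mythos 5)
Your outline matches the paper through the confidence-set construction (Theorem \ref{thm:bernstein1} per layer), the variance-estimator concentration, optimism by backward induction, and the layer-wise elliptical-potential bound on the bonus sums (the paper's Lemmas \ref{lemma:event:sub}, \ref{lemma:varest:mdp}, \ref{lemma:upper:finite}, \ref{lemma:card}, \ref{lemma:sumbell}). The gap is in your "hardest step". You claim that $\sum_{k,h}[\VV V_{k,h+1}](s_h^k,a_h^k)\le 2\Var_K^*+O(\regret(K))$ follows from $[\VV V_{k,h+1}]\le 2[\VV V_{h+1}^*]+2[\PP(V_{k,h+1}-V_{h+1}^*)^2]$ plus "the optimistic gap is bounded by the per-episode regret". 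But $\sum_{k,h}[\PP(V_{k,h+1}-V_{h+1}^*)^2](s_h^k,a_h^k)$ is a sum over the \emph{whole trajectory} of expected optimistic gaps; unrolling the Bellman recursion bounds $V_{k,h}(s_h^k)-V_h^*(s_h^k)$ by the sum of Bellman errors from stage $h$ onward, so summing over $h$ picks up a factor of order $H$ (or, if you instead appeal to a martingale bound with range $1$, a $\sqrt{KH}$ term). Either way you lose horizon-freeness and the claimed self-bounding inequality $\regret\lesssim d\sqrt{\Var_K^*}+d\sqrt{\regret}+d^2$ does not follow. The same issue is hidden in "control $\cM_K$ via Azuma/Freedman": Azuma gives $\sqrt{KH}$, and Freedman needs a bound on exactly the variance quantity you are trying to control, so the argument is circular as stated.

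The paper resolves this with the higher-order-moment recursion (following \citealt{zhang2021variance,zhou2022computationally}): it defines the quantities $\check S_m,\tilde S_m$ and martingales $\check A_m,\tilde A_m$ for the $2^m$-th powers of $\check V_{k,h}=V_{k,h}-V_h^*$ and $\tilde V_{k,h}=V_h^*-V_h^{\pi_k}$ up to $m=\lceil\log_2 2HK\rceil$, uses a law-of-total-variance telescoping in which only the \emph{differences} $\check V_{k,h}(s_h^k)-[\PP\check V_{k,h+1}](s_h^k,a_h^k)$ appear (these are bounded by Bellman errors $R_0$, not by per-stage values, so no $H$ factor), applies Freedman at every level, and solves the cascade $\check S_m\le\check A_{m+1}+G+2^{m+1}(R_0+G+A_0)$, $\check A_m\le\sqrt{2\check S_m\log(1/\delta)}+\cdots$ via the recursion lemma of \citet{zhang2021reinforcement} (Lemmas \ref{lemma:rec:1}--\ref{lemma:rec:10}). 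This machinery, which your proposal omits entirely, is also where the $4\lceil\log_2 2HK\rceil$ part of the failure probability comes from. A secondary omission: the bonus in Line \ref{alg2:line:7} uses $\hat\bSigma_{k,1,\ell}$ (frozen at the start of the episode) while layer membership is decided with $\hat\bSigma_{k,h,\ell}$, so one needs the determinant-ratio indicator $I_h^k$ and the bound $G\le\tilde O(d)$ on the number of "bad" episodes (Lemma \ref{lemma:g}) to relate the two; without it the per-step Cauchy--Schwarz step in your bonus-sum bound is not justified.
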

    
    \begin{corollary}\label{coro:regret2}
    Under the same conditions as Theorem \ref{thm:regret2}, with probability at least $1 - (4\lceil \log_2 2HK \rceil + 10)\delta$, the regret of Algorithm \ref{alg:2} is bounded by: 
    \begin{align*}
        \regret(K) = \tilde{O}\big(d\sqrt{K} + d^2\big). 
    \end{align*}
    \end{corollary}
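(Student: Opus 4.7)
The plan is to reduce Corollary~\ref{coro:regret2} to Theorem~\ref{thm:regret2} by establishing a high-probability worst-case upper bound $\Var_K^* = \tilde O(K)$. Substituting such a bound into the $\tilde O\bigl(d\sqrt{\Var_K^*} + d^2\bigr)$ bound of Theorem~\ref{thm:regret2} immediately yields the claimed $\tilde O\bigl(d\sqrt{K} + d^2\bigr)$ regret, at the cost of one additional high-probability event (matching the extra $\delta$ in the probability statement versus Theorem~\ref{thm:regret2}). So the entire task reduces to controlling the trajectory-based total variance.

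To bound $\Var_K^*$, I would use a law-of-total-variance decomposition combined with a self-bounding bootstrap. Define the bounded martingale differences $D_{k,h} = V_{h+1}^*(s_{h+1}^k) - [\PP V_{h+1}^*](s_h^k, a_h^k)$ and $\tilde D_{k,h} = (V_{h+1}^*(s_{h+1}^k))^2 - [\PP (V_{h+1}^*)^2](s_h^k, a_h^k)$; both are bounded by $1$ because $V_{h+1}^* \in [0,1]$. Writing $\delta_{k,h} = V_h^*(s_h^k) - Q_h^*(s_h^k, a_h^k) \geq 0$ and expanding $[\VV V_{h+1}^*] = [\PP (V_{h+1}^*)^2] - ([\PP V_{h+1}^*])^2$ via the Bellman optimality identity $[\PP V_{h+1}^*] = V_h^* - r - \delta$, and then telescoping $\sum_h \bigl((V_{h+1}^*(s_{h+1}^k))^2 - (V_h^*(s_h^k))^2\bigr) = -(V_1^*(s_1^k))^2$ and using $V_h^* \in [0,1]$ to absorb quadratic terms into linear ones, yields a per-episode estimate of the form
\begin{align*}
\sum_{h=1}^H [\VV V_{h+1}^*](s_h^k, a_h^k) \leq -\sum_{h=1}^H \tilde D_{k,h} + 2 G_k + 2 \sum_{h=1}^H \delta_{k,h},
\end{align*}
where $G_k = \sum_h r(s_h^k, a_h^k) \leq 1$ is the total reward in episode $k$.

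Summing this over $k$ and using the further identity $\sum_h \delta_{k,h} = V_1^*(s_1^k) - G_k + \sum_h D_{k,h}$ (which comes from the same Bellman rearrangement), together with $\sum_k V_1^*(s_1^k) \leq K$ and $\sum_k G_k \leq K$, reduces the task to bounding the two martingale sums $\sum_{k,h} D_{k,h}$ and $\sum_{k,h} \tilde D_{k,h}$. I would apply Freedman's inequality in its variance-aware form to each: their conditional variances are $[\VV V_{h+1}^*]$ and $[\VV (V_{h+1}^*)^2]$, respectively, and the latter is at most $4[\VV V_{h+1}^*]$ via the pointwise inequality $|x^2 - y^2| \leq 2|x-y|$ for $x,y \in [0,1]$. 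Both tails therefore scale as $\tilde O\bigl(\sqrt{\Var_K^*} + 1\bigr)$, giving an implicit inequality $\Var_K^* \leq c_1 K + c_2 \sqrt{\Var_K^* \log(KH/\delta)} + \tilde O(1)$ which I close by the standard quadratic-in-$\sqrt{\Var_K^*}$ trick to obtain $\Var_K^* = \tilde O(K)$ with the required probability.

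The main obstacle is keeping every concentration step horizon-free. A naive Azuma bound on either martingale sum produces a $\sqrt{KH}$ deviation and destroys the argument; the essential move is to use variance-aware Freedman tails so that both fluctuations scale with $\sqrt{\Var_K^*}$ rather than $\sqrt{KH}$. Closing the resulting self-referential inequality is then a routine quadratic manipulation, but verifying that the $\tilde D$-term is controlled by a constant multiple of $[\VV V_{h+1}^*]$ --- and that no intermediate step introduces an explicit polynomial dependence on $H$ --- is the delicate point that enables the bootstrap.
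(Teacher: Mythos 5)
Your argument is correct, but it takes a genuinely different route from the paper. The paper reduces the corollary to its existing machinery: it writes $\Var_K^* \le 2\tilde S_0 + 2\sum_{k,h}[\VV V_{h+1}^{\pi_k}](s_h^k,a_h^k)$, bounds the second term by $\tilde O(K)$ through a per-episode law of total variance for the \emph{executed} policy (whose return lies in $[0,1]$) plus Freedman (Lemma \ref{lemma:rec:11}, event $\cE_{\mathrm{r}_4}$), and then controls the gap-variance term $\tilde S_0$ through the higher-moment recursion (Lemmas \ref{lemma:rec:8}--\ref{lemma:rec:10}) together with the algorithm-dependent quantities $R_0, A_0, \check A_0, G$, arriving at $\Var_K^* \le \tilde O(K + d^2)$ before invoking Theorem \ref{thm:regret2}. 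You instead bound $\Var_K^*$ directly and algorithm-independently: expanding $[\VV V_{h+1}^*]$ via the Bellman optimality equation, telescoping $(V_h^*)^2$ along the executed trajectory, controlling the nonnegative gaps $\delta_{k,h}$ by their own telescoping identity, and closing the resulting self-referential inequality with variance-aware Freedman tails gives $\Var_K^* \le \tilde O(K)$ for \emph{any} sequence of executed policies. Your route is more elementary and self-contained (no $2^m$-th moment recursion, no dependence on $\cE_{\mathrm{c}}$ or the optimism analysis), and it even yields a slightly cleaner bound on $\Var_K^*$; the paper's route buys economy, since it reuses lemmas already proved for Theorem \ref{thm:regret2} and only adds one short lemma and one event. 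Two minor points to tighten: your Freedman applications use the random quantity $\Var_K^*$ as the variance proxy, which strictly requires a peeling argument over dyadic variance levels (the paper commits the same simplification in Lemmas \ref{lemma:rec:2} and \ref{lemma:rec:3}); and you invoke two martingale bounds (for $D_{k,h}$ and $\tilde D_{k,h}$), so to stay within a single extra $\delta$ you should either combine them into one martingale difference, e.g.\ $2D_{k,h}-\tilde D_{k,h}$ with variance at most a constant multiple of $[\VV V_{h+1}^*]$, or adjust the failure-probability count accordingly.
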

    
    \begin{remark}
        Our regret given by Theorem \ref{thm:regret2} is variance-dependent, which means that the regret of \texttt{UCRL-AVE} is smaller when the conditional variance of the optimistic value function is smaller. In the deterministic case where all the transitions in the MDP is deterministic, our regret reduces to $\tilde{O}(d^2)$, with only a logarithmic dependence on $K$. Additionally, the regret in Corollary \ref{coro:regret2} matches the regret of \texttt{HF-UCRL-VTR+} proposed by \citet{zhou2022computationally}, which is the worst-case regret and matches the minimax lower bound \citep{zhou2022computationally}.
    \end{remark}
    
    \section{Conclusion and Future Work}
    
    In this paper, we consider variance-aware learning in linear bandits and linear mixture MDPs. We propose a computationally efficient algorithm \texttt{SAVE} for heteroscedastic linear bandits, which achieves a variance-dependent regret, matching the minimax regret bounds in both the worst case and the deterministic reward case. For linear mixture MDPs, we further extend our techniques and propose an algorithm dubbed \texttt{UCRL-AVE}, attaining a tighter problem-dependent horizon-free regret bound. We leave for future work the generalization of our work to RL with nonlinear function approximation. 
    
    

    \appendix

    \section{Proof of Theorem \ref{thm:bernstein1}}
    
    \begin{proof}
        For simplicity, we introduce the following definitions: 
            \begin{align} 
                \db_0 = 0,  \db_k = \sum_{i = 1}^k \xb_i \eta_i, q_0 = 0, q_k = \|\db_k\|_{\Zb_k^{-1}}, \cI_k = \mathds{1}\{0 \le s \le k, q_s \le \beta_s\}, \notag
            \end{align}
            where $k \ge 1$ and we further define $\beta_0 = 0$, $\cE_0 = 1$. According to these definitions, the term $q_k$ can be upper bounded by the following decomposition: 
            \begin{align} 
                q_k^2 &= (\db_{k - 1} + \xb_k \eta_k)^\top \Zb_k^{-1}(\db_{k - 1} + \xb_k \eta_k) \notag
                \\&= \db_{k - 1}^\top \Zb_k^{-1} \db_{k - 1} + \underbrace{2\eta_k \xb_k^\top \Zb_k^{-1} \db_{k - 1}}_{I_{1, k}} + \underbrace{\eta_k^2 \xb_k^\top \Zb_k^{-1} \xb_k}_{I_{2, k}} \notag
                \\&\le q_{k - 1}^2 + I_{1, k} + I_{2, k}, \label{eq:conf:recur}
            \end{align}
            where the inequality holds since $\Zb_k=\Zb_{k - 1} +\xb_{k}\xb_{k}^{\top}  \succeq \Zb_{k - 1}$. For the term $I_{1,k}$, from the matrix inversion lemma, we have the following equation:  \begin{align} 
                I_{1, k} &= 2\eta_k \bigg(\xb_k^\top \Zb_{k - 1}^{-1} \db_{k - 1} - \frac{\xb_k \Zb_{k - 1}^{-1} \xb_k \xb_k^\top \Zb_{k - 1}^{-1} \db_{k - 1}}{1 + \|\xb_{k}\|_{\Zb_{k - 1}^{-1}}^2}\bigg) \notag
                \\&= 2\eta_k \bigg(\xb_k^\top \Zb_{k - 1}^{-1} \db_{k - 1} - \frac{\|\xb_{k}\|_{\Zb_{k - 1}^{-1}}^2 \xb_k^\top \Zb_{k - 1}^{-1} \db_{k - 1}}{1 + \|\xb_{k}\|_{\Zb_{k - 1}^{-1}}^2}\bigg) \notag
                \\&= 2\eta_k \cdot \frac{\xb_k^\top \Zb_{k - 1}^{-1} \db_{k - 1}}{1 + \|\xb_{k}\|_{\Zb_{k - 1}^{-1}}^2}. \notag
            \end{align}
    Taking a summation over the term $I_{1,k}$ with respect to the indicator function $\cE_{k-1}$, we have the following equation:       
    \begin{align} 
                \sum_{i = 1}^k I_{1, i} \cdot \cI_{i - 1} &= 2\sum_{i = 1}^k \eta_i \cdot \frac{\xb_i^\top \Zb_{i - 1}^{-1} \db_{i - 1}}{1 + \|\xb_{i}\|_{\Zb_{i - 1}^{-1}}^2} \cI_{i - 1}. \label{eq:I1}
            \end{align}
    Now, we can derive an upper bound for this summation by Freedman's inequality. In detail, for each round $i\in[k]$, we have
            \begin{align} 
                \Bigg|\eta_i \cdot \frac{\xb_i^\top \Zb_{i - 1}^{-1} \db_{i - 1}}{1 + \|\xb_{i}\|_{\Zb_{i - 1}^{-1}}^2} \cI_{i - 1}\Bigg| \le R \Bigg|\frac{\|\xb_i\|_{\Zb_{i - 1}^{-1}} \|\db_{i - 1}\|_{\Zb_{i - 1}^{-1}}}{1 + \|\xb_{i}\|_{\Zb_{i - 1}^{-1}}^2} \Bigg| \cI_{i - 1}
                \le R\Bigg|\frac{\|\xb_i\|_{\Zb_{i - 1}^{-1}} \beta_{i - 1}}{1 + \|\xb_{i}\|_{\Zb_{i - 1}^{-1}}^2}\Bigg| \le R \beta_{k} \rho, \notag
            \end{align}
            where the first inequality holds due to Cauchy-Schwarz inequality, the second inequality holds due to the definition of indicator function $\cE_{i-1}$ and the last inequality holds due to $\rho \ge \|\xb_i\|_{\Zb_{i-1}^{-1}} $. In addition, for each round $i\in[k]$, we have
            \begin{align}
                \EE\Bigg[\eta_i \cdot \frac{\xb_i^\top \Zb_{i - 1}^{-1} \db_{i - 1}}{1 + \|\xb_{i}\|_{\Zb_{i - 1}^{-1}}^2} \cI_{i - 1}\bigg|\cG_k\Bigg] = 0,\notag
            \end{align}
            and the summation of variance is upper bounded by
            \begin{align} 
                \sum_{i = 1}^k \EE\Bigg[\bigg(\eta_i \cdot \frac{\xb_i^\top \Zb_{i - 1}^{-1} \db_{i - 1}}{1 + \|\xb_{i}\|_{\Zb_{i - 1}^{-1}}^2} \cI_{i - 1}\bigg)^2 \Bigg| \cG_i\Bigg]
                &= \sum_{i = 1}^k \Bigg( \frac{\xb_i^\top \Zb_{i - 1}^{-1} \db_{i - 1}}{1 + \|\xb_{i}\|_{\Zb_{i - 1}^{-1}}^2} \cI_{i - 1}\Bigg)^2\EE[\eta_i^2 | \cG_i] \notag
                \\&\le \sum_{i = 1}^k \Bigg( \frac{\|\xb_{i}\|_{\Zb_{i - 1}^{-1}} \|\db_{i - 1}\|_{\Zb_{i - 1}^{-1}} \cI_{i - 1}}{1 + \|\xb_{i}\|_{\Zb_{i - 1}^{-1}}^2} \Bigg)^2\EE[\eta_i^2 | \cG_i] \notag
                \\
                &\le \sum_{i = 1}^k \Bigg( \frac{\|\xb_{i}\|_{\Zb_{i - 1}^{-1}} \beta_{i-1}}{1 + \|\xb_{i}\|_{\Zb_{i - 1}^{-1}}^2} \Bigg)^2\EE[\eta_i^2 | \cG_i] \notag\\
                &\le \beta_k^2 \rho^2\sum_{i = 1}^k \EE[\eta_i^2 | \cG_i] \notag\\
                &\le \beta_k^2 \rho^2 v_k. \notag
            \end{align}
    where the first inequality holds due to Cauchy-Schwarz inequality, the second inequality holds due to the definition of indicator function $\cE_{i-1}$, the third inequality holds due to $\rho \ge \|\xb_i\|_{\Zb_{i-1}^{-1}}$ and the last inequality holds due to $\sum_{i = 1}^k \EE[\eta_i^2 | \cG_i] \leq v_k$. 
    
            Therefore, using Freedman's inequality, for any $k\ge 1$, with probability $1 - \delta / (4k^2)$, we have
            \begin{align} 
                \sum_{i = 1}^k I_{1, i} \cdot \cI_{i - 1} &\le 2\sqrt{2\beta_k^2 \rho^2 v_k \log(4k^2 / \delta)} + 4 / 3 \cdot R\beta_k \rho \log(4k^2 / \delta) \notag
                \\&\le \frac{1}{4} \beta_k^2  + 32\left(\rho^2 v_k \log(4k^2 / \delta)\right) + \frac{1}{4} \beta_k^2 + 9R^2 \rho^2 [\log(4k^2 / \delta)]^2 \notag
                \\&\le \frac{3}{4} \beta_k^2, \notag
            \end{align}
            where the first inequality holds due to Lemma \ref{lemma:freedman} and the second inequality holds due to Young's inequality. After taking a union bound for all $k>1$, it can then be further deduced that with probability $1 - \delta / 2$, for all $k \ge 1$, we have
            \begin{align} 
                \sum_{i = 1}^k I_{1, i} \cdot \cI_{i - 1} \le \frac{3}{4} \beta_k^2. \label{event:1}
            \end{align}
            For simplicity, let $\cE_{I_1}$ be the events that \eqref{event:1} holds. Then we bound the summation of $I_{2, k}$ over $k$ through the following calculation: \begin{align} 
                \sum_{i = 1}^t I_{2, k} &\le \sum_{i = 1}^k \eta_i^2 \rho^2 = \rho^2 v_k + \rho^2 \sum_{i = 1}^k \big[\eta_i^2 - \EE[\eta_i^2|\cG_i]\big] ,\label{eq:I2:1}
            \end{align}
    where the first inequality holds due to $\rho \ge \|\xb_i\|_{\Zb_{i-1}^{-1}}$.
     Still, we can bound the second term in \eqref{eq:I2:1} using Freedman's inequality in Lemma \ref{lemma:freedman}. Notice that, for each round $i\in[k]$, we have
     \begin{align}
                \left| \EE\big[\eta_i^2|\cG_i]\big] - \eta_i^2\right| \le R^2,\ & \EE\Big[\big(\EE[\eta_i^2|\cG_i] - \eta_i^2\big) |\cG_i\Big] = 0, \notag
                \\
                \EE\Big[\big(\EE[\eta_i^2|\cG_i] - \eta_i^2\big)^2 |\cG_i\Big]= \left(\EE[\eta_i^2|\cG_i]\right)^2 - & 2\EE[\eta_i^2|\cG_i]\cdot \EE[\eta_i^2 | \cG_i] + \EE[\eta_i^4|\cG_i]
                \le R^2 \EE[\eta_i^2|\cG_i], \notag
            \end{align}
            According to Freedman's Inequality, for any $k$, with probability $1 - \delta / (4k^2)$, we have \begin{align}
                \sum_{i = 1}^k \big(\eta_i^2 - \EE[\eta_i^2|\cG_i] \big) \le \sqrt{2 R^2 \log(4k^2 / \delta) v_k} + 2 / 3 \cdot R^2 \log(4k^2 / \delta).  \label{add:01}
            \end{align}
    Taking a union bound over all round $k \ge 1$,  with probability at least $1 - \delta/2$, for all $k \ge 1$, we have \begin{align} 
                \sum_{i = 1}^k I_{2, k} &\leq  \rho^2 v_k + \rho^2 \sum_{i = 1}^k \big[\eta_i^2 - \EE[\eta_i^2|\cG_i]\big] \notag\\
                &\le \rho^2 v_k + \rho^2R\sqrt{2v_k\log(4k^2 / \delta)}  + \frac{2}{3}\cdot  \rho^2 \cdot R^2 \log(4k^2 / \delta) \notag
                \\&\le 3\left(\rho^2 v_k \log(4k^2 / \delta)\right) + 2R^2 \rho^2 [\log(4k^2 / \delta)]^2 \notag
                \\&\le \frac{1}{4} \beta_k^2.  \label{eq:event2}
            \end{align}
            where the first inequality holds due to \eqref{eq:I2:1}, the second inequality holds due to
          \eqref{add:01} and the third inequality holds due to Young's inequality. For simplicity, let $\cE_{I_2}$ be the events that \eqref{eq:event2} holds.
            In the remaining proof, we assume that events $\cE_{I_1}$ and $\cE_{I_2}$ holds, whose probability is no less than $1 - \delta$ by the union bound. Under this situation, for any round $k \ge 0$, if $\cI_{i - 1}=1$ holds for all $i \in [k]$, then according to \eqref{eq:conf:recur}, we have
            \begin{align} 
                q_{k + 1} &\le \sum_{i = 1}^{k + 1}I_{1, i} + \sum_{i = 1}^{k + 1} I_{2, i} \notag
                \\&= \sum_{i = 1}^{k + 1}I_{1, i} \cdot \cI_{i - 1} + \sum_{i = 1}^{k + 1} I_{2, i} \notag
                \\&\le \beta_{k + 1}^2, \notag
            \end{align} 
    where the last inequality holds due to the definition of events $\cE_{I_1}$ and $\cE_{I_2}$. This result indicates that $\cE_{k + 1} = 1$. Therefore, by induction, we can deduce that with probability at least $1 - \delta$, for all $k\ge 1$, we have
    \begin{align} 
                \left\|\sum_{i=1}^{k} \xb_i \eta_i\right\|_{\Zb_k^{-1}} \leq \beta_k. \notag
            \end{align}
            Furthermore, the estimation error between underlying vector $\bmu^*$ and estimator $\bmu_k$ can be upper bounded by:
            \begin{align} 
                \|\bmu_k - \bmu^*\|_{\Zb_k} &= \|\Zb_k^{-1} \bbb_k - \Zb_k^{-1} \Zb_k \bmu^*\|_{\Zb_k} \notag
                \\&= \left\|\Zb_k^{-1} \bbb_k - \Zb_k^{-1} \sum_{i = 1}^k \xb_i \xb_i^\top \bmu^* - \lambda \Zb_k^{-1} \bmu^*\right\|_{\Zb_k} \notag
                \\&= \left\|\Zb_k^{-1} \sum_{i = 1}^k \xb_i (y_i - \xb_i^\top \bmu^*) - \lambda \Zb_k^{-1} \bmu^*\right\|_{\Zb_k} \notag
                \\&\le \left\|\sum_{i=1}^{k} \xb_i \eta_i\right\|_{\Zb_k^{-1}} + \sqrt{\lambda} \|\bmu^*\|_2 \notag
                \\&\le \beta_k + \sqrt{\lambda}\|\bmu^*\|_2, \notag
            \end{align}
            where the first equality follows from the definition of $\bmu_k$, the second equality holds due to the definition of $\Zb_k$ and the first inequality holds by triangle inequality with the fact that $\Zb_k \succeq \lambda\Ib$. Thus, we complete the proof of Theorem \ref{thm:bernstein1}.
    \end{proof}
    
    \section{Proofs from Section \ref{sec:bandits}}
    
    \subsection{Proof of Theorem \ref{thm:regret1}}
    \begin{lemma} \label{lemma:confidence}
        Suppose that $\|\btheta^*\|_2 \le 1$. In Algorithm \ref{alg:1}, with probability at least $1-\delta$, the
    following statement holds for all round $k \ge 1$ and layer $\ell \in [L]$: \begin{align} 
            \|\hat \btheta_{k, \ell} - \btheta^*\|_{\hat\bSigma_{k, \ell}} \le 16 \cdot 2^{-\ell} \sqrt{\sum_{i \in \Psi_{k, \ell}} w_i^2 \sigma_i^2\log(4k^2 L / \delta)} + 6 \cdot 2^{-\ell} R \log(4k^2 L / \delta) + 2^{-\ell + 1}. \notag
        \end{align}
    \end{lemma}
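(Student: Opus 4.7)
The plan is to reduce the lemma to a direct application of Theorem~\ref{thm:bernstein1} layer by layer, and then take a union bound over the $L$ layers.

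Fix a layer $\ell \in [L]$. Let $\tau_1 < \tau_2 < \cdots$ be the rounds that are ever added to $\Psi_{\cdot,\ell}$ during the execution of Algorithm~\ref{alg:1} (so $\Psi_{k,\ell} = \{\tau_1,\dots,\tau_{m_k}\}$ with $m_k = |\Psi_{k,\ell}| \le k$). Set
\[
\xb_j := w_{\tau_j}\ab_{\tau_j}, \qquad \eta_j := w_{\tau_j}\epsilon_{\tau_j}, \qquad y_j := w_{\tau_j} r_{\tau_j},
\]
and take $\cG_j$ to be the $\sigma$-algebra generated by everything observable at the moment Algorithm~\ref{alg:1} is about to pick $\ab_{\tau_j}$ (so $\cD_{1:\tau_j}$, $\ab_{1:\tau_j}$, $r_{1:\tau_j-1}$ and hence $w_{\tau_j}$ are all $\cG_j$-measurable). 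With this re-indexing, the weighted normal equations for $\hat\btheta_{k,\ell}$ give precisely $\bmu_{m_k} = \Zb_{m_k}^{-1}\bbb_{m_k}$ in the language of Theorem~\ref{thm:bernstein1} with regularizer $\lambda = 2^{-2\ell}$, since $\hat\bSigma_{k,\ell} = 2^{-2\ell}\Ib + \sum_{j\le m_k} \xb_j\xb_j^\top$.

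Next I verify the hypotheses of Theorem~\ref{thm:bernstein1}. From Line~\ref{alg1:line:14}--\ref{alg:1:line:15} of Algorithm~\ref{alg:1} we have $\|\ab_{\tau_j}\|_{\hat\bSigma_{\tau_j,\ell}^{-1}} > 2^{-\ell}$, so $w_{\tau_j} = 2^{-\ell}/\|\ab_{\tau_j}\|_{\hat\bSigma_{\tau_j,\ell}^{-1}} < 1$; combined with $|\epsilon_{\tau_j}|\le R$ from \eqref{eq:noise:condition}, this gives $|\eta_j|\le R$. The same definition of $w_{\tau_j}$ enforces $\|\xb_j\|_{\hat\bSigma_{\tau_j,\ell}^{-1}} = 2^{-\ell}$, and because no context is added to layer $\ell$ between rounds $\tau_{j-1}$ and $\tau_j$, we have $\hat\bSigma_{\tau_j,\ell} = \Zb_{j-1}$, so the uniform bound $\rho = 2^{-\ell}$ from Theorem~\ref{thm:bernstein1} is met with equality. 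The martingale conditions $\EE[\eta_j\mid\cG_j]=0$ and $\sum_{i\le j}\EE[\eta_i^2\mid\cG_i] = \sum_{i\le j} w_{\tau_i}^2\sigma_{\tau_i}^2$ follow directly from \eqref{eq:noise:condition} since $w_{\tau_i}$ is $\cG_i$-measurable, so I may take $v_{m_k} = \sum_{i\in\Psi_{k,\ell}} w_i^2\sigma_i^2$.

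Applying Theorem~\ref{thm:bernstein1} (with confidence parameter $\delta/L$ for this layer) and using $\|\btheta^*\|_2\le 1$ together with $\sqrt{\lambda} = 2^{-\ell}$ and $m_k\le k$, I obtain, with probability at least $1-\delta/L$, simultaneously for all $k\ge 1$,
\[
\|\hat\btheta_{k,\ell}-\btheta^*\|_{\hat\bSigma_{k,\ell}} \le 16\cdot 2^{-\ell}\sqrt{\textstyle\sum_{i\in\Psi_{k,\ell}} w_i^2\sigma_i^2\cdot \log(4k^2 L/\delta)} + 6\cdot 2^{-\ell} R\log(4k^2 L/\delta) + 2^{-\ell},
\]
which is slightly stronger than the claimed bound (the $2^{-\ell}$ absorbs into $2^{-\ell+1}$). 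A union bound over $\ell \in [L]$ then gives the statement of the lemma. The main subtlety I expect is bookkeeping around the filtration, specifically that the $\tau_j$ are data-dependent stopping times but the weights $w_{\tau_j}$ and matrices $\hat\bSigma_{\tau_j,\ell}$ used in the construction of $\xb_j$ are genuinely $\cG_j$-measurable; once this is laid out cleanly, the proof is essentially a direct instantiation of Theorem~\ref{thm:bernstein1}.
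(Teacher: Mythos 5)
Your proposal is correct and follows essentially the same route as the paper's own proof: instantiate Theorem~\ref{thm:bernstein1} within each layer $\ell$ (with $\rho = 2^{-\ell}$ enforced by the weights, $\lambda = 2^{-2\ell}$, $|w_i\epsilon_i|\le R$, and $v$ equal to the weighted variance sum), absorb the regularization term via $\sqrt{\lambda}\|\btheta^*\|_2 \le 2^{-\ell}$, and union bound over $\ell \in [L]$. Your write-up is in fact somewhat more careful than the paper's, making explicit the re-indexing of $\Psi_{k,\ell}$, the measurability of the weights, and the use of $|\Psi_{k,\ell}|\le k$ in the logarithmic factor.
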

    For simplicity, we denote $\cE_{\conf}$ as the event such that the result in Lemma \ref{lemma:confidence} holds in the remaining section. 
    \begin{proof}
        We first consider a fixed layer $\ell \in [L]$. 
        Suppose that $k$ is an arbitrary round satisfying $k \in \Psi_{k + 1, \ell}$. 
        Notice that in Line \ref{alg:1:line:15} (Algorithm \ref{alg:1}), we introduce weight $w_k$ to guarantee $\|w_k \ab_k\|_{\hat\bSigma_{k, \ell}^{-1}} = 2^{-\ell}$.

    Then we can applying Theorem \ref{thm:bernstein1} for the layer $\ell$. In detail, for each $k \in \Psi_{K + 1, \ell}$, we have \begin{align} 
            \|w_k \ab_k\|_{\hat\bSigma_{k, \ell}^{-1}} = 2^{-\ell}, \quad
            \EE[w_k^2 \epsilon_k^2| \cF_{k}] \le w_k^2 \EE[\epsilon_k^2| \cF_{k}] \le w_k^2 \sigma_k^2, \quad
            |w_k \epsilon_k| \le |\epsilon_k| \le R, \notag
        \end{align}
        where the last inequality holds due to the fact that
    $w_k = 2^{-\ell}/\|\ab_k\|_{\hat\bSigma_{k, \ell}^{-1}}\leq 1$. According to Theorem \ref{thm:bernstein1}, we can deduce that  with probability at least $1 - \delta / L$, for all round $k \in \Psi_{K + 1, \ell}, \ $\begin{align} 
             \|\hat \btheta_{k, \ell} - \btheta^*\|_{\hat\bSigma_{k, \ell}} \le 16 \cdot 2^{-\ell} \sqrt{\sum_{i \in \Psi_{k, \ell}} w_i^2 \sigma_i^2\log(4k^2 L / \delta)} + 6 \cdot 2^{-\ell} R \log(4k^2 L / \delta) + 2^{-\ell + 1}. \notag
        \end{align}
    Finally, after taking a union bound for all layer $\ell \in [L]$, we complete the proof of \ref{lemma:confidence}.
    \end{proof}
    
    \begin{lemma}  \label{lemma:confset}
        Suppose that the event $\cE_{\conf}$ defined in Lemma \ref{lemma:confidence} occurs. If $\{\hat\beta_{k, \ell}\}_{k \ge 1, \ell \in [L]}$ satisfies \begin{align}
            \hat\beta_{k, \ell} \ge 16 \cdot 2^{-\ell} \sqrt{\sum_{i \in \Psi_{k, \ell}} w_i^2 \sigma_i^2\log(4k^2 L / \delta)} + 6 \cdot 2^{-\ell} R \log(4k^2 L / \delta) + 2^{-\ell + 1}, \notag
        \end{align}  
        then for all $k \ge 1$ and $\ell \in [L]$ such that $\cA_{k, \ell}$ exists, we have $\ab_k^* \in \cA_{k, \ell}$.
    \end{lemma}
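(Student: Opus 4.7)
The plan is to prove the claim by induction on the layer index $\ell$, showing that $\ab_k^*$ survives every elimination step executed in the while loop at round $k$. The base case $\ell=1$ is immediate since $\cA_{k,1}=\cD_k$ by Line \ref{alg:1:line:5}, so $\ab_k^* \in \cA_{k,1}$ trivially.

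For the inductive step, I assume $\ab_k^* \in \cA_{k,\ell}$ and want to show $\ab_k^* \in \cA_{k,\ell+1}$ whenever $\cA_{k,\ell+1}$ is formed. The new set is only constructed through Line \ref{alg1:line:12}, and that line is entered exactly when $\|\ab\|_{\hat\bSigma_{k,\ell}^{-1}} \le 2^{-\ell}$ holds for every $\ab \in \cA_{k,\ell}$; in particular this bound applies to $\ab_k^*$ itself and to any competitor $\ab' \in \cA_{k,\ell}$. Combining the hypothesized lower bound on $\hat\beta_{k,\ell}$ with the guarantee of Lemma \ref{lemma:confidence} (under the event $\cE_{\conf}$) yields $\|\hat\btheta_{k,\ell} - \btheta^*\|_{\hat\bSigma_{k,\ell}} \le \hat\beta_{k,\ell}$, which is the only ingredient I will need from the confidence analysis.

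The core estimate is then a two-sided Cauchy--Schwarz bound. For any $\ab \in \cA_{k,\ell}$,
\begin{align*}
|\la \ab, \hat\btheta_{k,\ell} - \btheta^*\ra| \le \|\ab\|_{\hat\bSigma_{k,\ell}^{-1}} \cdot \|\hat\btheta_{k,\ell} - \btheta^*\|_{\hat\bSigma_{k,\ell}} \le 2^{-\ell}\,\hat\beta_{k,\ell}.
\end{align*}
Applying this both to $\ab_k^*$ and to the maximizer $\ab'=\argmax_{\ab \in \cA_{k,\ell}}\la \ab, \hat\btheta_{k,\ell}\ra$, together with the optimality $\la \ab_k^*, \btheta^*\ra \ge \la \ab', \btheta^*\ra$, gives
\begin{align*}
\la \ab_k^*, \hat\btheta_{k,\ell}\ra \ge \la \ab_k^*, \btheta^*\ra - 2^{-\ell}\hat\beta_{k,\ell} \ge \la \ab', \btheta^*\ra - 2^{-\ell}\hat\beta_{k,\ell} \ge \la \ab', \hat\btheta_{k,\ell}\ra - 2\cdot 2^{-\ell}\hat\beta_{k,\ell},
\end{align*}
which is precisely the survival condition in Line \ref{alg1:line:12}. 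Hence $\ab_k^* \in \cA_{k,\ell+1}$, closing the induction.

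There is essentially no hard step: the argument is a standard optimism-under-confidence argument adapted to the layered, weighted covariance $\hat\bSigma_{k,\ell}$. The only subtlety worth flagging is that one must use the layer-specific uniform bound $\|\ab\|_{\hat\bSigma_{k,\ell}^{-1}} \le 2^{-\ell}$ provided by the \textbf{elsif} branch, rather than the looser threshold $\alpha$ from the first \textbf{if} branch; this is what makes the gap parameter in the elimination rule match $2\cdot 2^{-\ell}\hat\beta_{k,\ell}$ rather than something larger, and is essential for the regret analysis in the subsequent theorem.
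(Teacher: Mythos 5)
Your proof is correct and follows essentially the same route as the paper's: induction on the layer, noting that $\cA_{k,\ell+1}$ is only formed when $\|\ab\|_{\hat\bSigma_{k,\ell}^{-1}} \le 2^{-\ell}$ holds for all $\ab \in \cA_{k,\ell}$, then combining the confidence bound of Lemma \ref{lemma:confidence} with Cauchy--Schwarz and the optimality of $\ab_k^*$ under $\btheta^*$ to verify the $2\cdot 2^{-\ell}\hat\beta_{k,\ell}$ survival condition in Line \ref{alg1:line:12}. No gaps.
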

    \begin{proof} 
        Fix an arbitrary round $k$. 
        If layer $\ell = 1$, then $\ab_k^* \in \cD_k= \cA_{k, \ell}$ trivially holds. 
        Then for layer $\ell >1$,
        we prove lemma \ref{lemma:confset}
        by induction. Assume that $\ab_k^* \in \cA_{k, \ell_1}$ holds for some $\ell_1 \in \ZZ^+$ and $\cA_{k, \ell_1 + 1}$ exists. 
    
        By Lemma \ref{lemma:confidence}, for all $\ab \in \cA_{k, \ell_1}$, we have
        \begin{align} 
            \left|\la \ab, \hat\btheta_{k, \ell_1} \ra - \la \ab, \btheta^* \ra\right| \le \|\ab\|_{\hat\bSigma_{k, \ell_1}^{-1}} \left\|\hat\btheta_{k, \ell_1} - \btheta^*\right\|_{\hat\bSigma_{k, \ell_1}} \le \hat\beta_{k, \ell} \|\ab\|_{\hat\bSigma_{k, \ell_1}^{-1}}, \label{eq:reward:gap}
        \end{align}
        where the first inequality holds due to Cauchy-Schwarz inequality and the last inequality holds due to the definition of events $\cE_{\conf}$. 
        According to Line \ref{alg1:line:11} of Algorithm \ref{alg:1}, $\cA_{k, \ell_1 + 1}$ exists only if $\|\ab\|_{\hat\bSigma_{k, \ell_1}^{-1}} \le 2^{-\ell_1}$ holds for all $\ab \in \cA_{k, \ell_1}$. Therefore, the sub-optimality gap in \eqref{eq:reward:gap} can be further bounded as follows: \begin{align} 
            \left|\la \ab, \hat\btheta_{k, \ell_1} \ra - \la \ab, \btheta^* \ra\right| \le \hat\beta_{k, \ell} \|\ab\|_{\hat\bSigma_{k, \ell_1}^{-1}} \le 2^{-\ell_1} \cdot \hat\beta_{k, \ell_1}. \label{add:002}
        \end{align}
        For short, let $\ab_{\max} = \argmax_{\ab' \in \cA_{k, \ell_1}} \la \ab', \hat\btheta_{k, \ell_1} \ra$. 
    Then for the optimal action $\ab_k^*\in \cA_{k,l_1}$, we have \begin{align} 
            \ &\la \ab_k^*, \hat\btheta_{k, \ell_1} \ra - \max_{\ab' \in \cA_{k, \ell_1}} \la \ab', \hat\btheta_{k, \ell_1} \ra  \notag
            \\
            &= \la \ab_k^*, \hat\btheta_{k, \ell_1} \ra - \la \ab_{\max}, \hat\btheta_{k, \ell_1} \ra \notag\\
            &\ge \la \ab_k^*, \btheta^* \ra - \la \ab_{\max}, \btheta^* \ra - \left|\la \ab_k^*, \hat\btheta_{k, \ell_1} \ra - \la \ab_k^*, \btheta^* \ra\right| - \left|\la \ab_{\max}, \hat\btheta_{k, \ell_1} \ra - \la \ab_{\max}, \btheta^* \ra\right| \notag
            \\&\ge -2^{-\ell_1 + 1} \cdot \hat\beta_{k, \ell_1}, \notag
        \end{align}
        where the last inequality holds due to \eqref{add:002}
     with the fact that $ \la \ab_k^*, \btheta^* \ra \ge \la \ab_{\max}, \btheta^* \ra $. Therefore, according to the Line \ref{alg1:line:12} (Algorithm \ref{alg:1}), the optimal action $\btheta^* \in \cA_{k, \ell_1 + 1}$. Therefore, by induction, we complete the proof of Lemma \ref{lemma:confset}
    \end{proof}
    
    \begin{lemma} \label{lemma:regret-layer}
        Suppose for all $k \ge 1$ and all $\ab \in \cD_k$, we have $\|\ab\|_2 \le A, \|\btheta^*\|_2 \le 1$. If $\cE_{\conf}$ occurs and $\{\beta_{k, \ell}\}_{k\ge 1, \ell \in [L]}$ satisfies the requirement in Lemma \ref{lemma:confset}, then for all $\ell \in [L] \backslash \{1\}$, the regret incurred by the index set $\Psi_{T + 1, \ell}$ is bounded as follows : 
        \begin{align} 
            \sum_{\tau \in \Psi_{K + 1, \ell}} \big(\la \ab_{\tau}^*, \btheta^* \ra - \la \ab_{\tau}, \btheta^* \ra\big) \le \tilde{O}\left(d \cdot 2^{\ell} \cdot \hat\beta_{K, \ell - 1}\right). \notag
        \end{align}
    \end{lemma}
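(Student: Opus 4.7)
The plan is to bound the per-round regret incurred on each $\tau\in\Psi_{K+1,\ell}$ by $O(2^{-\ell}\,\hat\beta_{\tau,\ell-1})$ and then bound the cardinality $|\Psi_{K+1,\ell}|$ by $\tilde O(2^{2\ell}d)$ via a standard elliptical-potential argument on the weighted feature sequence. Multiplying the two bounds and using monotonicity of $\hat\beta_{\tau,\ell-1}$ in $\tau$ will produce the desired $\tilde O(d\cdot 2^\ell\cdot \hat\beta_{K,\ell-1})$.

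For the per-round bound, fix $\tau\in\Psi_{K+1,\ell}$ with $\ell\ge 2$. Inclusion in layer $\ell$ means the algorithm reached layer $\ell$ at round $\tau$, hence at layer $\ell-1$ we entered the shrinking branch (Line~\ref{alg1:line:12}), so $\cA_{\tau,\ell}$ is defined and every $\ab\in\cA_{\tau,\ell-1}$ satisfies $\|\ab\|_{\hat\bSigma_{\tau,\ell-1}^{-1}}\le 2^{-(\ell-1)}$. By Lemma~\ref{lemma:confset} we have $\ab_\tau^*\in\cA_{\tau,\ell}$, and $\ab_\tau\in\cA_{\tau,\ell}$ by construction, so both lie in $\cA_{\tau,\ell-1}$. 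Under $\cE_{\conf}$ and Cauchy--Schwarz, for every such $\ab$ one has $|\la\ab,\hat\btheta_{\tau,\ell-1}-\btheta^*\ra|\le 2^{-(\ell-1)}\hat\beta_{\tau,\ell-1}$. Combining this with the elimination rule of Line~\ref{alg1:line:12} applied to $\ab_\tau\in\cA_{\tau,\ell}$, namely $\la\ab_\tau,\hat\btheta_{\tau,\ell-1}\ra\ge\la\ab_\tau^*,\hat\btheta_{\tau,\ell-1}\ra-2\cdot 2^{-(\ell-1)}\hat\beta_{\tau,\ell-1}$ (using that $\ab_\tau^*\in\cA_{\tau,\ell-1}$ is not larger than the maximizer), converting both sides from $\hat\btheta_{\tau,\ell-1}$ to $\btheta^*$ yields
\begin{align*}
\la\ab_\tau^*,\btheta^*\ra-\la\ab_\tau,\btheta^*\ra\ \le\ 4\cdot 2^{-(\ell-1)}\,\hat\beta_{\tau,\ell-1}\ =\ 8\cdot 2^{-\ell}\,\hat\beta_{\tau,\ell-1}.
\end{align*}

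To control $|\Psi_{K+1,\ell}|$, I would use the fact that whenever $\tau\in\Psi_{K+1,\ell}$, the weight is chosen in Line~\ref{alg:1:line:15} precisely so that $\|w_\tau\ab_\tau\|_{\hat\bSigma_{\tau,\ell}^{-1}}=2^{-\ell}$, and the layer-$\ell$ covariance matrix is updated only by these rounds via $\hat\bSigma_{\tau+1,\ell}=\hat\bSigma_{\tau,\ell}+w_\tau^2\ab_\tau\ab_\tau^\top$. Applying the standard elliptical-potential (determinant-trace) lemma to the sequence $\{w_\tau\ab_\tau\}_{\tau\in\Psi_{K+1,\ell}}$ with initial regularizer $2^{-2\ell}\Ib$, and noting that each summand $\min(1,\|w_\tau\ab_\tau\|_{\hat\bSigma_{\tau,\ell}^{-1}}^2)=2^{-2\ell}$, gives
\begin{align*}
2^{-2\ell}\,|\Psi_{K+1,\ell}|\ \le\ 2\log\frac{\det(\hat\bSigma_{K+1,\ell})}{\det(\hat\bSigma_{1,\ell})}\ =\ \tilde O(d),
\end{align*}
so $|\Psi_{K+1,\ell}|=\tilde O(2^{2\ell}d)$. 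Combining with the per-round bound and the monotonicity of $\hat\beta_{\tau,\ell-1}$ in $\tau$ (clear from \eqref{eq:def:beta} since $\hat\Var$ and the log terms are non-decreasing in $\tau$) completes the proof:
\begin{align*}
\sum_{\tau\in\Psi_{K+1,\ell}}\!\!\!\big(\la\ab_\tau^*,\btheta^*\ra-\la\ab_\tau,\btheta^*\ra\big)\ \le\ 8\cdot 2^{-\ell}\,\hat\beta_{K,\ell-1}\cdot \tilde O(2^{2\ell}d)\ =\ \tilde O\!\left(d\cdot 2^{\ell}\,\hat\beta_{K,\ell-1}\right).
\end{align*}

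The main obstacle I anticipate is the elliptical-potential step: the layer-$\ell$ design matrix is built from reweighted vectors $w_\tau\ab_\tau$ with a non-standard regularizer $2^{-2\ell}\Ib$, and I need to verify that the determinant-trace inequality still yields a polylogarithmic bound in $K$ (rather than a bound that itself scales with $2^{2\ell}$). The weight choice conveniently normalizes the per-step potential to exactly $2^{-2\ell}$, which should make this work after rescaling the regularization factor out of the determinant ratio, but one must be careful about the dependence on $\|\ab_\tau\|_2\le A$ and on the logarithmic factors hidden in $\tilde O$.
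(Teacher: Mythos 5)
Your proof is correct and follows essentially the same route as the paper's: the identical per-round bound $8\cdot 2^{-\ell}\hat\beta_{\tau,\ell-1}$ obtained from Lemma \ref{lemma:confset}, the elimination rule at layer $\ell-1$, and Cauchy--Schwarz under $\cE_{\conf}$, followed by the same cardinality bound $|\Psi_{K+1,\ell}| = 2^{2\ell}\sum_{\tau}\|w_\tau\ab_\tau\|^2_{\hat\bSigma_{\tau,\ell}^{-1}} \le \tilde O(2^{2\ell}d)$ via the elliptical-potential lemma (your worry about the $2^{-2\ell}$ regularizer is handled exactly as you suspect: the $2^{2\ell}$ lands inside the logarithm, and since $2^{\ell}\le 2/\alpha = O(RK^{3/2})$ it stays polylogarithmic). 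The bound $\hat\beta_{\tau,\ell-1}\le\hat\beta_{K,\ell-1}$ you invoke is also used (implicitly) in the paper's own summation step.
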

    
    \begin{proof} 
        For all round $\tau \in \Psi_{K + 1, \ell}$, 
        we can deduce that $\ab_\tau, \ab_\tau^* \in \cA_{\tau, \ell}$ by Lemma \ref{lemma:confset}. Also, according to Line \ref{alg1:line:12} of Algorithm \ref{alg:1}, we have \begin{align}\la \ab_{\tau}^*, \hat \btheta_{\tau, \ell - 1} \ra - \la \ab_{\tau}, \hat \btheta_{\tau, \ell - 1} \ra \le 2^{-\ell + 2} \hat\beta_{\tau, \ell - 1}. \label{eq:b3b2} \end{align}
       Besides, from Line \ref{alg1:line:11} and the round  $\tau \in \Psi_{K+1, \ell}$, we have \begin{align}
            \|\ab_\tau\|_{\hat \bSigma_{\tau, \ell - 1}^{-1}} \le 2^{-\ell + 1}, \quad \|\ab_\tau^*\|_{\hat \bSigma_{\tau, \ell - 1}^{-1}} \le 2^{-\ell + 1}. \label{eq:b3b3}
        \end{align} 
        We further compute  \begin{align} 
            \la \ab_{\tau}^*, \btheta^*\ra - \la \ab_{\tau}, \btheta^*\ra &\le \la \ab_{\tau}^*, \hat \btheta_{\tau, \ell - 1} \ra + \left|\la \ab_{\tau}^*, \hat \btheta_{\tau, \ell - 1} - \btheta^* \ra\right| - \la \ab_{\tau}, \hat \btheta_{\tau, \ell - 1}\ra + \left|\la \ab_{\tau}, \hat \btheta_{\tau, \ell - 1} - \btheta^* \ra\right| \notag
            \\
            &\leq \la \ab_{\tau}^*, \hat \btheta_{\tau, \ell - 1} \ra - \la \ab_{\tau}, \hat \btheta_{\tau, \ell - 1}\ra \notag\\
            &\qquad +\|\ab_{\tau}^*\|_{\hat\bSigma_{\tau, \ell-1}^{-1}} \left\|\hat\btheta_{\tau, \ell-1} - \btheta^*\right\|_{\hat\bSigma_{\tau, \ell-1}} +\|\ab_{\tau}\|_{\hat\bSigma_{\tau, \ell-1}^{-1}} \left\|\hat\btheta_{\tau, \ell-1} - \btheta^*\right\|_{\hat\bSigma_{\tau, \ell-1}}
            \notag\\
            &\le 2^{-\ell + 2} \cdot \hat \beta_{\tau, \ell - 1} + 2^{-\ell + 1} \cdot \hat \beta_{\tau, \ell - 1} + 2^{-\ell + 1} \cdot \hat \beta_{\tau, \ell - 1} \notag
            \\&= 8 \cdot 2^{-\ell} \cdot \hat \beta_{\tau, \ell - 1}, \label{add:03}
        \end{align}
        where the second inequality holds due to Cauchy-Schwarz inequality and
        the last inequality holds due to Lemma \ref{lemma:confidence}, \eqref{eq:b3b2} and \eqref{eq:b3b3}. 
        Taking the summation over $\tau \in \Psi_{K+ 1, \ell}$, we have \begin{align} 
            \sum_{\tau \in \Psi_{K + 1, \ell}} \big(\la \ab_{\tau}^*, \btheta^* \ra - \la \ab_{\tau}, \btheta^* \ra\big) &\le  8 \cdot 2^{-\ell} \cdot \hat \beta_{K, \ell - 1} \left|\Psi_{K + 1, \ell}\right| \notag
            \\&\le 8 \cdot 2^{\ell} \cdot \hat\beta_{K, \ell - 1}  \cdot \sum_{k \in \Psi_{K + 1, \ell}} \|w_k \cdot \ab_k\|_{\hat\bSigma_{k, \ell}^{-1}}^2 \notag
            \\&\le 8 \cdot 2^{\ell} \cdot \hat\beta_{K, \ell - 1} \cdot 2 d \log\left(1 + 2^{2\ell}K \cdot A^2 / d\right),\notag
        \end{align}
        where the first inequality holds due to \eqref{add:03}, the second inequality holds since for all round $k \in \Psi_{k + 1, \ell}$, the weight $w_k$ satisfies $\|w_k \ab_k\|_{\hat\bSigma_{k, \ell}^{-1}} = 2^{-\ell}$, and the last inequality holds due to Lemma \ref{Lemma:abba}. 
    
    \end{proof} 
    \begin{lemma} \label{lemma:var}
        Let weight $w_i$ be defined in Algorithm \ref{alg:1}. 
        With probability at least $1 - 2\delta$, for all $k \ge 1$, $\ell \in [L]$, the following two inequalities hold simultaneously: \begin{align*}
            \sum_{i \in \Psi_{k + 1, \ell}} w_i^2 \sigma_i^2 \le 2 \sum_{i \in \Psi_{k + 1, \ell}} w_i^2 \epsilon_i^2 + \frac{14}{3} R^2 \log(4k^2 L / \delta), 
            \\
            \sum_{i \in \Psi_{k + 1, \ell}} w_i^2 \epsilon_i^2 \le \frac{3}{2} \sum_{i \in \Psi_{k + 1, \ell}} w_i^2 \sigma_i^2 + \frac{7}{3} R^2 \log(4k^2 L / \delta). 
        \end{align*}
    \end{lemma}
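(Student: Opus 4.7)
The plan is to realize $\sum_{i \in \Psi_{k+1,\ell}} w_i^2 \epsilon_i^2$ as the partial sum of a scalar martingale that concentrates around $\sum_{i \in \Psi_{k+1,\ell}} w_i^2 \sigma_i^2$ via Freedman's inequality, then decouple the two quantities using Young's inequality. Specifically, I would fix $\ell \in [L]$ and define, for each $i \ge 1$, the random variable
\[
    D_i = w_i^2 \ind\{i \in \Psi_{i+1,\ell}\} \big(\epsilon_i^2 - \sigma_i^2\big),
\]
so that $\sum_{i=1}^k D_i = \sum_{i \in \Psi_{k+1,\ell}} w_i^2\epsilon_i^2 - \sum_{i \in \Psi_{k+1,\ell}} w_i^2 \sigma_i^2$. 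The key observation is that $w_i$ and the event $\{i \in \Psi_{i+1,\ell}\}$ are determined by $\hat\bSigma_{i,\ell}$ and $\ab_i$, hence $\cG_i$-measurable, while $\epsilon_i$ is $\cG_{i+1}$-measurable with $\EE[\epsilon_i^2 \mid \cG_i] = \sigma_i^2$; thus $\{D_i\}$ is a martingale difference sequence with respect to $\{\cG_i\}$.

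Next I would verify the two quantitative inputs needed by Freedman's inequality. For the almost-sure bound, since $w_i \le 1$ (by construction $w_i = 2^{-\ell}/\|\ab_i\|_{\hat\bSigma_{i,\ell}^{-1}} \le 1$, as the else-branch is entered only when $\|\ab_i\|_{\hat\bSigma_{i,\ell}^{-1}} > 2^{-\ell}$) and $|\epsilon_i| \le R$, we get $|D_i| \le 2R^2$. For the conditional second moment,
\[
    \EE[D_i^2 \mid \cG_i] \le w_i^4 \EE[\epsilon_i^4 \mid \cG_i] \le R^2 w_i^4 \sigma_i^2 \le R^2 w_i^2 \sigma_i^2,
\]
so the cumulative conditional variance is at most $R^2 \sum_{i \in \Psi_{k+1,\ell}} w_i^2 \sigma_i^2$. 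Applying Freedman's inequality (Lemma~\ref{lemma:freedman}) with confidence $\delta/(4k^2 L)$ and a union bound over $k \ge 1$ and $\ell \in [L]$ therefore yields, with probability at least $1 - \delta/2$,
\[
    \Bigl| \textstyle\sum_{i \in \Psi_{k+1,\ell}} w_i^2 \epsilon_i^2 - \sum_{i \in \Psi_{k+1,\ell}} w_i^2 \sigma_i^2 \Bigr| \le \sqrt{2 R^2 L_\delta \cdot \textstyle\sum_{i \in \Psi_{k+1,\ell}} w_i^2 \sigma_i^2} + \tfrac{2}{3} R^2 L_\delta,
\]
where $L_\delta = \log(4k^2L/\delta)$.

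Finally, I would apply Young's inequality $\sqrt{ab} \le \tfrac{1}{2\gamma} a + \tfrac{\gamma}{2} b$ with appropriate choices of $\gamma$ to split $\sqrt{R^2 L_\delta \cdot S}$ between an $S$-term (where $S = \sum w_i^2 \sigma_i^2$) and an $R^2 L_\delta$-term. Choosing $\gamma$ so that the coefficient in front of $S$ is $1/2$ gives $|T - S| \le \tfrac{1}{2} S + c R^2 L_\delta$ for an absolute constant $c$; rearranging yields the two inequalities $S \le 2T + \tfrac{14}{3} R^2 L_\delta$ and $T \le \tfrac{3}{2} S + \tfrac{7}{3} R^2 L_\delta$ after tracking the constants carefully. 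The second stated inequality actually comes from a symmetric Freedman bound on $-D_i$, which I would establish in parallel (costing another $\delta/2$ in the union bound, for the total $1-2\delta$ in the claim).

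The main obstacle is bookkeeping rather than substance: one has to be careful that the indicator $\ind\{i \in \Psi_{i+1,\ell}\}$ truly is $\cG_i$-measurable (so that $D_i$ is genuinely a martingale difference) and that the constants produced by Young's inequality match the prescribed values $14/3$ and $7/3$. No new concentration tool beyond the scalar Freedman inequality is required, because the estimand here is already a scalar sum — the heavy lifting of the paper's vector-valued Freedman bound in Theorem~\ref{thm:bernstein1} is not needed for this lemma.
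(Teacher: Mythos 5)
Your proposal is correct and follows essentially the same route as the paper: treat $w_i^2(\epsilon_i^2-\sigma_i^2)$ as a scalar martingale difference sequence (with $w_i\le 1$, $|\epsilon_i|\le R$, and conditional variance bounded by $R^2\sum_{i\in\Psi_{k+1,\ell}}w_i^2\sigma_i^2$), apply the scalar Freedman inequality with union bounds over $k$ and $\ell$, and then use Young's inequality and rearrangement to obtain the two stated inequalities with constants $14/3$ and $7/3$. The only (harmless) blemishes are minor bookkeeping: your a.s.\ bound $|D_i|\le 2R^2$ is slightly inconsistent with the $\tfrac{2}{3}R^2\log(\cdot)$ term in your display (either $M=R^2$ or $M=2R^2$ works and still yields the stated constants), and the $\delta$-accounting for the two-sided versus one-sided Freedman bound is a bit redundant but still lands within the claimed $1-2\delta$.
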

    For simplicity, we denote $\cE_{\var}$ as the event such that the two inequalities in Lemma \ref{lemma:var} holds. 
    \begin{proof} 
          We first consider a fixed layer $\ell \in [L]$. 
        For the gap between $\sum\limits_{i \in \Psi_{k + 1, \ell}} w_i^2 \sigma_i^2$ and $\sum\limits_{i \in \Psi_{k + 1, \ell}} w_i^2 \epsilon_i^2$, according to the definition, we have
        \begin{align*} 
            \text{for}\ \forall i \ge 1,\ \ &\EE\left[\epsilon_i^2 - \sigma_i^2 | \ab_{1:i}, r_{1:i - 1}\right] = 0, \\
            \sum_{i \in \Psi_{k + 1, \ell}} \EE\left[w_i^2 (\epsilon_i^2 - \sigma_i^2)^2| \ab_{1:i}, r_{1:i - 1}\right] &\le \sum_{i \in \Psi_{k + 1, \ell}}  \EE\left[w_i^2\epsilon_i^4| \ab_{1:i}, r_{1:i - 1}\right] \le R^2 \sum_{i \in \Psi_{k + 1, \ell}} w_i^2 \sigma_i^2, 
        \end{align*}
        where the first inequality holds due to $\text{Var}[x]\leq \EE[x^2]$ and the second inequality holds due to $|\epsilon_i|\leq R$ and $\EE\left[\epsilon_i^2| \ab_{1:i}, r_{1:i - 1}\right]= \sigma_i^2$.
        Applying Freedman's inequality (Lemma \ref{lemma:freedman}) with $\{\epsilon_i^2\}_{i \in  \Psi_{k + 1, \ell}}$ and taking a union bound for all $k\ge 1$
        , with probability at least $1 - 2\delta / L$, for all $k \ge 1$, the following inequality holds 
        \begin{align*} 
            \left|\sum_{i \in \Psi_{k + 1, \ell}} w_i^2 (\sigma_i^2 - \epsilon_i^2)\right| &\le \sqrt{2R^2 \sum_{i \in \Psi_{k + 1, \ell}} w_i^2 \sigma_i^2 \log(4k^2 L / \delta)} + \frac{2}{3} \cdot 2R^2 \log(4k^2 L / \delta) 
            \\&\le \frac{1}{2} \sum_{i \in \Psi_{k + 1, \ell}} w_i^2 \sigma_i^2 + \frac{7}{3}R^2 \log(4k^2 L / \delta),  
        \end{align*}
    where the last inequality holds due to Young's inequality.
        Rearranging the above inequality, we conclude that $\PP(\cE_{\var}) \ge 1 - 2\delta$ by applying union bound over all $\ell \in [L]$. Thus, we complete the proof of Lemma \ref{lemma:var}.
    \end{proof}
    
    \begin{lemma} \label{lemma:varest}
        Suppose that $\|\btheta^*\|_2 \le 1$. Let weight $w_i$ be defined in Algorithm \ref{alg:1}. 
        On the event $\cE_{\conf}$ and $\cE_{\var}$ (defined in Lemma \ref{lemma:confidence}, \ref{lemma:var}), for all $k \ge 1$, $\ell \in [L]$ such that $2^\ell \ge 64 \sqrt{\log\left(4(k + 1)^2 L /\delta\right)}$, we have the following inequalities: 
        \begin{align*} 
            \sum_{i \in \Psi_{k + 1, \ell}} w_i^2 \sigma_i^2 \le 8 \sum_{i \in \Psi_{k + 1, \ell}} w_i^2 \left(r_i - \la \hat\btheta_{k + 1, \ell}, \ab_i \ra \right)^2 + 6R^2 \log(4(k + 1)^2 L / \delta) + 2^{-2\ell + 4}, \\
            \sum_{i \in \Psi_{k + 1, \ell}} w_i^2 \left(r_i - \la \hat\btheta_{k + 1, \ell}, \ab_i \ra \right)^2 \le \frac{3}{2} \sum_{i \in \Psi_{k + 1, \ell}} w_i^2 \sigma_i^2 + \frac{7}{3} R^2 \log(4k^2 L / \delta) + 2^{-2\ell}. 
        \end{align*}
    \end{lemma}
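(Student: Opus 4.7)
The plan is to prove the two inequalities by combining Lemma~\ref{lemma:confidence} (control of $\|\hat{\btheta}_{k+1,\ell}-\btheta^*\|_{\hat{\bSigma}_{k+1,\ell}}$), Lemma~\ref{lemma:var} (the two-sided comparison between $\sum w_i^2\sigma_i^2$ and $\sum w_i^2\epsilon_i^2$), and the fact that $\hat{\btheta}_{k+1,\ell}$ is the exact minimizer of the weighted ridge objective that defines it. For brevity, write $S=\sum_{i\in\Psi_{k+1,\ell}}w_i^2\sigma_i^2$ and $\hat{S}=\sum_{i\in\Psi_{k+1,\ell}}w_i^2(r_i-\la\hat{\btheta}_{k+1,\ell},\ab_i\ra)^2$.

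For the second (upper) inequality, the natural route is to exploit optimality: since $\hat{\btheta}_{k+1,\ell}$ minimizes $\sum w_i^2(r_i-\la\btheta,\ab_i\ra)^2+2^{-2\ell}\|\btheta\|_2^2$, plugging in the competitor $\btheta^*$ and using $\|\btheta^*\|_2\le 1$ yields $\hat{S}+2^{-2\ell}\|\hat{\btheta}_{k+1,\ell}\|_2^2\le \sum w_i^2\epsilon_i^2+2^{-2\ell}$. Dropping the nonnegative regularization term on the left and invoking the second direction of Lemma~\ref{lemma:var} to replace $\sum w_i^2\epsilon_i^2$ by $\frac{3}{2}S+\frac{7}{3}R^2\log(4k^2L/\delta)$ gives exactly the claimed bound.

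For the first (lower) inequality, I would start from the decomposition $\epsilon_i=(r_i-\la\hat{\btheta}_{k+1,\ell},\ab_i\ra)+\la\hat{\btheta}_{k+1,\ell}-\btheta^*,\ab_i\ra$ and apply a Young-type inequality $(a+b)^2\le(1+\eta_1)a^2+(1+1/\eta_1)b^2$ termwise. Because $\sum_{i\in\Psi_{k+1,\ell}}w_i^2\ab_i\ab_i^\top\preceq \hat{\bSigma}_{k+1,\ell}$, the resulting cross sum is bounded by $\|\hat{\btheta}_{k+1,\ell}-\btheta^*\|_{\hat{\bSigma}_{k+1,\ell}}^2$, to which I apply Lemma~\ref{lemma:confidence} and then square its three-term bound via $(x+y+z)^2\le(1+2\eta_2)x^2+(2+1/\eta_2)(y^2+z^2)$; this isolates the $S$-dependent piece as $(1+2\eta_2)\cdot 256\cdot 2^{-2\ell}S\log(\cdot)$, while the remaining pieces are of order $R^2\log^2(\cdot)\cdot 2^{-2\ell}$ and $2^{-2\ell}$. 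The threshold $2^{\ell}\ge 64\sqrt{\log(4(k+1)^2L/\delta)}$ enforces $2^{-2\ell}\log(\cdot)\le 2^{-12}$, which makes the $S$-coefficient arbitrarily small. Feeding this back into the first direction of Lemma~\ref{lemma:var}, $S\le 2\sum w_i^2\epsilon_i^2+\frac{14}{3}R^2\log(\cdot)$, and rearranging to isolate $S$ on the left yields $S\le 8\hat{S}+6R^2\log(4(k+1)^2L/\delta)+2^{-2\ell+4}$ once $\eta_1,\eta_2$ are tuned so that the residual coefficient on $S$ after rearrangement is exactly $\frac{1}{2}$.

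The main obstacle is calibrating the constants $8$, $6$ and $2^{-2\ell+4}$ precisely: a naive application of $(a+b+c)^2\le 3(a^2+b^2+c^2)$ combined with $(a+b)^2\le 2a^2+2b^2$ produces a leading factor of $16$ on $\hat{S}$ rather than $8$, so the Young-type parameters must be chosen so that $(1+1/\eta_1)(1+2\eta_2)\cdot 256\cdot 2^{-2\ell}\log(\cdot)\le \frac{1}{4}$ and the final $S$-coefficient after division equals $\frac{1}{2}$. The $R^2\log^2$ contributions collapse to $O(R^2\log)$ via the same threshold, and the several $2^{-2\ell}$ slacks aggregate to form the single $2^{-2\ell+4}$ term. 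Aside from this constant tracking, the argument is a mechanical chaining of the three ingredients.
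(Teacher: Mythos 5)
Your proposal is correct and follows essentially the same route as the paper: the second inequality via the ridge-minimizer comparison with $\btheta^*$ plus Lemma \ref{lemma:var}, and the first via the residual decomposition, bounding the cross term by $\|\hat\btheta_{k+1,\ell}-\btheta^*\|_{\hat\bSigma_{k+1,\ell}}^2$ through Lemma \ref{lemma:confidence}, using the threshold $2^{\ell}\ge 64\sqrt{\log(4(k+1)^2L/\delta)}$ to shrink the $S$-dependent piece, and then rearranging via the first direction of Lemma \ref{lemma:var}. Your worry about a leading factor of $16$ is unnecessary: the paper's fixed choices $(a+b)^2\le 2a^2+2b^2$ (applied first to $\epsilon_i$, then to the confidence radius with the $\sqrt{S}$ term split off alone) already absorb only $\tfrac12 S$ and yield the coefficient $8$ on $\hat S$ directly, so no tuning of Young parameters is needed.
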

    
    \begin{proof} 
        Let $\ell$ be an arbitrary index in $[L]$. 
        By the definition of events $\cE_{\var}$, we have \begin{align} 
            \sum_{i \in \Psi_{k + 1, \ell}} w_i^2 \sigma_i^2 &\le 2 \sum_{i \in \Psi_{k + 1, \ell}} w_i^2 \epsilon_i^2 + \frac{14}{3} R^2 \log(4k^2 L / \delta) \notag
            \\&\le 4 \sum_{i \in \Psi_{k + 1, \ell}} w_i^2 \left(r_i - \la \hat\btheta_{k + 1, \ell}, \ab_i \ra \right)^2 + 4\sum_{i \in \Psi_{k + 1, \ell}} w_i^2 \left[\epsilon_i - \left(r_i - \la \hat\btheta_{k + 1, \ell}, \ab_i \ra\right)\right]^2 \notag \\&\quad + \frac{14}{3} R^2 \log(4k^2 L / \delta), \label{eq:varest:0}
        \end{align}
        where the last inequality holds due to $(a + b)^2 \le 2a^2 + 2b^2$.  In addition, the gap between $\epsilon_i$ and $r_i - \la \hat\btheta_{k + 1, \ell}, \ab_i \ra$ can be upper bounded by \begin{align} 
            &\sum_{i \in \Psi_{k + 1, \ell}} w_i^2 \left[\epsilon_i - \left(r_i - \la \hat\btheta_{k + 1, \ell}, \ab_i \ra\right)\right]^2\notag\\
            &= \sum_{i \in \Psi_{k + 1, \ell}} w_i^2 \left(\la \hat\btheta_{k + 1, \ell} - \btheta^*, \ab_i \ra\right)^2 \notag
            \\&= \sum_{i \in \Psi_{k + 1, \ell}}\left(\hat\btheta_{k + 1, \ell} - \btheta^*\right)^\top (w_i \ab_i) \cdot (w_i \ab_i)^\top \left(\hat\btheta_{k + 1, \ell} - \btheta^*\right) \notag
            \\&\le \left(\hat\btheta_{k + 1, \ell} - \btheta^*\right)^\top \hat \bSigma_{k + 1, \ell} \left(\hat\btheta_{k + 1, \ell} - \btheta^*\right) \notag
            \\&\le \Big( 16 \cdot 2^{-\ell} \sqrt{\sum_{i \in \Psi_{k + 1, \ell}} w_i^2 \sigma_i^2\log(4(k + 1)^2 L / \delta)}  + 6 \cdot 2^{-\ell} R \log(4(k + 1)^2 L / \delta) + 2^{-\ell + 1}\Big)^2, \label{eq:varest:1}
        \end{align}
        where the first inequality holds due to $\hat \bSigma_{k + 1, \ell} \succeq w_i^2\ab_i\ab_i^{\top}$
    and the last inequality holds due to Lemma~ 
     \ref{lemma:confidence}. 
        From \eqref{eq:varest:1}, when $2^\ell \ge 64 \sqrt{\log\left(4(k + 1)^2 L /\delta\right)}$, we have\begin{align} 
            \sum_{i \in \Psi_{k + 1, \ell}} w_i^2 &\left[\epsilon_i - \left(r_i - \la \hat\btheta_{k + 1, \ell}, \ab_i \ra\right)\right]^2 \notag \\&\le \frac{1}{8} \sum_{i \in \Psi_{k + 1, \ell}} w_i^2 \sigma_i^2 + 2\left(6 \cdot 2^{-\ell} R \log(4(k + 1)^2 L / \delta) + 2^{-\ell + 1}\right)^2. \label{eq:varest:2}
        \end{align} 
    where the inequality holds due to \eqref{eq:varest:1} with the fact that $(a+b)^2\leq 2a^2 +2 b^2$.
        Substituting \eqref{eq:varest:2} into \eqref{eq:varest:0}, we have \begin{align*}
            \sum_{i \in \Psi_{k + 1, \ell}} w_i^2 \sigma_i^2 &\le 4 \sum_{i \in \Psi_{k + 1, \ell}} w_i^2 \left(r_i - \la \hat\btheta_{k + 1, \ell}, \ab_i \ra \right)^2 + 4\sum_{i \in \Psi_{k + 1, \ell}} w_i^2 \left[\epsilon_i - \left(r_i - \la \hat\btheta_{k + 1, \ell}, \ab_i \ra\right)\right]^2 \notag \\&\quad + \frac{14}{3} R^2 \log(4k^2 L / \delta)
            \\&\le 4 \sum_{i \in \Psi_{k + 1, \ell}} w_i^2 \left(r_i - \la \hat\btheta_{k + 1, \ell}, \ab_i \ra \right)^2 + \frac{1}{2}  \sum_{i \in \Psi_{k + 1, \ell}} w_i^2 \sigma_i^2 \\&\quad + 2\left(6 \cdot 2^{-\ell} R \log(4(k + 1)^2 L / \delta) + 2^{-\ell + 1}\right)^2 + \frac{14}{3} R^2 \log(4k^2 L / \delta)
            \\&\le 8 \sum_{i \in \Psi_{k + 1, \ell}} w_i^2 \left(r_i - \la \hat\btheta_{k + 1, \ell}, \ab_i \ra \right)^2 + 6R^2 \log(4(k + 1)^2 L / \delta) + 2^{-2\ell + 4},
        \end{align*}
        where the last inequality holds due to the fact that $x\leq x/2 +y$ implies $x\leq 2y$. Thus, we complete the proof of the first part of Lemma \ref{lemma:varest}. 
    
        For the second part, note that $\btheta_{k + 1, \ell}$ is the minimizer of the following weighted ridge regression 
        \begin{align*} 
           \btheta_{k + 1, \ell} \leftarrow \arg\min_{\btheta \in \RR^d} \sum_{i \in \Psi_{k + 1, \ell}} w_i^2 \big(r_i - \la \btheta, \ab_i \ra\big)^2 + 2^{-2\ell} \|\btheta\|_2^2. 
        \end{align*}
        Thus, we have \begin{align*} 
            \sum_{i \in \Psi_{k + 1, \ell}} w_i^2 \left(r_i - \la \hat\btheta_{k + 1, \ell}, \ab_i \ra \right)^2 \le \sum_{i \in \Psi_{k + 1, \ell}} w_i^2 \big(r_i - \la \btheta^*, \ab_i \ra \big)^2 + 2^{-2\ell} \|\btheta^*\|_2^2
            \leq 
            \sum_{i \in \Psi_{k + 1, \ell}} w_i^2 \epsilon_i^2 + 2^{-2\ell}, 
        \end{align*}
        where the second inequality holds due to $\|\btheta^*\|_2\leq 1$. Combining the result in Lemma \ref{lemma:var}, we can further conclude that
     \begin{align*} 
            \sum_{i \in \Psi_{k + 1, \ell}} w_i^2 \left(r_i - \la \hat\btheta_{k + 1, \ell}, \ab_i \ra \right)^2 &\leq  \sum_{i \in \Psi_{k + 1, \ell}} w_i^2 \epsilon_i^2 + 2^{-2\ell}
            \notag\\
            &\le \frac{3}{2} \sum_{i \in \Psi_{k + 1, \ell}} w_i^2 \sigma_i^2 + \frac{7}{3} R^2 \log(4k^2 L / \delta) + 2^{-2\ell}. 
        \end{align*}
        Thus, we complete the proof of Lemma \ref{lemma:varest}.
    \end{proof}

    \begin{proof}[Proof of Theorem \ref{thm:regret1}]
        Applying a union bound on event $\cE_{\conf}$and $\cE_{\var}$ defined in Lemma \ref{lemma:confidence} and \ref{lemma:var}, we have $P(\cE_{\conf} \cap \cE_{\var}) \ge 1 - 3\delta$. In the remaining proof, we suppose that $\cE_{\conf}, \cE_{\var}$ hold simultaneously. 
        For simplicity, let $\ell^* = \lceil \frac{1}{2} \log_2 \log\left(4(K + 1)^2 L /\delta\right) \rceil + 8$. 
     By  Lemma \ref{lemma:regret-layer} and Lemma \ref{lemma:varest}, we have for all $\ell \in [L] \backslash [\ell^*]$, \begin{align}
            \hat\beta_{K, \ell-1} \ge 16 \cdot 2^{-(\ell-1)} \sqrt{\sum_{i \in \Psi_{K, \ell-1}} w_i^2 \sigma_i^2\log(4K^2 L / \delta)} + 6 \cdot 2^{-\ell} R \log(4K^2 L / \delta) + 2^{-\ell}, \notag
        \end{align}
     which further implies
     \begin{align} 
            \sum_{\tau \in \Psi_{K + 1, \ell}} \left(\la \ab_{\tau}^*, \btheta^* \ra - \la \ab_{\tau}, \btheta^* \ra\right) &\le \tilde{O}\left(d \cdot 2^{\ell} \cdot \hat\beta_{K, \ell - 1}\right) \notag
            \\&\le \tilde{O} \left(d \sqrt{\sum_{k = 1}^K w_k^2 \left(r_k - \la \hat\btheta_{K + 1, \ell}, \ab_k \ra \right)^2 + R^2 + 1} + R\right) \notag
            \\&\le \tilde{O} \left(d \sqrt{\sum_{k = 1}^K \sigma_k^2} + dR + d\right), \label{eq:regret1:0}
        \end{align}
        where the first inequality holds due to Lemma \ref{lemma:regret-layer},
        the second inequality holds due to \eqref{eq:def:beta} and the last inequality follows from Lemma \ref{lemma:varest}. 
    
        For each round $k \in [K] \backslash \left(\bigcup_{\ell \in [L]} \Psi_{K + 1, \ell} \right) := \Psi_{K + 1, L + 1}$, we set $\ell_k$ as the value of layer $\ell$ such that the while loop in Algorithm \ref{alg:1} stops. Therefore, we have   \begin{align} 
            \sum_{k \in [K] \backslash \left(\bigcup_{\ell \in [L]} \Psi_{K + 1, \ell} \right)} \big(\la \ab_{k}^*, \btheta^* \ra - \la \ab_{k}, \btheta^* \ra\big)
            &\le \sum_{k \in \Psi_{K + 1, L + 1}} \left(\la \ab_{k}, \hat \btheta_{k, \ell_k} \ra + \hat \beta_{k, \ell_k} \cdot \alpha - \la \ab_{k}, \btheta^* \ra\right) \notag
            \\&\le \sum_{k \in \Psi_{K + 1, L + 1}} \left(\hat \beta_{k, \ell_k} \cdot \alpha + \alpha \cdot \|\btheta^* - \hat \btheta_{k, \ell_k}\|_{\hat\bSigma_{k, \ell_k}}\right) \notag
            \\&\le \sum_{k \in \Psi_{K + 1, L + 1}} 2 \alpha \cdot \hat \beta_{k, \ell_k} \notag
            \\&\le K \cdot \tilde{O} \left(1 / K\right) = \tilde{O}(1), \label{eq:regret1:1}
        \end{align}
        where the first inequality holds due to the selection rule of action $\ab_k$ (Line \ref{line:selection} in Algorithm \ref{alg:1}) with Lemma \ref{lemma:confidence}, Lemma \ref{lemma:varest} and the fact that $\ab_k^* \in \cA_{k,\ell_k}$ (Lemma \ref{lemma:confset}), the second inequality holds due to Cauchy-Schwarz inequality, the third inequality follows from Lemma \ref{lemma:confidence} and the last inequality follows from the definition of $\alpha$.
    
        Finally, for layer $\ell \in [\ell^*]$ and round $\tau \in \Psi_{K + 1, \ell}$, we have \begin{align} 
            \sum_{\tau \in \Psi_{K + 1, \ell}} \big(\la \ab_{\tau}^*, \btheta^* \ra - \la \ab_{\tau}, \btheta^* \ra\big) \le 2 \left|\Psi_{K + 1, \ell}\right| =  2^{2\ell+1} \sum_{\tau \in \Psi_{K + 1, \ell}} \|w_\tau\ab_\tau\|_{\hat \bSigma_{\tau, \ell}}^2 \le \tilde{O}(d), \label{eq:regret1:2}
        \end{align}
        where the first inequality holds since the reward is in the range $[-1,1]$, the equation follows from the fact that $\|w_\tau\ab_\tau\|_{\hat \bSigma_{\tau, \ell}}=2^{-\ell}$ holds for all $\tau \in \Psi_{K + 1, \ell}$
        and the last inequality follows from Lemma \ref{Lemma:abba} with the fact that $2^{\ell^*} \le 128\sqrt{\log(4(K + 1)^2 L /\delta)}$ is bounded by a logarithmic term. 
        Putting \eqref{eq:regret1:0}, \eqref{eq:regret1:1}, \eqref{eq:regret1:2} together, we have \begin{align*} 
            \regret(K) \le \tilde{O}\left(d \sqrt{\sum_{k = 1}^K \sigma_k^2} + dR  + d\right). 
        \end{align*}
        Thus, we complete the proof of Theorem \ref{thm:regret1}.
    \end{proof}

    \section{Proofs from Section \ref{sec-4}}
    For $k \in [K]$, $h \in [H]$, let $\cF_{k,h}$ be the $\sigma$-algebra generated by the random variables representing the state-action pairs up to and including those that appear stage $h$ of episode $k$. More specifically, $\cF_{k,h}$ is generated by
    \begin{align*}
    s_1^1,a_1^1, \dots, s_h^1,a_h^1, &\dots, s_H^1,a_H^1\,, \\
    s_1^2,a_1^2, \dots, s_h^2,a_h^2, &\dots, s_H^2,a_H^2\,, \\
    \vdots \\
    s_1^k,a_1^k,\dots, s_h^k,a_h^k & \,.
    \end{align*}
    For simplicity, we define the following indicator sequence $I_h^k$  for all $(k, h) \in [K] \times [H]$
    : \begin{align} 
        I_h^k = \mathds{1}\left\{\forall \ell \in [L], \det\left(\hat\bSigma_{k, h, \ell}\right) / \det\left(\hat\bSigma_{k, 1, \ell}\right) \le 4\right\}. \label{eq:def:ind}
    \end{align}
    For each $1\leq h_1 \leq h_2 \leq H$, since $\hat\bSigma_{k, h_2, \ell} \succeq \hat\bSigma_{k, h_1, \ell}$, the indicator function is monotonic (e.g., $I_{h_1}^k\leq I_{h_2}^k$). In addition, the following lemma provides an upper bound for the number of episodes when the determinant of covariance matrix grows sharply. 
    \begin{lemma} \label{lemma:g}
        If the indicator function $I_h^k$ is defined as in \eqref{eq:def:ind}, then for each $k\in[K]$, we have \begin{align*}
            \sum_{i = 1}^k (1 - I_H^i) \le \frac{dL}{2}  \log \frac{\lambda + k H /d}{\lambda} + d L^2. 
        \end{align*}
    \end{lemma}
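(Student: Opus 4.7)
\noindent\textbf{Proof plan for Lemma \ref{lemma:g}.}
The plan is to turn the event $\{I_H^i=0\}$ into a statement about determinant growth in at least one layer, then apply a layer-by-layer telescoping argument combined with an elliptical-potential style trace bound.

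First I would observe that, by definition of $I_H^i$ in \eqref{eq:def:ind}, the event $I_H^i=0$ occurs if and only if there exists some layer $\ell\in[L]$ with $\det(\hat\bSigma_{i,H,\ell})/\det(\hat\bSigma_{i,1,\ell})>4$. Hence, writing $S_\ell := \{i\in[k]\,:\,\det(\hat\bSigma_{i,H,\ell})/\det(\hat\bSigma_{i,1,\ell})>4\}$, a union bound yields $\sum_{i=1}^k (1-I_H^i)\le \sum_{\ell=1}^L |S_\ell|$. So it suffices to bound each $|S_\ell|$.

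Next I would fix $\ell$ and perform a telescoping argument. Since the algorithm sets $\hat\bSigma_{i+1,1,\ell}=\hat\bSigma_{i,H+1,\ell}\succeq \hat\bSigma_{i,H,\ell}$, and each single-episode ratio is at least $1$, I can telescope the product over all episodes:
\begin{align*}
\prod_{i=1}^{k}\frac{\det(\hat\bSigma_{i,H,\ell})}{\det(\hat\bSigma_{i,1,\ell})} \;\le\; \prod_{i=1}^{k}\frac{\det(\hat\bSigma_{i,H+1,\ell})}{\det(\hat\bSigma_{i,1,\ell})} \;=\; \frac{\det(\hat\bSigma_{k+1,1,\ell})}{\det(\hat\bSigma_{1,1,\ell})}.
\end{align*}
Each $i\in S_\ell$ contributes a factor strictly larger than $4$, so $4^{|S_\ell|}\le \det(\hat\bSigma_{k+1,1,\ell})/\det(\hat\bSigma_{1,1,\ell})$, giving $|S_\ell|\le \log_4\!\big(\det(\hat\bSigma_{k+1,1,\ell})/\det(\hat\bSigma_{1,1,\ell})\big)$.

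To finish, I would upper bound this determinant ratio by a trace/AM--GM argument. The initial covariance is $2^{-2\ell}\lambda\Ib$, and each update adds $w_{k,h}^2\bphi\bphi^\top$ where the weight construction in Line \ref{alg2:line:17} together with $\ell_{k,h}=\min(\cL_{k,h})$ (Line \ref{alg2:line:15}) forces $\|\bphi\|_{\hat\bSigma_{k,h,\ell_{k,h}}^{-1}}\ge 2^{-\ell_{k,h}}$, hence $w_{k,h}\le 1$; combined with $\|\bphi\|_2\le 1$, each update contributes at most $1$ to the trace. Therefore $\tr(\hat\bSigma_{k+1,1,\ell})\le 2^{-2\ell}\lambda d + kH$, and AM--GM yields $\det(\hat\bSigma_{k+1,1,\ell})\le (2^{-2\ell}\lambda + kH/d)^d$, so the ratio is at most $(1+2^{2\ell}kH/(\lambda d))^d$. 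Taking $\log_4$ and using the elementary bound $\log(1+2^{2\ell}x)\le 2\ell\log 2 + \log(1+x)$ (valid since $2^{2\ell}\ge 1$), I get $|S_\ell|\le d\ell + \tfrac{d}{2}\log\tfrac{\lambda+kH/d}{\lambda}$. Summing over $\ell\in[L]$ produces $\sum_{\ell=1}^L d\ell = dL(L+1)/2\le dL^2$ and the stated $\tfrac{dL}{2}\log\tfrac{\lambda+kH/d}{\lambda}$ term.

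The only non-routine step is verifying $w_{k,h}\le 1$ so that the trace grows by at most $1$ per update; everything else is a standard elliptical-potential + telescoping calculation. I expect no genuine obstacle beyond bookkeeping the $2^{-2\ell}$ factor in the initial determinant, which is exactly what generates the additive $dL^2$ term (as opposed to the single-layer $dL$).
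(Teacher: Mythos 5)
Your proposal is correct and follows essentially the same route as the paper's proof: reduce $\{I_H^i=0\}$ to a per-layer determinant-growth event, telescope the determinant ratios across episodes using $\hat\bSigma_{i+1,1,\ell}=\hat\bSigma_{i,H+1,\ell}\succeq\hat\bSigma_{i,H,\ell}$, bound the final determinant by a trace/AM--GM argument using $w_{k,h}\le 1$ and $\|\bphi_V\|_2\le 1$, and sum over $\ell\in[L]$ to obtain the $\tfrac{dL}{2}\log\tfrac{\lambda+kH/d}{\lambda}+dL^2$ bound. The only cosmetic difference is that you index the bad set by the within-episode ratio while the paper uses the episode-to-episode ratio; the monotonicity step makes these interchangeable.
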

    \begin{proof} 
        For all layer $\ell \in [L]$, let $\cD_\ell$ be the set of indices $i \in [k]$ such that \begin{align*}\det\left(\hat\bSigma_{i + 1, 1, \ell}\right) / \det\left(\hat\bSigma_{i, 1, \ell}\right) > 4. \end{align*}
    According to the update rule of $\bSigma_{k,1,\ell}$,   $\bSigma_{k+1,1,\ell}\succeq\bSigma_{k,1,\ell}$ holds for all episode $k\in[K]$.
    Therefore, we have
    \begin{align}
        \det(\hat \bSigma_{k + 1, 1, \ell})/\det(\hat \bSigma_{1, 1, \ell})=
        \prod_{i=1}^k \det\left(\hat\bSigma_{i + 1, 1, \ell}\right) / \det\left(\hat\bSigma_{i, 1, \ell}\right) \ge 4^{|\cD_\ell|},\label{add:004}
    \end{align}
    where the inequality holds due to the definition of set $\cD_\ell$. In addition, the determinant of matrices $\hat \bSigma_{k + 1, 1, \ell}$ and $\hat \bSigma_{1, 1, \ell}$ is bounded by: 
         \begin{align*}
            \det(\hat \bSigma_{k + 1, 1, \ell})
            &\le \left(\text{tr}(\bSigma_{k + 1, 1, \ell}) / d\right)^d
            \le (2^{-2\ell}\lambda + k H /d)^d, \\
            \det(\hat \bSigma_{1, 1, \ell}) &= \big(2^{-2\ell} \cdot \lambda\big)^d, 
        \end{align*}
        where the first inequality holds since $\hat \bSigma_{k + 1, 1, \ell} \succeq \zero$, the
        last inequality holds due to $w_{k,i}\leq 1$ and $
        \|\bphi_{V_{i,h+1}}(s_h^i,a_h^i)\|_2\leq 1$. Combining these results,
        it holds that \begin{align*} 
            |\cD_\ell| \le \log_4 \left(\frac{(\lambda + 2^{2\ell}k H /d)^d}{\lambda^d}
            \right) \le \frac{d}{2} \log_2 \frac{\lambda + 2^{2\ell}k H /d}{\lambda} \le \frac{d}{2} \log_2\frac{\lambda + k H /d}{\lambda} + d \cdot \ell. 
        \end{align*}
    Finally, according to the definition of $\cD_\ell$ and 
    indicator function $I_h^k$, we have
    \begin{align*}
        \sum_{i = 1}^k (1 - I_H^i) \le \sum_{\ell \in [L]} |\cD_\ell| \le \frac{dL}{2}  \log_2 \frac{\lambda + k H /d}{\lambda} + dL^2.
    \end{align*}
    Thus, we complete the proof of Lemma \ref{lemma:g}.
    \end{proof}
    
    \begin{lemma}\label{lemma:card}
        Let $\Psi_{K + 1, \ell}$ be defined in \eqref{eq:def:index}. Then for all layer $\ell \in [L]$, it holds that $|\Psi_{K + 1, \ell}| \le 2d \log \big(1 + KH/({2^{-2\ell}d\lambda})\big)$. 
    \end{lemma}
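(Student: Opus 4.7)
The plan is to apply the standard elliptic-potential lemma (Lemma \ref{Lemma:abba}) to the chronologically ordered subsequence of weighted feature updates that Algorithm \ref{alg:2} performs on layer $\ell$. Observe that $\hat\bSigma_{\cdot,\cdot,\ell}$ is altered only at time-steps $(i,h)\in\Psi_{K+1,\ell}$ --- updates corresponding to other layers leave it untouched by construction of Algorithm \ref{alg:2} --- and at each such step the matrix gains $w_{i,h}^2\bphi_{V_{i,h+1}}(s_h^i,a_h^i)\bphi_{V_{i,h+1}}(s_h^i,a_h^i)^\top$. Hence $\{w_{i,h}\bphi_{V_{i,h+1}}(s_h^i,a_h^i)\}_{(i,h)\in\Psi_{K+1,\ell}}$, ordered by $(i,h)$, forms a bona fide self-normalised sequence whose initial regulariser is $\hat\bSigma_{0,H+1,\ell}=2^{-2\ell}\lambda\Ib$.

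The key structural fact driving the bound is the weight design in Line \ref{alg2:line:17} of Algorithm \ref{alg:2}: for every $(i,h)\in\Psi_{K+1,\ell}$ we have the uniform identity
\begin{align*}
\|w_{i,h}\bphi_{V_{i,h+1}}(s_h^i,a_h^i)\|_{\hat\bSigma_{i,h,\ell}^{-1}}=2^{-\ell},
\end{align*}
so each such feature contributes exactly $4^{-\ell}\le 1$ to the self-normalised potential. Moreover $w_{i,h}\le 1$ (because $\ell=\min\cL_{i,h}$ forces $\|\bphi_{V_{i,h+1}}(s_h^i,a_h^i)\|_{\hat\bSigma_{i,h,\ell}^{-1}}\ge 2^{-\ell}$), and the feature-boundedness in \eqref{eq:feature-bound} yields $\|\bphi_{V_{i,h+1}}(s_h^i,a_h^i)\|_2\le 1$, so each weighted feature has Euclidean norm at most $1$. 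With effective regulariser $2^{-2\ell}\lambda$, per-step norm bound $1$, and sequence length $|\Psi_{K+1,\ell}|\le KH$, Lemma \ref{Lemma:abba} produces
\begin{align*}
\sum_{(i,h)\in\Psi_{K+1,\ell}}\|w_{i,h}\bphi_{V_{i,h+1}}(s_h^i,a_h^i)\|_{\hat\bSigma_{i,h,\ell}^{-1}}^{2}\;\le\;2d\log\!\left(1+\frac{KH}{2^{-2\ell}d\lambda}\right).
\end{align*}

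Substituting the uniform-norm identity on the left-hand side turns the sum into a scalar multiple of $|\Psi_{K+1,\ell}|$, and a one-line rearrangement produces the claimed bound. I expect no genuine obstacle in this proof: it is a direct recasting of the standard elliptic potential lemma to the weighted setting introduced by Algorithm \ref{alg:2}. The only point that requires a moment of care is to verify that restricting the update stream to the subsequence indexed by $\Psi_{K+1,\ell}$ preserves the hypotheses of Lemma \ref{Lemma:abba} --- this is immediate from the layer-wise update rule, because between consecutive indices of $\Psi_{K+1,\ell}$ the matrix $\hat\bSigma_{\cdot,\cdot,\ell}$ is literally unchanged.
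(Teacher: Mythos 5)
Your proof is correct and takes essentially the same route as the paper's: the weight design forces $\|w_{i,h}\bphi_{V_{i,h+1}}(s_h^i,a_h^i)\|_{\hat\bSigma_{i,h,\ell}^{-1}}^2 = 2^{-2\ell}$ for every $(i,h)\in\Psi_{K+1,\ell}$, and the elliptic potential lemma (Lemma \ref{Lemma:abba}) applied to the layer-$\ell$ subsequence (with regularizer $2^{-2\ell}\lambda$ and unit-norm weighted features) bounds $2^{-2\ell}|\Psi_{K+1,\ell}|$ by $2d\log\big(1+KH/(2^{-2\ell}d\lambda)\big)$, exactly as in the paper. One caveat, which applies equally to the paper's own ``combining these results'' step: the final rearrangement actually yields $|\Psi_{K+1,\ell}|\le 2^{2\ell}\cdot 2d\log\big(1+KH/(2^{-2\ell}d\lambda)\big)$, so your ``one-line rearrangement'' (like the paper's) carries an extra $2^{2\ell}$ factor relative to the bound as stated.
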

    
    \begin{proof} 
        By the definition of $w_{k, h}$ in Algorithm \ref{alg:2}, \begin{align*} 
            \sum_{(k, h) \in \Psi_{K + 1, \ell}} \|w_{k, h} \bphi_{V_{k, h + 1}}(s_h^k, a_h^k)\|_{\hat \bSigma_{k, h, \ell}^{-1}}^2 = |\Psi_{K + 1, \ell}| \cdot 2^{-2\ell}. 
        \end{align*}
        On the other hand, by Lemma \ref{Lemma:abba}, we have \begin{align*} 
            \sum_{(k, h) \in \Psi_{K + 1, \ell}} \|w_{k, h} \bphi_{V_{k, h + 1}}(s_h^k, a_h^k)\|_{\hat \bSigma_{k, h, \ell}^{-1}}^2 \le 2d \log \frac{2^{-2\ell} d\lambda + KH}{2^{-2\ell}d\lambda}. 
        \end{align*}
        Combining these results, we further conclude that $|\Psi_{K + 1, \ell}| \le 2\cdot d \log \big(1 + KH/({2^{-2\ell}d\lambda})\big)$. Thus, we complete the proof of Lemma \ref{lemma:g}.
    \end{proof}
    
    \subsection{High-Probability Events}
    For simplicity, we define the stochastic transition noise $\epsilon_{k,h}$ and variance $\sigma_{k,h}$ as follows:
     \begin{align} 
        \epsilon_{k, h} &= V_{k, h + 1}(s_{h + 1}^k) - \left\la  \btheta^*, \bphi_{V_{k, h + 1}}(s_h^k, a_h^k) \right\ra, \notag \\
        \sigma_{k, h} &= \sqrt{\left[\VV V_{k, h + 1}\right](s_h^k, a_h^k)}. \label{eq:def:sigma}
    \end{align}
    With these notations, we further define the following high-probability events: 
    \begin{align} 
        &\cE_{\mathrm{c}} = \Big\{\forall k \ge 1, \ell \in [L], \|\hat \btheta_{k, \ell} - \btheta^* \|_{\hat \bSigma_{k, 1, \ell}} \le 16 \cdot 2^{-\ell} \sqrt{\sum_{(i, h) \in \Psi_{k, \ell}} w_{i, h}^2 \sigma_{i, h}^2 \log(4k^2H^2 L / \delta)} \notag 
        \\&\qquad \quad + 6 \cdot 2^{-\ell} \log(4k^2H^2 L /\delta) + 2^{-\ell} \sqrt{\lambda} \cdot B\Big\}, \label{eq:event:sub}
        \\
        &\cE_{\mathrm{var}'} = \left\{\forall k \ge 1, \sum_{(i, h) \in \Psi_{k, \ell}} w_{i, h}^2 \left|\epsilon_{i, h}^2 - \sigma_{i, h}^2\right| \le \frac{1}{2} \sum_{(i, h) \in \Psi_{k, \ell}} w_{i, h}^2 \sigma_{i, h}^2 + \frac{7}{3} \log\left(4k^2H^2 / \delta\right)\right\}.  \label{eq:event:var'}
    \end{align}
    
    \begin{lemma} \label{lemma:event:sub}
        Let $\cE_{\mathrm{c}}$ be defined in \eqref{eq:event:sub}. Then we have $\PP(\cE_{\mathrm{c}}) \ge 1 - \delta$. 
    \end{lemma}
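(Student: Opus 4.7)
The plan is to apply the Freedman-type self-normalized concentration inequality in Theorem \ref{thm:bernstein1} separately to each layer $\ell \in [L]$, and then take a union bound over $\ell$. Fix a layer $\ell$ and consider the chronologically ordered subsequence of indices $(i,h) \in \Psi_{k,\ell}$, which is exactly the set of stage-episode pairs at which $\ell_{i,h} = \ell$ (so the layer-$\ell$ estimator was updated). For each such $(i,h)$, let the ``feature'' be $\xb_{i,h} := w_{i,h}\bphi_{V_{i,h+1}}(s_h^i, a_h^i)$, the ``response'' be $w_{i,h} V_{k,h+1}(s_{h+1}^i)$, and the ``noise'' be $w_{i,h}\epsilon_{i,h}$, with $\epsilon_{i,h}$ as in \eqref{eq:def:sigma}.

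First I would verify the hypotheses of Theorem \ref{thm:bernstein1} along the subsequence. The filtration is generated by $\cF_{i,h}$ at the start of stage $h$ of episode $i$ (extended to include $s_{h+1}^i$ for the noise): $V_{k,h+1}$ is computed from data observed strictly before episode $i$ begins, and $w_{i,h}$ and $\hat\bSigma_{i,h,\ell}$ are $\cF_{i,h}$-measurable, while $\epsilon_{i,h}$ is $\cF_{i,h+1}$-measurable with $\EE[\epsilon_{i,h} \mid \cF_{i,h}] = 0$. By the assumption that the value functions are bounded in $[0,1]$ together with $w_{i,h} \le 1$ (which is immediate from its definition in Line \ref{alg2:line:17} since it never exceeds $1$), we have $|w_{i,h}\epsilon_{i,h}| \le 1$, so $R=1$ in the application. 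The conditional second moment satisfies $\EE[(w_{i,h}\epsilon_{i,h})^2 \mid \cF_{i,h}] \le w_{i,h}^2 \sigma_{i,h}^2$, so the variance bound $v_k$ may be taken as $\sum_{(i,h)\in\Psi_{k,\ell}} w_{i,h}^2 \sigma_{i,h}^2$. Crucially, by construction of $w_{i,h}$ in Line \ref{alg2:line:17} we have $\|\xb_{i,h}\|_{\hat\bSigma_{i,h,\ell}^{-1}} = 2^{-\ell}$, so we may take $\rho = 2^{-\ell}$. Finally, the design matrix built from $\xb_{i,h}$'s plus the initial regularization $2^{-2\ell}\lambda \Ib$ is precisely $\hat\bSigma_{k,1,\ell}$ by the update rules in Algorithm \ref{alg:2}, and the resulting ridge estimator coincides with $\hat\btheta_{k,\ell}$.

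With these identifications, Theorem \ref{thm:bernstein1} gives, with probability at least $1-\delta/L$, simultaneously for all $k \ge 1$,
\begin{align*}
\|\hat\btheta_{k,\ell} - \btheta^*\|_{\hat\bSigma_{k,1,\ell}} \le 16 \cdot 2^{-\ell}\sqrt{\textstyle\sum_{(i,h)\in\Psi_{k,\ell}} w_{i,h}^2 \sigma_{i,h}^2 \log(4k^2 L/\delta)} + 6\cdot 2^{-\ell}\log(4k^2 L/\delta) + \sqrt{2^{-2\ell}\lambda}\,\|\btheta^*\|_2,
\end{align*}
where the last term absorbs the $\sqrt{\lambda}\|\bmu^*\|_2$ bias correction from Theorem \ref{thm:bernstein1} applied with regularization $2^{-2\ell}\lambda$. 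Using $\|\btheta^*\|_2 \le B$ and inflating $k^2$ to $k^2 H^2$ in the logarithm (which only makes the bound larger) matches the stated form in \eqref{eq:event:sub}. A final union bound over $\ell \in [L]$ gives the failure probability $\delta$.

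The main delicate point I expect is the filtration/measurability bookkeeping for the subsequence $\Psi_{k,\ell}$: the indices $(i,h)$ that land in layer $\ell$ are themselves random and depend on the history. However, since the decision $\ell_{i,h}$ (Line \ref{alg2:line:15}) is $\cF_{i,h}$-measurable, and Theorem \ref{thm:bernstein1} only requires adaptedness with respect to a filtration, restricting to any predictable subset of times preserves the martingale structure, so this reduces to a routine verification rather than a genuine obstacle. Everything else is algebraic matching of the constants in Theorem \ref{thm:bernstein1} with those appearing in \eqref{eq:event:sub}.
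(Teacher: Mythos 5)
Your proposal is correct and follows essentially the same route as the paper: apply Theorem \ref{thm:bernstein1} layer by layer to the reweighted features (whose elliptical norm is uniformly $2^{-\ell}$, so $\rho = 2^{-\ell}$, with noise bounded by $1$ and variance proxy $\sum_{(i,h)\in\Psi_{k,\ell}} w_{i,h}^2\sigma_{i,h}^2$), absorb the ridge bias as $2^{-\ell}\sqrt{\lambda}B$, and union bound over $\ell \in [L]$. Your write-up is in fact more explicit than the paper's (measurability of the random index sets, the $k^2 \to k^2H^2$ inflation in the logarithm), but it is the same argument.
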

    
    \begin{proof} 
        From the definition of $\ell_{k, h}$ and $w_{k, h}$ in Algorithm \ref{alg:2}, we can deduce that for all $k \in [K], h \in [H]$, $\left\|w_{k, h} \bphi_{V_{k, h + 1}}(s_h^k, a_h^k) \right\|_{\hat \bSigma_{k, h, \ell_{k, h}}^{-1}} \le 2^{-\ell_{k, h}}$. According to Theorem \ref{thm:bernstein1}, for layer $\ell \in [L]$, we have with probability at least $1 - \delta / L$, for all $k \in [K]$: \begin{align} 
            \|\hat \btheta_{k, \ell} \|_{\hat \bSigma_{k, 1, \ell}} \le 16 \cdot 2^{-\ell} \sqrt{\sum_{(i, h) \in \Psi_{k, \ell}} w_{i, h}^2 \sigma_{i, h}^2 \log(4k^2H^2 L / \delta)} + 6 \cdot 2^{-\ell} \log(4k^2H^2 L /\delta) + 2^{-\ell} \sqrt{\lambda}  B. \notag
        \end{align}
    After applying a union bound over $\ell \in [L]$, we complete the proof of Lemma \ref{lemma:event:sub}.
    \end{proof}
    
    \begin{lemma} \label{lemma:var'}
        Let $\cE_{\mathrm{var}'}$ be defined in \eqref{eq:event:var'}. We have $\PP(\cE_{\mathrm{var}'}) \ge 1 - 2\delta$. 
    \end{lemma}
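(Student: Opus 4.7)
The plan is to mirror the proof of Lemma \ref{lemma:var} from the linear-bandit setting, adapted to the MDP filtration $\{\cF_{k,h}\}$. Fix a layer $\ell \in [L]$ and order the pairs $(i,h)$ lexicographically in time. Define
\[
Y_{i,h} \;:=\; w_{i,h}^2 \bigl(\sigma_{i,h}^2 - \epsilon_{i,h}^2\bigr)\,\mathds{1}\{\ell_{i,h} = \ell\}.
\]
The strategy is to verify that $\{Y_{i,h}\}$ is a bounded martingale difference sequence adapted to $\{\cF_{i,h+1}\}$, apply Freedman's inequality to obtain a tail bound on the absolute partial sums, and then invoke AM--GM to convert the $\sqrt{V_{k,\ell}}$ term into the desired $V_{k,\ell}/2$ term. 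The factor of $2$ in the $1 - 2\delta$ probability comes from applying Freedman separately to $Y_{i,h}$ and $-Y_{i,h}$ to obtain an absolute-value bound.

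For the predictability and boundedness checks, I would first observe that $V_{i,h+1}$ is constructed at the start of episode $i$ in Lines 7--9 of Algorithm \ref{alg:2}, hence is $\cF_{i,1}$-measurable; consequently $w_{i,h}$, $\ell_{i,h}$, and $\sigma_{i,h}^2$ (which depend only on $(s_h^i,a_h^i)$ and $V_{i,h+1}$) are $\cF_{i,h}$-measurable. Since $\EE[\epsilon_{i,h}^2 \mid \cF_{i,h}] = \sigma_{i,h}^2$ by \eqref{eq:def:sigma}, $\{Y_{i,h}\}$ is a martingale difference sequence. Boundedness $|Y_{i,h}| \le 1$ follows from $V_{i,h+1}\in[0,1]$ (which yields $|\epsilon_{i,h}| \le 1$ and $\sigma_{i,h}^2 \le 1$) together with $w_{i,h}\le 1$ (guaranteed by the choice rule in Lines 14--17 whenever $\ell_{i,h}\in[L]$). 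The conditional second moment satisfies
\[
\EE\!\left[Y_{i,h}^2 \mid \cF_{i,h}\right] \le w_{i,h}^4\,\EE\!\left[\epsilon_{i,h}^4 \mid \cF_{i,h}\right]\mathds{1}\{\ell_{i,h}=\ell\} \le w_{i,h}^2 \sigma_{i,h}^2 \,\mathds{1}\{\ell_{i,h}=\ell\},
\]
using $|\epsilon_{i,h}|\le 1$ and $w_{i,h}^2\le 1$.

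Applying Freedman's inequality with failure probability $\delta/(4 k^2 L)$ per choice of $(k,\ell,\text{sign})$ and union-bounding over $k\ge 1$, $\ell\in[L]$, and the two signs yields total failure probability at most $2\delta$ via $\sum_{k\ge 1} 1/(2k^2) \le 1$. The resulting tail inequality
\[
\Bigl|\textstyle\sum_{(i,h) \in \Psi_{k,\ell}} w_{i,h}^2 (\epsilon_{i,h}^2 - \sigma_{i,h}^2)\Bigr| \le \sqrt{2\, V_{k,\ell}\, \log(4 k^2 H^2 L / \delta)} + \tfrac{2}{3}\log(4 k^2 H^2 L / \delta),
\]
with $V_{k,\ell} := \sum_{(i,h)\in\Psi_{k,\ell}} w_{i,h}^2 \sigma_{i,h}^2$, combined with the AM--GM bound $\sqrt{2 V t} \le V/2 + t$, gives the $\tfrac{1}{2}V_{k,\ell} + \tfrac{7}{3}\log(\cdot)$ form stated in \eqref{eq:event:var'}.

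The main technical hurdle is the predictability bookkeeping: one must carefully argue that $\ell_{i,h}$ and $w_{i,h}$ are determined strictly before the noise $\epsilon_{i,h}$ is revealed by the transition $(s_h^i,a_h^i) \to s_{h+1}^i$, and that the value function $V_{i,h+1}$ used inside $\sigma_{i,h}^2$ and $\epsilon_{i,h}$ is frozen at the start of the episode rather than computed in a hindsight-adaptive manner. Once predictability is established, the remainder is a direct transcription of the Freedman + AM--GM calculation already used in the proof of Lemma \ref{lemma:var}; the only cosmetic change is that the logarithmic factor absorbs an extra $H$ from union-bounding per-step within each episode.
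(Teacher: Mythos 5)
Your proposal is correct and takes essentially the same route as the paper's proof, which simply observes $\EE[\epsilon_{k,h}^2\mid\cF_{k,h}]=\sigma_{k,h}^2$ and $|\epsilon_{k,h}|\le 1$ and then invokes the argument of Lemma \ref{lemma:var} (Freedman's inequality plus Young's inequality, with union bounds over $k$, $\ell$ and the two signs); your write-up merely makes the measurability and boundedness bookkeeping explicit. Like the paper, you in fact bound $\bigl|\sum_{(i,h)\in\Psi_{k,\ell}}w_{i,h}^2(\epsilon_{i,h}^2-\sigma_{i,h}^2)\bigr|$ rather than the sum of absolute values literally written in \eqref{eq:event:var'}, which is all that is used downstream (e.g., in Lemma \ref{lemma:varest:mdp}).
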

    \begin{proof} 
        By the definition of $\VV$ and Definition \ref{assumption-linear}, we have \begin{align*}
            \EE[\epsilon_{k, h}^2 | \cF_{k, h}] = \sigma_{k, h}^2,\ \PP(\left|\epsilon_{k, h} \right| \le 1) = 1. 
        \end{align*}
        Equivalent as the proof of Lemma \ref{lemma:var}, we can prove that with probability at least $1 - 2\delta$, for all episode $k \ge 1$ and layer $\ell\in[L]$, \begin{align*} 
            \sum_{(i, h) \in \Psi_{k, \ell}} w_{i, h}^2 \left|\epsilon_{i, h}^2 - \sigma_{i, h}^2\right| \le \frac{1}{2} \sum_{(i, h) \in \Psi_{k, \ell}} w_{i, h}^2 \sigma_{i, h}^2 + \frac{7}{3} \log\left(4k^2H^2 L / \delta\right), 
        \end{align*}
        which completes the proof of Lemma \ref{lemma:var'}.
    \end{proof}
    \subsection{Proof of Optimism}
    \begin{lemma} \label{lemma:varest:mdp}
        Let $w_{k, h}$ be defined in Algorithm \ref{alg:2}. On the event $\cE_{\mathrm{c}}$ and $\cE_{\mathrm{var}'}$, for all $k \ge 1$, $\ell \in [L]$ such that $2^\ell \ge 64\sqrt{\log(4k^2H^2 L / \delta)}$, the following inequalities hold: \begin{align*} 
            \sum_{(i, h) \in \Psi_{k, \ell}} w_{i, h}^2 \sigma_{i, h}^2 &\le 8 \sum_{(i, h) \in \Psi_{k, \ell}} w_{i, h}^2 \left(V_{i, h + 1}(s_{h + 1}^i) - \big\la \hat \btheta_{k, \ell}, \bphi_{V_{i, h + 1}}(s_h^i, a_h^i)\big\ra\right)^2 \\&\quad + 8 \log(4k^2H^2 L /\delta) + 2^{-2\ell + 5} \cdot \lambda B^2, \\
            \sum_{(i, h) \in \Psi_{k, \ell}} w_{i, h}^2  \Big(V_{i, h + 1}(s_{h + 1}^i) & - \big\la \hat \btheta_{k, \ell}, \bphi_{V_{i, h + 1}}(s_h^i, a_h^i)\big\ra\Big)^2 \le\frac{3}{2} \sum_{(i, h) \in \Psi_{k, \ell}} w_{i, h}^2 \sigma_{i, h}^2 + 2^{-2\ell} \lambda B^2 \\& \quad + \frac{7}{3} \log\left(4k^2H^2 L / \delta\right).
        \end{align*}
    \end{lemma}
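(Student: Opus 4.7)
The plan is to mirror the proof of Lemma \ref{lemma:varest} in the bandit setting, making use of the two high-probability events $\cE_{\mathrm{c}}$ and $\cE_{\mathrm{var}'}$ already established for linear mixture MDPs. The correspondence is: $\epsilon_{i,h}$ plays the role of $\epsilon_i$, $\sigma_{i,h}^2 = [\VV V_{k,h+1}](s_h^k,a_h^k)$ plays the role of $\sigma_i^2$, the feature $w_{i,h}\bphi_{V_{i,h+1}}(s_h^i,a_h^i)$ plays the role of $w_i\ab_i$, and $\hat\bSigma_{k,1,\ell}$ plays the role of $\hat\bSigma_{k,\ell}$. Crucially, by construction $\hat\bSigma_{k,1,\ell} = 2^{-2\ell}\lambda \Ib + \sum_{(i,h)\in\Psi_{k,\ell}} w_{i,h}^2 \bphi_{V_{i,h+1}}(s_h^i,a_h^i)\bphi_{V_{i,h+1}}(s_h^i,a_h^i)^\top$, so the sum-of-squared-projections is controlled by the weighted norm.

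For the first inequality, I would begin from $\cE_{\mathrm{var}'}$, which immediately gives $\sum w_{i,h}^2 \sigma_{i,h}^2 \le 2 \sum w_{i,h}^2 \epsilon_{i,h}^2 + \tfrac{14}{3}\log(4k^2H^2L/\delta)$. Then I decompose the noise using $\epsilon_{i,h} = \bigl(V_{i,h+1}(s_{h+1}^i) - \la \hat\btheta_{k,\ell},\bphi_{V_{i,h+1}}(s_h^i,a_h^i)\ra\bigr) + \la \hat\btheta_{k,\ell}-\btheta^*, \bphi_{V_{i,h+1}}(s_h^i,a_h^i)\ra$ and apply $(a+b)^2 \le 2a^2+2b^2$. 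The cross term $\sum w_{i,h}^2 \la \hat\btheta_{k,\ell}-\btheta^*, \bphi_{V_{i,h+1}}(s_h^i,a_h^i)\ra^2$ is bounded by $\|\hat\btheta_{k,\ell}-\btheta^*\|_{\hat\bSigma_{k,1,\ell}}^2$, and on $\cE_{\mathrm{c}}$ this is at most a sum of three squared pieces whose dominant one is $16^2 \cdot 2^{-2\ell}\log(4k^2H^2L/\delta)\sum w_{i,h}^2 \sigma_{i,h}^2$. Under the regime $2^\ell \ge 64\sqrt{\log(4k^2H^2L/\delta)}$, the coefficient on $\sum w_{i,h}^2\sigma_{i,h}^2$ that appears on the right is at most $1/2$, which I move to the left; the surviving terms are $8\sum w_{i,h}^2 (\text{residual})^2$, a $\log$-type term, and $2^{-2\ell+5}\lambda B^2$, matching the claim.

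For the second inequality, I would use that $\hat\btheta_{k,\ell} = \hat\bSigma_{k,1,\ell}^{-1}\hat\bbb_{k,1,\ell}$ is precisely the minimizer of $\sum_{(i,h)\in\Psi_{k,\ell}} w_{i,h}^2 \bigl(V_{i,h+1}(s_{h+1}^i) - \la\btheta,\bphi_{V_{i,h+1}}(s_h^i,a_h^i)\ra\bigr)^2 + 2^{-2\ell}\lambda\|\btheta\|_2^2$, which follows from the initialization $\hat\bSigma_{0,H+1,\ell}\!=\!2^{-2\ell}\lambda\Ib$ and the update rules for $\hat\bSigma$ and $\hat\bbb$ in Algorithm~\ref{alg:2}. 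Plugging $\btheta^*$ as a suboptimal comparator gives $\sum w_{i,h}^2 (V_{i,h+1}(s_{h+1}^i) - \la \hat\btheta_{k,\ell},\bphi\ra)^2 \le \sum w_{i,h}^2 \epsilon_{i,h}^2 + 2^{-2\ell}\lambda B^2$ (using $\|\btheta^*\|_2\le B$), and then $\cE_{\mathrm{var}'}$ converts the empirical second moment to the conditional variance at the price of $\tfrac{3}{2}\sum w_{i,h}^2\sigma_{i,h}^2 + \tfrac{7}{3}\log(4k^2H^2L/\delta)$.

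The main obstacle is purely bookkeeping: tracking the exact constants and confirming that the threshold $2^\ell \ge 64\sqrt{\log(4k^2H^2L/\delta)}$ is sharp enough to absorb the $\sum w_{i,h}^2\sigma_{i,h}^2$ coefficient into $\tfrac{1}{2}$, together with handling the $\sqrt{\lambda}B$ term from $\cE_{\mathrm{c}}$ which, after squaring with $(a+b+c)^2\le 3(a^2+b^2+c^2)$, produces the $2^{-2\ell+5}\lambda B^2$ residual term in the first bound. No new probabilistic argument is needed beyond Lemmas~\ref{lemma:event:sub} and~\ref{lemma:var'}; the rest is deterministic algebra.
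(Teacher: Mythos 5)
Your proposal is correct and follows essentially the same route as the paper's proof: for the first bound, start from $\cE_{\mathrm{var}'}$, write $\epsilon_{i,h}$ as the regression residual plus $\la\hat\btheta_{k,\ell}-\btheta^*,\bphi_{V_{i,h+1}}\ra$, bound the latter sum by $\|\hat\btheta_{k,\ell}-\btheta^*\|_{\hat\bSigma_{k,1,\ell}}^2$ via $\cE_{\mathrm{c}}$, and absorb the resulting $\tfrac12\sum w_{i,h}^2\sigma_{i,h}^2$ using the threshold $2^\ell\ge 64\sqrt{\log(4k^2H^2L/\delta)}$; for the second bound, use the ridge-regression optimality of $\hat\btheta_{k,\ell}$ with $\btheta^*$ as comparator together with $\cE_{\mathrm{var}'}$. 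The only differences are cosmetic (e.g., $(a+b+c)^2\le 3(a^2+b^2+c^2)$ versus the paper's repeated doubling), affecting at most constants.
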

    \begin{proof} 
        Let $\ell$ be an arbitrary layer in $[L]$. 
        According to the definition of event $\cE_{\mathrm{var}'}$, we have \begin{align} 
            \sum_{(i, h) \in \Psi_{k, \ell}} w_{i, h}^2 \sigma_{i, h}^2 &\le 2\sum_{(i, h) \in \Psi_{k, \ell}} w_{i, h}^2 \epsilon_{i, h}^2 + \frac{14}{3} \log\left(4k^2H^2 L / \delta\right)\notag
            \\&\le 4\sum_{(i, h) \in \Psi_{k, \ell}} w_{i, h}^2 \left(V_{i, h + 1}(s_{h + 1}^i) - \big\la \hat \btheta_{k, \ell}, \bphi_{V_{i, h + 1}}(s_h^i, a_h^i)\big\ra\right)^2 \notag
            \\&\quad + 4\sum_{(i, h) \in \Psi_{k, \ell}} w_{i, h}^2 \left[\epsilon_{i, h} - \left(V_{i, h + 1}(s_{h + 1}^i) - \big\la \hat \btheta_{k, \ell}, \bphi_{V_{i, h + 1}}(s_h^i, a_h^i)\big\ra\right)\right]^2 \notag
            \\&\quad + \frac{14}{3} \log\left(4k^2H^2 L / \delta\right), \label{eq:varest:mdp:0}
        \end{align}
        where the last inequality holds due to the fact $(a + b)^2 \le 2a^2 + 2b^2$. Then we consider the second term and we have \begin{align} 
            &\sum_{(i, h) \in \Psi_{k, \ell}} w_{i, h}^2 \left[\epsilon_{i, h} - \left(V_{i, h + 1}(s_{h + 1}^i) - \big\la \hat \btheta_{k, \ell}, \bphi_{V_{i, h + 1}}(s_h^i, a_h^i)\big\ra\right)\right]^2 \notag
            \\&\quad = \sum_{(i, h) \in \Psi_{k, \ell}} w_{i, h}^2 \left(\big\la \btheta^* - \hat \btheta_{k, \ell}, \bphi_{V_{i, h + 1}}(s_h^i, a_h^i) \big\ra\right)^2 \notag
            \\&\quad = \sum_{(i, h) \in \Psi_{k, \ell}} w_{i, h}^2 \left(\btheta^* - \hat \btheta_{k, \ell}\right)^\top \bphi_{V_{i, h + 1}}(s_h^i, a_h^i) \bphi_{V_{i, h + 1}}(s_h^i, a_h^i)^\top \left(\btheta^* - \hat \btheta_{k, \ell}\right) \notag
            \\&\quad \le \left\|\btheta^* - \hat \btheta_{k, \ell}\right\|_{\hat\bSigma_{k, \ell}}^2 \notag
            \\&\quad \le \left(16 \cdot 2^{-\ell} \sqrt{\sum_{(i, h) \in \Psi_{k, \ell}} w_{i, h}^2 \sigma_{i, h}^2 \log(4k^2H^2 L / \delta)} + 6 \cdot 2^{-\ell} \log(4k^2H^2 L /\delta) + 2^{-\ell} \sqrt{\lambda} \cdot B \right)^2, \label{eq:varest:mdp:1}
        \end{align}
        where the inequality holds due to $\hat \bSigma_{k, \ell}\succeq \bphi_{V_{i, h + 1}}(s_h^i, a_h^i) \bphi_{V_{i, h + 1}}(s_h^i, a_h^i)^\top $ and weight $w_{i,h}\leq 1$, the last equality follows from the definition of $\cE_{\mathrm{c}}$. In addition, from \eqref{eq:varest:mdp:1}, when $2^\ell \ge 64\sqrt{\log(4k^2H^2 L / \delta)}$, \begin{align} 
            &\sum_{(i, h) \in \Psi_{k, \ell}} w_{i, h}^2\left[\epsilon_{i, h} - \left(V_{i, h + 1}(s_{h + 1}^i) - \big\la \hat \btheta_{k, \ell}, \bphi_{V_{i, h + 1}}(s_h^i, a_h^i)\big\ra\right)\right]^2 \notag \\&\le \frac{1}{8} \sum_{(i, h) \in \Psi_{k, \ell}} w_{i, h}^2 \sigma_{i, h}^2 + 2\left(6 \cdot 2^{-\ell} \log(4k^2H^2 L /\delta) + 2^{-\ell} \sqrt{\lambda} \cdot B\right)^2 \notag
            \\&\le \frac{1}{8} \sum_{(i, h) \in \Psi_{k, \ell}} w_{i, h}^2 \sigma_{i, h}^2 + \log(4k^2H^2 L /\delta) + 2^{-2\ell + 2} \cdot \lambda B^2, \label{eq:varest:mdp:2}
        \end{align}
    where the first inequality and the second inequality hold due to the fact that $(a+b)^2\leq 2 a^2+2 b^2$. Substituting \eqref{eq:varest:mdp:2} into \eqref{eq:varest:mdp:0}, we have \begin{align*} 
            \sum_{(i, h) \in \Psi_{k, \ell}} w_{i, h}^2 \sigma_{i, h}^2 &\le 4\sum_{(i, h) \in \Psi_{k, \ell}} w_{i, h}^2 \left(V_{i, h + 1}(s_{h + 1}^i) - \big\la \hat \btheta_{k, \ell}, \bphi_{V_{i, h + 1}}(s_h^i, a_h^i)\big\ra\right)^2
            \\&\quad + \frac{1}{2} \sum_{(i, h) \in \Psi_{k, \ell}} w_{i, h}^2 \sigma_{i, h}^2 + 4 \log(4k^2H^2 L /\delta) + 2^{-2\ell + 4} \cdot \lambda B^2
            \\&\le 8 \sum_{(i, h) \in \Psi_{k, \ell}} w_{i, h}^2 \left(V_{i, h + 1}(s_{h + 1}^i) - \big\la \hat \btheta_{k, \ell}, \bphi_{V_{i, h + 1}}(s_h^i, a_h^i)\big\ra\right)^2 \\&\quad + 8 \log(4k^2H^2 L /\delta) + 2^{-2\ell + 5} \cdot \lambda B^2, 
        \end{align*}
        where the last inequality holds due to the fact that $x\leq x/2 +y$ implies $x\leq 2y$.
        Thus, we complete the proof of the first inequality in this lemma. 
    
        Note that $\hat\btheta_{k, \ell}$ is the minimizer of \begin{align*} 
            \hat\btheta_{k, \ell} \leftarrow \arg\min_{\btheta \in \RR^d}\sum_{(i, h) \in \Psi_{k, \ell}} w_{i, h}^2 \left(V_{i, h + 1}(s_{h + 1}^i) - \la \btheta, \bphi_{V_{i, h + 1}}(s_h^i, a_h^i)\ra\right)^2 + 2^{-2\ell} \lambda \|\btheta\|_2^2, 
        \end{align*} and we have \begin{align*} 
            &\sum_{(i, h) \in \Psi_{k, \ell}} w_{i, h}^2 \left(V_{i, h + 1}(s_{h + 1}^i) - \big\la \hat \btheta_{k, \ell}, \bphi_{V_{i, h + 1}}(s_h^i, a_h^i)\big\ra\right)^2 \\
            &\leq \sum_{(i, h) \in \Psi_{k, \ell}} w_{i, h}^2 \left(V_{i, h + 1}(s_{h + 1}^i) - \big\la \btheta^*, \bphi_{V_{i, h + 1}}(s_h^i, a_h^i)\big\ra\right)^2+2^{-2\ell}\lambda \|\btheta^*\|_2^2 \\
            &\le \sum_{(i, h) \in \Psi_{k, \ell}} w_{i, h}^2 \epsilon_{i, h}^2 + 2^{-2\ell} \lambda B^2
            \\&\le \frac{3}{2} \sum_{(i, h) \in \Psi_{k, \ell}} w_{i, h}^2 \sigma_{i, h}^2 + 2^{-2\ell} \lambda B^2 + \frac{7}{3} \log\left(4k^2H^2 L / \delta\right), 
        \end{align*}
        where the first inequality holds due to the definition of $\hat\btheta_{k, \ell}$, the second inequality holds due to $\|\btheta^*\|\leq B$ and the last inequality follows from the definition of $\cE_{\mathrm{var}'}$. Therefore, we complete the proof of Lemma \ref{lemma:varest:mdp}. 
    \end{proof}
    \begin{lemma}\label{lemma:upper:bellman}
        Let value function $\qvalue_{k,h}, \vvalue_{k,h}$ and confidence radius $\hat{\beta}_{k,\ell}$ be defined in Algorithm \ref{alg:2}. Suppose that $\lambda = 1 / B^2$ in Algorithm \ref{alg:2}. 
        Then, on the event $\cE_{\mathrm{var}'} \cap \cE_{\mathrm{c}}$, 
        for any $(k, h)\in  [K] \times [H]$, we have 
        $[\PP\vvalue_{k, h + 1}](s_h^k, a_h^k) \leq V_{k,h}(s_h^k)$.
    \end{lemma}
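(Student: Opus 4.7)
The plan is to unwind the definition of $V_{k,h}$, drop down to $Q_{k,h}(s_h^k, a_h^k)$, and then verify that each of the $L$ optimistic estimates entering the $\min$ in Line~\ref{alg2:line:7} of Algorithm~\ref{alg:2} already dominates $[\PP V_{k,h+1}](s_h^k, a_h^k)$. Since $V_{k,h}(s_h^k) = \max_{a \in \cA} Q_{k,h}(s_h^k, a) \ge Q_{k,h}(s_h^k, a_h^k)$, it suffices to show that for every layer $\ell \in [L]$,
\begin{align*}
r(s_h^k, a_h^k) + \big\la\hat \btheta_{k, \ell}, \bphi_{V_{k, h + 1}}(s_h^k, a_h^k)\big\ra + \hat \beta_{k, \ell} \big\|\bphi_{V_{k, h + 1}}(s_h^k, a_h^k)\big\|_{\hat \bSigma_{k, 1, \ell}^{-1}} \ge [\PP V_{k, h+1}](s_h^k, a_h^k),
\end{align*}
and then use the fact that $[\PP V_{k,h+1}](s_h^k, a_h^k) \le 1$ (because $V_{k,h+1} \le 1$ by the outer truncation in Line~\ref{alg2:line:7}) together with $r(s_h^k, a_h^k) \ge 0$ to absorb the truncation at $1$ in the definition of $Q_{k,h}$. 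Using Definition~\ref{assumption-linear}, the right-hand side equals $\la \btheta^*, \bphi_{V_{k,h+1}}(s_h^k, a_h^k)\ra$, so by Cauchy--Schwarz it is enough to establish $\|\hat\btheta_{k,\ell} - \btheta^*\|_{\hat\bSigma_{k,1,\ell}} \le \hat\beta_{k,\ell}$ under the event $\cE_{\mathrm{c}} \cap \cE_{\mathrm{var}'}$.

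The main work, then, is to show that the adaptive confidence radius $\hat\beta_{k,\ell}$ defined in \eqref{eq:def:beta:mdp} upper bounds the population-variance-based radius guaranteed by $\cE_{\mathrm{c}}$ in \eqref{eq:event:sub}. This amounts to replacing $\sum_{(i,h) \in \Psi_{k,\ell}} w_{i,h}^2 \sigma_{i,h}^2$ inside the square root with a quantity that the algorithm can actually compute. I would split into two cases according to the definition of $\hat\Var_{k,\ell}$. When $2^{\ell} \ge 64\sqrt{\log(4k^2H^2L/\delta)}$, the first inequality of Lemma~\ref{lemma:varest:mdp} gives precisely
\begin{align*}
\sum_{(i,h) \in \Psi_{k,\ell}} w_{i,h}^2 \sigma_{i,h}^2 \le \hat\Var_{k,\ell} + 8\log(4k^2H^2L/\delta) + 2^{-2\ell+5}\lambda B^2 \le 8\hat\Var_{k,\ell} + 8\log(4k^2H^2L/\delta) + 2^{-2\ell+5}\lambda B^2.
\end{align*}
In the opposite case $2^{\ell} < 64\sqrt{\log(4k^2H^2L/\delta)}$, we instead use $\sigma_{i,h}^2 \le 1$ (since $V_{k,h+1} \in [0,1]$) and $w_{i,h} \le 1$ to bound $\sum w_{i,h}^2 \sigma_{i,h}^2 \le |\Psi_{k,\ell}| = \hat\Var_{k,\ell}$, which again yields the same inequality. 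Plugging this into \eqref{eq:event:sub} and comparing with \eqref{eq:def:beta:mdp} term by term proves $\|\hat\btheta_{k,\ell} - \btheta^*\|_{\hat\bSigma_{k,1,\ell}} \le \hat\beta_{k,\ell}$.

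Combining Cauchy--Schwarz with this radius inequality gives $\la\hat\btheta_{k,\ell}, \bphi\ra + \hat\beta_{k,\ell}\|\bphi\|_{\hat\bSigma_{k,1,\ell}^{-1}} \ge \la\btheta^*, \bphi\ra = [\PP V_{k,h+1}](s_h^k, a_h^k)$ for every $\ell$, hence the same holds after taking $\min_\ell$. Adding $r(s_h^k, a_h^k) \ge 0$, this lower-bounds $\min_\ell[r + \la\hat\btheta_{k,\ell},\bphi\ra + \hat\beta_{k,\ell}\|\bphi\|_{\hat\bSigma_{k,1,\ell}^{-1}}]$ by $[\PP V_{k,h+1}](s_h^k,a_h^k)$, while the truncation at $1$ is harmless because $[\PP V_{k,h+1}](s_h^k,a_h^k) \le 1$. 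Therefore $Q_{k,h}(s_h^k, a_h^k) \ge [\PP V_{k,h+1}](s_h^k,a_h^k)$, and taking the max over $a \in \cA$ gives the claim. The main obstacle is the bookkeeping to verify that the ``plug-in'' empirical variance $\hat\Var_{k,\ell}$ indeed dominates the true variance sum in both branches of its definition; once Lemma~\ref{lemma:varest:mdp} is in hand, everything else is a one-line Cauchy--Schwarz and a case split on the truncation.
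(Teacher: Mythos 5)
Your proposal is correct and follows essentially the same route as the paper's proof: use Lemma \ref{lemma:varest:mdp} (together with $\cE_{\mathrm{var}'}$) to show the computable radius $\hat\beta_{k,\ell}$ dominates the true-variance radius in $\cE_{\mathrm{c}}$, then conclude via Cauchy--Schwarz, $r\ge 0$, and the fact that $[\PP V_{k,h+1}]\le 1$ neutralizes the truncation at $1$. Your explicit handling of the branch $2^{\ell} < 64\sqrt{\log(4k^2H^2L/\delta)}$ via $w_{i,h}\le 1$, $\sigma_{i,h}^2\le 1$ is a detail the paper leaves implicit, but it is the same argument.
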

    \begin{proof} 
        From the definition of event $\cE_{\mathrm{var}'}$  and Lemma \ref{lemma:varest:mdp}, we can deduce that \begin{align*} 
            \hat \beta_{k, \ell} \ge 16 \cdot 2^{-\ell} \sqrt{\sum_{i = 1}^k \sigma_i^2 \log(4k^2H^2 L / \delta)} + 6 \cdot 2^{-\ell} \log(4k^2H^2 L /\delta) + 2^{-\ell} \sqrt{\lambda} \cdot B. 
        \end{align*} 
        Therefore, by the definition of $\cE_{\mathrm{c}}$, we have \begin{align} 
            \forall k \ge 1, \ell \in [L], \quad \|\hat \btheta_{k, \ell} - \btheta^* \|_{\hat \bSigma_{k, 1, \ell}} \le \hat \beta_{k, \ell}. \label{eq:upper:0}
        \end{align}
        According to Algorithm \ref{alg:2}, we have
        \begin{align*} 
            V_{k, h}(s_h^k) &= \min \left\{1, \min_{\ell \in [L]} \left\{r(s_h^k, a_h^k) + \left\la \hat \btheta_{k, \ell}, \bphi_{V_{k, h + 1}}(s_h^k, a_h^k) \right\ra + \hat \beta_{k, \ell} \left\|\bphi_{V_{k, h + 1}}(s_h^k, a_h^k)\right\|_{\hat \bSigma_{k, 1, \ell}^{-1}}\right\}\right\} 
            \\&\ge \min \left\{1, \min_{\ell \in [L]} \left\{\left\la \hat \btheta_{k, \ell}, \bphi_{V_{k, h + 1}}(s_h^k, a_h^k) \right\ra + \hat \beta_{k, \ell} \left\|\bphi_{V_{k, h + 1}}(s_h^k, a_h^k)\right\|_{\hat \bSigma_{k, 1, \ell}^{-1}}\right\}\right\} 
            \\&\ge \min \left\{1, \min_{\ell \in [L]} \left\{\left\la \btheta^*, \bphi_{V_{k, h + 1}}(s_h^k, a_h^k) \right\ra\right\}\right\} 
            \\&= [\PP\vvalue_{k, h + 1}](s_h^k, a_h^k), 
        \end{align*}
        where the first inequality holds due to $r(s_h^k, a_h^k) >0$, the second one follows from \eqref{eq:upper:0}, the last equality holds due to the definition of linear mixture MDPs and the fact that $V_{k, h + 1}(s) \le 1$ for all $s \in \cS$. Thus, we complete the proof of Lemma \ref{lemma:upper:bellman}.
    \end{proof}
    \begin{lemma}\label{lemma:upper:finite}
        Let value function $\qvalue_{k,h}, \vvalue_{k,h}$ and confidence radius $\hat{\beta}_{k,\ell}$ be defined in Algorithm \ref{alg:2}. Suppose that $\lambda = 1 / B^2$ in Algorithm \ref{alg:2}. 
        Then, on the event $\cE_{\mathrm{var}'} \cap \cE_{\mathrm{c}}$, 
        for any $(s,a, k, h)\in \cS \times \cA \times [K] \times [H]$, we have 
        $\qvalue_h^*(s,a) \leq \qvalue_{k,h}(s,a)$ and $\vvalue_h^*(s) \leq \vvalue_{k,h}(s)$.
    \end{lemma}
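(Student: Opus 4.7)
The plan is standard backward induction on the stage index $h$, using Lemma \ref{lemma:upper:bellman} as the key per-step comparison and Lemma \ref{lemma:varest:mdp} together with the definition of $\hat\beta_{k,\ell}$ in \eqref{eq:def:beta:mdp} to convert the estimator concentration on event $\cE_{\mathrm c}$ into a usable confidence bound.

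The base case $h=H+1$ is immediate since $V_{k,H+1}\equiv 0\equiv V_{H+1}^*$ by construction. For the inductive step, assume $V_{k,h+1}(s)\geq V_{h+1}^*(s)$ for every $s\in\cS$. Fix an arbitrary pair $(s,a)\in\cS\times\cA$. By the Bellman optimality equation and Definition \ref{assumption-linear}, $Q_h^*(s,a)=r(s,a)+\langle\btheta^*,\bphi_{V_{h+1}^*}(s,a)\rangle$, and monotonicity of $\bphi_V$ in $V$ (which follows directly from $\bphi_V(s,a)=\sum_{s'}\bphi(s'\mid s,a)V(s')$ and the linear mixture transition) gives $\langle\btheta^*,\bphi_{V_{h+1}^*}(s,a)\rangle\leq\langle\btheta^*,\bphi_{V_{k,h+1}}(s,a)\rangle=[\PP V_{k,h+1}](s,a)$.

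Next I would argue, exactly as in the proof of Lemma \ref{lemma:upper:bellman}, that on $\cE_{\mathrm c}\cap\cE_{\mathrm{var}'}$ the chosen confidence radius dominates the estimation error, i.e.\ $\|\hat\btheta_{k,\ell}-\btheta^*\|_{\hat\bSigma_{k,1,\ell}}\leq\hat\beta_{k,\ell}$ for every $\ell\in[L]$; this is exactly \eqref{eq:upper:0} and requires Lemma \ref{lemma:varest:mdp} to swap the unknown $\sum w_{i,h}^2\sigma_{i,h}^2$ in \eqref{eq:event:sub} for the empirical $\hat\Var_{k,\ell}$ appearing in the algorithm. A single application of Cauchy--Schwarz then yields, for each layer $\ell$,
\[
\langle\hat\btheta_{k,\ell},\bphi_{V_{k,h+1}}(s,a)\rangle+\hat\beta_{k,\ell}\|\bphi_{V_{k,h+1}}(s,a)\|_{\hat\bSigma_{k,1,\ell}^{-1}}\geq\langle\btheta^*,\bphi_{V_{k,h+1}}(s,a)\rangle.
\]
Plugging this into the definition of $Q_{k,h}$ in Line \ref{alg2:line:7} and combining with the previous paragraph,
\[
Q_{k,h}(s,a)\geq\min\bigl\{1,\,r(s,a)+[\PP V_{k,h+1}](s,a)\bigr\}\geq\min\bigl\{1,\,Q_h^*(s,a)\bigr\}=Q_h^*(s,a),
\]
where the last equality uses the total-reward-bounded assumption $\sum_h r(s_h,a_h)\leq 1$ almost surely, which forces $Q_h^*\in[0,1]$. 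Taking a maximum over $a\in\cA$ on both sides gives $V_{k,h}(s)\geq V_h^*(s)$ and closes the induction.

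I do not expect a genuine obstacle here; everything reduces to (i) the layer-wise concentration already packaged in Lemma \ref{lemma:varest:mdp} and (ii) the reward-boundedness assumption that justifies the truncation at $1$. The only point requiring care is to make sure the truncation $\min\{1,\cdot\}$ is compatible with the inductive comparison, which is handled by the observation $Q_h^*\leq 1$ above.
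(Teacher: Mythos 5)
Your proposal is correct and follows essentially the same argument as the paper: backward induction on $h$, with Lemma \ref{lemma:varest:mdp} and event $\cE_{\mathrm{c}}$ guaranteeing $\|\hat\btheta_{k,\ell}-\btheta^*\|_{\hat\bSigma_{k,1,\ell}}\le\hat\beta_{k,\ell}$, then monotonicity of $[\PP V]$ in $V$ plus Cauchy--Schwarz applied layer-wise inside the $\min$ in Line \ref{alg2:line:7}. Your explicit handling of the truncation via $Q_h^*\le 1$ (from the bounded total reward assumption) is a point the paper leaves implicit, but otherwise the two proofs coincide.
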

    \begin{proof} 
        From the definition of event $\cE_{\mathrm{var}'}$  and Lemma \ref{lemma:varest:mdp}, we can deduce that \begin{align*} 
            \hat \beta_{k, \ell} \ge 16 \cdot 2^{-\ell} \sqrt{\sum_{i = 1}^k \sigma_i^2 \log(4k^2H^2 L / \delta)} + 6 \cdot 2^{-\ell} \log(4k^2H^2 L /\delta) + 2^{-\ell} \sqrt{\lambda} \cdot B. 
        \end{align*} 
        Therefore, by the definition of $\cE_{\mathrm{c}}$, we have \begin{align} 
            \forall k \ge 1, \ell \in [L], \quad \|\hat \btheta_{k, \ell} - \btheta^* \|_{\hat \bSigma_{k, 1, \ell}} \le \hat \beta_{k, \ell}. \label{eq:upper:1}
        \end{align}
        Consider an arbitrary episode $k \in [K]$ in the remaining proof. If for some stage $h> 1$, the following inequalities $\qvalue_h^*(s,a) \leq \qvalue_{k,h}(s,a)$, $\vvalue_h^*(s) \leq \vvalue_{k,h}(s)$ hold for all $(s, a) \in \cS \times \cA$, then for any $(s, a) \in \cS \times \cA, \ell \in [L]$ and stage $h-1$, we have
    \begin{align*} 
            Q_{h - 1}^*(s, a) &= r(s, a) + \big\la \btheta^*, \bphi_{V_h^*} (s, a) \big\ra
            \\&\le r(s, a) + \big\la \btheta^*, \bphi_{V_{k, h}} (s, a) \big\ra 
            \\&\le r(s, a) + \big\la \hat \btheta_{k, \ell}, \bphi_{V_{k, h}} (s, a) \big\ra + \big\|\hat \btheta_{k, \ell} - \btheta^*\big\|_{\hat \bSigma_{k, 1, \ell}} \big\|\bphi_{V_{k, h}} (s, a)\big\|_{\hat \bSigma_{k, 1, \ell}^{-1}}
            \\&\le r(s, a) + \big\la \hat \btheta_{k, \ell}, \bphi_{V_{k, h}} (s, a) \big\ra + \hat \beta_{k, \ell} \big\|\bphi_{V_{k, h}} (s, a)\big\|_{\hat \bSigma_{k, 1, \ell}^{-1}}, 
        \end{align*}
        where the first inequality holds by our assumption that $\vvalue_h^*(s) \leq \vvalue_{k,h}(s)$, the second inequality holds due to Cauchy-Schwarz inequality and the last inequality follows from \eqref{eq:upper:1}. By the arbitrariness of layer $\ell$, we have $Q_{h - 1}^*(s, a) \le Q_{k, h - 1}(s, a)$ holds for all state-action pair $(s, a)$, which indicates that $\vvalue_{h - 1}^*(s) \leq \vvalue_{k,h - 1}(s)$ holds for all $s \in \cS$. 
        Since $0 = V_{H + 1}^*(\cdot) \le V_{k, H + 1}(\cdot)$ holds trivially for stage $H+1$, we complete the proof of Lemma \ref{lemma:upper:finite} by induction. 
    \end{proof}
    \subsection{Sum of Bellman Errors}  
    \begin{lemma} \label{lemma:sumbell}
        Let $\hat \beta_{k, \ell}$, $V_{k, h}$, $\bphi_{V_{k, h + 1}}$ be defined in Algorithm \ref{alg:2} and set $\lambda = 1 / B^2, \alpha = 1 / (KH)^{3 / 2}$. Then on the event $\cE_{\mathrm{var}'} \cap \cE_{\mathrm{c}} $, we have \begin{align*} 
            \sum_{k=1}^K \sum_{h=1}^H I_h^k\max\Big\{\big[V_{k,h}(s_h^k) - r(s_h^k, a_h^k) - [\PP V_{k,h+1}](s_h^k, a_h^k)\big], 0\Big\} \le \tilde{O}\left(d \sqrt{\sum_{k = 1}^K \sum_{h = 1}^H  \sigma_{k, h}^2} + d\right). 
        \end{align*}
    \end{lemma}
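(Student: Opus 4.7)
The plan is to first expand the per-step Bellman error using the optimistic construction of $V_{k,h}(s_h^k) = Q_{k,h}(s_h^k, a_h^k)$ from Algorithm~\ref{alg:2}: for any $\ell \in [L]$, the positive part of $V_{k,h}(s_h^k) - r(s_h^k, a_h^k) - [\PP V_{k,h+1}](s_h^k, a_h^k)$ is at most $2\hat\beta_{k,\ell}\|\bphi_{V_{k,h+1}}(s_h^k, a_h^k)\|_{\hat\bSigma_{k,1,\ell}^{-1}}$, by combining Cauchy--Schwarz with the confidence inequality $\|\hat\btheta_{k,\ell} - \btheta^*\|_{\hat\bSigma_{k,1,\ell}} \le \hat\beta_{k,\ell}$ which holds on $\cE_{\mathrm{c}} \cap \cE_{\mathrm{var}'}$ (as observed in the proof of Lemma~\ref{lemma:upper:finite}). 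The remaining task is to choose $\ell$ at each step to make this upper bound small enough.

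The key structural step is to exploit the indicator $I_h^k = 1$: since $\hat\bSigma_{k,1,\ell} \preceq \hat\bSigma_{k,h,\ell}$ with $\det(\hat\bSigma_{k,h,\ell})/\det(\hat\bSigma_{k,1,\ell}) \le 4$, the standard identity $\|x\|_{\bSigma_1^{-1}}^2 \le (\det \bSigma_2/\det \bSigma_1)\|x\|_{\bSigma_2^{-1}}^2$ for $\bSigma_1 \preceq \bSigma_2$ yields $\|\bphi\|_{\hat\bSigma_{k,1,\ell}^{-1}} \le 2\|\bphi\|_{\hat\bSigma_{k,h,\ell}^{-1}}$. With this conversion in hand, I would split on $\ell_{k,h}$. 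When $\ell_{k,h} = L+1$, choosing $\ell = L$ gives $\|\bphi\|_{\hat\bSigma_{k,h,L}^{-1}} < 2^{-L} \le \alpha$ by the defining property of $\ell_{k,h}$, so the total contribution is $\tilde O(KH\alpha) = \tilde O(1)$ under the specified $\alpha$. When $\ell_{k,h} = 1$, bound the per-step error trivially by $1$ and invoke Lemma~\ref{lemma:card} to get total contribution at most $|\Psi_{K+1,1}| = \tilde O(d)$. When $2 \le \ell_{k,h} \le L$, choose $\ell = \ell_{k,h}-1$; the minimality of $\ell_{k,h}$ forces $\|\bphi\|_{\hat\bSigma_{k,h,\ell_{k,h}-1}^{-1}} < 2^{-\ell_{k,h}+1}$, producing the per-step bound $8 \cdot 2^{-\ell_{k,h}} \hat\beta_{k,\ell_{k,h}-1}$.

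Stratifying the main-case sum by layer and applying Lemma~\ref{lemma:card} to bound $|\Psi_{K+1,\ell}| = \tilde O(d)$ gives
\[
\sum_{\ell=2}^L 8 \cdot 2^{-\ell} \hat\beta_{K,\ell-1} \cdot |\Psi_{K+1,\ell}| \le \tilde O(d) \sum_{\ell=2}^L 2^{-\ell}\hat\beta_{K,\ell-1}.
\]
Substituting the form $\hat\beta_{K,\ell-1} = \tilde O(2^{-\ell}(\sqrt{\hat\Var_{K,\ell-1}} + 1))$ from \eqref{eq:def:beta:mdp} leads to $\sum_\ell 2^{-2\ell}(\sqrt{\hat\Var_{K,\ell-1}} + 1)$. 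A Cauchy--Schwarz step reduces this to $\sqrt{\sum_\ell 2^{-2\ell}} \cdot \sqrt{\sum_\ell 2^{-2\ell}\hat\Var_{K,\ell-1}} + \tilde O(1)$, and I would finish by using the disjointness of $\{\Psi_{K,\ell-1}\}_{\ell}$ together with Lemma~\ref{lemma:varest:mdp} and $w_{k,h} \le 1$ to convert the aggregated sample variance back to $\tilde O(\sum_{k,h} \sigma_{k,h}^2)$, yielding the desired $\tilde O(d\sqrt{\sum_{k,h}\sigma_{k,h}^2} + d)$.

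The most delicate part is the main case $2 \le \ell_{k,h} \le L$: it requires three coordinated moves---shifting the layer index to $\ell_{k,h}-1$, invoking $I_h^k = 1$ to convert mid-episode covariance norms to beginning-of-episode norms, and converting per-layer sample variances back to true conditional variances via Lemma~\ref{lemma:varest:mdp}---and the constants (the factor $4$ in the determinant condition, the factor-of-2 in the norm conversion, the $2^{-\ell}$ scaling inside $\hat\beta_{k,\ell-1}$) must line up so that the resulting series $\sum_\ell 2^{-2\ell}$ remains geometric and so that the per-layer variance contributions from disjoint $\Psi_{K,\ell-1}$ aggregate cleanly into the single global variance $\sum_{k,h}\sigma_{k,h}^2$.
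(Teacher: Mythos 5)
Your route is essentially the paper's: optimism plus the confidence bound on $\cE_{\mathrm{c}}\cap\cE_{\mathrm{var}'}$ to reduce each Bellman error to $2\hat\beta_{k,\ell}\|\bphi_{V_{k,h+1}}(s_h^k,a_h^k)\|_{\hat\bSigma_{k,1,\ell}^{-1}}$, the indicator $I_h^k$ together with Lemma \ref{lemma:det} to pass to within-episode norms at a factor of $2$, the shift to layer $\ell_{k,h}-1$, the cardinality bound of Lemma \ref{lemma:card}, and Lemma \ref{lemma:varest:mdp} to convert $\hat\Var$ back to true variances; the $\ell_{k,h}=L+1$ and $\ell_{k,h}=1$ cases are handled as in the paper.

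There is, however, a genuine gap in your treatment of the small layers. Lemma \ref{lemma:varest:mdp} (equivalently, the variance branch in the definition of $\hat\Var_{k,\ell}$ in \eqref{eq:def:beta:mdp}) is only available when $2^{\ell}\ge 64\sqrt{\log(4k^2H^2L/\delta)}$; below this threshold $\hat\Var_{k,\ell}$ is \emph{defined} as $|\Psi_{k,\ell}|$ and cannot be converted into $\sum w_{i,h}^2\sigma_{i,h}^2$. You single out only $\ell_{k,h}=1$ for the trivial treatment and run the $\hat\beta$-based bound for all $2\le\ell_{k,h}\le L$. Let $\ell^*$ denote the smallest $\ell$ with $2^{\ell}\ge 64\sqrt{\log(4K^2H^2L/\delta)}$ (so $\ell^*=O(\log\log(KHL/\delta))$). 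For layers $2\le\ell\le\ell^*$ your substitution gives $\sqrt{\hat\Var_{K,\ell-1}}=\sqrt{|\Psi_{K,\ell-1}|}=\tilde O(\sqrt d)$, and since $2^{-2\ell}$ is only a constant there, each such layer contributes $\tilde O(d)\cdot\tilde O(\sqrt d)=\tilde O(d^{3/2})$; your argument therefore only yields $\tilde O\bigl(d\sqrt{\sum_{k,h}\sigma_{k,h}^2}+d^{3/2}\bigr)$, which misses the lemma's additive $\tilde O(d)$ term (and does not reduce to $\tilde O(d)$ for deterministic MDPs). The paper avoids this by splitting at $\ell^*$: for all $\ell\le\ell^*$ it bounds each (truncated) Bellman error by the constant $2$ and uses $2\sum_{\ell\le\ell^*}|\Psi_{K+1,\ell}|=\tilde O(d)$, since there are only $O(\log\log)$ such layers. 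The fix fits inside your framework verbatim — apply the trivial per-step bound (which you already use at $\ell_{k,h}=1$) to all $\ell_{k,h}\le\ell^*$, and reserve the $\hat\beta$-based bound for $\ell_{k,h}>\ell^*$. A further minor point: replacing $\hat\beta_{k,\ell-1}$ by $\hat\beta_{K,\ell-1}$ tacitly assumes the radius is nondecreasing in $k$, which is not established; the clean step (and the paper's) is to bound each $\hat\beta_{k,\ell-1}$ at its own round via Lemma \ref{lemma:varest:mdp} by the global quantity $\tilde O\bigl(2^{-\ell}\sqrt{\sum_{k,h}\sigma_{k,h}^2}+2^{-\ell}\bigr)$, uniformly in $k$.
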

    
    \begin{proof} 
        For simplicity, let $\ell^*$ be the smallest $\ell$ in $L$ such that $2^{\ell} \ge 64\sqrt{\log\left(4K^2H^2L /\delta\right)}$. 
        According to Algorithm \ref{alg:2}, we have
        \begin{align*} 
            V_{k, h}(s_h^k) = \min \left\{1, \min_{\ell \in [L]} \left\{r(s_h^k, a_h^k) + \left\la \hat \btheta_{k, \ell}, \bphi_{V_{k, h + 1}}(s_h^k, a_h^k) \right\ra + \hat \beta_{k, \ell} \left\|\bphi_{V_{k, h + 1}}(s_h^k, a_h^k)\right\|_{\hat \bSigma_{k, 1, \ell}^{-1}}\right\}\right\}. 
        \end{align*}
        Therefore, we have 
        \begin{align} 
            &\sum_{k=1}^K \sum_{h=1}^H I_h^k\max\Big\{\big[V_{k,h}(s_h^k) - r(s_h^k, a_h^k) - [\PP V_{k,h+1}](s_h^k, a_h^k)\big], 0\Big\}
            \\&\le \sum_{k = 1}^K \sum_{h = 1}^H I_h^k  \bigg[\min_{\ell \in [L]} \left\{\left\la \hat \btheta_{k, \ell} - \btheta^*, \bphi_{V_{k, h + 1}}(s_h^k, a_h^k) \right\ra + \hat \beta_{k, \ell} \left\|\bphi_{V_{k, h + 1}}(s_h^k, a_h^k)\right\|_{\hat \bSigma_{k, 1, \ell}^{-1}}\right\}\bigg]_{[0, 2]}\notag
            \\&\le \sum_{k = 1}^K \sum_{h= 1}^H I_h^k \min\left\{2, \min_{\ell \in [L]} \left\{2 \hat \beta_{k, \ell} \left\|\bphi_{V_{k, h + 1}}(s_h^k, a_h^k)\right\|_{\hat \bSigma_{k, 1, \ell}^{-1}}\right\}\right\} \notag
            \\&\le \sum_{\ell = \ell^* + 1}^{L + 1} \sum_{(k, h) \in \Psi_{K + 1, \ell}} I_h^k \min\left\{2, 2 \hat \beta_{k, \ell - 1} \left\|\bphi_{V_{k, h + 1}}(s_h^k, a_h^k)\right\|_{\hat \bSigma_{k, 1, \ell - 1}^{-1}}\right\} + 2 \sum_{\ell = 1}^{\ell^*} |\Psi_{K + 1, \ell}|, \label{eq:ellnorm0}
        \end{align}
    where the first inequality holds due to the definition of value function $V_{k,h}(s_h^k)$, the second inequality holds due to Cauchy-Schwarz inequality with event $\cE_c$ and the last inequality holds since indicator function $I_h^k\leq 1$. By the definition of indicator function $I_h^k$ and Lemma \ref{lemma:det}, we further have \begin{align} 
            \left\|\bphi_{V_{k, h + 1}}(s_h^k, a_h^k)\right\|_{\hat \bSigma_{k, 1, \ell_{k, h} - 1}^{-1}} &\le 2\left\|\bphi_{V_{k, h + 1}}(s_h^k, a_h^k)\right\|_{\hat \bSigma_{k, h, \ell_{k, h} - 1}^{-1}} \notag
            \\&\le 2 \cdot 2^{-\ell_{k, h} + 1}, \label{eq:ellnorm1}
        \end{align}
        where the last inequality follows from the definition of $\ell_{k, h}$ in Algorithm \ref{alg:2}. 
        Substituting \eqref{eq:ellnorm1} into \eqref{eq:ellnorm0}, we have \begin{align*} 
            &\sum_{k=1}^K \sum_{h=1}^H I_h^k\max\Big\{\big[V_{k,h}(s_h^k) - r(s_h^k, a_h^k) - [\PP V_{k,h+1}](s_h^k, a_h^k)\big], 0\Big\} \notag
            \\&\le \sum_{\ell = \ell^* + 1}^{L + 1} |\Psi_{K + 1, \ell}|\cdot \tilde{O}\left(2^{-\ell} \cdot 2^{-\ell} \sqrt{\sum_{k = 1}^K \sum_{h = 1}^H \sigma_{k, h}^2} + 2^{-2\ell}\right) + 2\sum_{\ell = 1}^{\ell^*} |\Psi_{K + 1, \ell}|
            \\&\le \tilde{O}\left(d\sqrt{\sum_{k = 1}^K \sum_{h = 1}^H \sigma_{k, h}^2} + d\right),
        \end{align*}
        where the first inequality follows from the definition of $\hat\beta_{k, \ell}$ in Algorithm \ref{alg:2} and Lemma \ref{lemma:varest:mdp}, the last inequality holds due to Lemma \ref{lemma:card} and the definition of $L$. Thus, we complete the proof of Lemma \ref{lemma:sumbell}.
    \end{proof}
    
    \subsection{Quantities in MDP} 
    
    In this subsection, we define the following quantities:
    We use $\check V_{k, h}(s)$ to denote the estimation error between the optimistic value function and the actually optimal value function, and use $\tilde V_{k, h}(s)$ to denote the sub-optimality gap of policy $\pi_k$ at stage $h$:
    \begin{align}
         & \check V_{k, h}(s) = V_{k, h}(s) - V_h^*(s), \quad \forall s \in \cS, (k, h) \in [K]\times[H] \\
        & \tilde V_{k, h}(s) = V_h^*(s) - V_h^{\pi_k}(s), \quad \forall s \in \cS, (k, h) \in [K]\times[H]
    \end{align}
    We use $Q_0,S_m,\check{S}_m,\tilde{S}_m$ to represent the total variances of optimal value function $V_{h+1}^*$ and $2^m$-th order value functions ($\vvalue_{k, h+1}^{2^m}, \check\vvalue_{k, h+1}^{2^m} , \tilde\vvalue_{k, h+1}^{2^m}$):
    \begin{align}
         & S_m = \sum_{k=1}^K\sum_{h=1}^H[\VV\vvalue_{k, h+1}^{2^m}](s_h^k, a_h^k)\label{eq:def:sm}, \\
        & \check S_m = \sum_{k=1}^K\sum_{h=1}^H[\VV\check\vvalue_{k, h+1}^{2^m}](s_h^k, a_h^k)\label{eq:def:checksm}, \\
        & \tilde S_m = \sum_{k=1}^K\sum_{h=1}^H[\VV\tilde\vvalue_{k, h+1}^{2^m}](s_h^k, a_h^k)\label{eq:def:tildesm}, \\
        & Q_0 = \sum_{k = 1}^K \sum_{h = 1}^H [\VV V_{h + 1}^*](s_h^k, a_h^k), \label{eq:def:q0}
    \end{align}
    where $Q_0$ is introduced as a shorthand for $\Var_K^*$ for simplicity. 
    In addition, for  $2^m$-th order value functions ($\vvalue_{k, h+1}^{2^m}, \check\vvalue_{k, h+1}^{2^m} , \tilde\vvalue_{k, h+1}^{2^m}$) and optimistic value function $\vvalue_{k,h}$, we denote the summation of stochastic transition noise as follows:
    \begin{align}
        & A_m = \left|\sum_{k=1}^K \sum_{h=1}^H [[\PP V_{k,h+1}^{2^m}](s_h^k, a_h^k) - V_{k,h+1}^{2^m}(s_{h+1}^k)]\right|, \label{eq:def:am} \\
        & \check A_m = \left|\sum_{k = 1}^K \sum_{h=1}^H [[\PP \check V_{k,h+1}^{2^m}](s_h^k, a_h^k) - \check V_{k,h+1}^{2^m}(s_{h+1}^k)]\right|, \label{eq:def:checkam} \\
        & \tilde A_m = \left|\sum_{k = 1}^K \sum_{h=1}^H [[\PP \tilde V_{k,h+1}^{2^m}](s_h^k, a_h^k) - \tilde V_{k,h+1}^{2^m}(s_{h+1}^k)]\right|, \label{eq:def:tildeam} \\
        & R_0 = \sum_{k=1}^K \sum_{h=1}^H I_h^k \max\Big\{\big[V_{k,h}(s_h^k) - r(s_h^k, a_h^k) - [\PP V_{k,h+1}](s_h^k, a_h^k)\big], 0\Big\}. \label{eq:def:r0}
    \end{align}
    Finally, we use the quantity $G$ to denote the number of episodes when the determinant of covariance matrix grows sharply:
    \begin{align} 
    G = \sum_{k=1}^K(1-I_H^k), \label{eq:def:g}
    \end{align} where indicator function $I_h^k$ is defined in \eqref{eq:def:ind}. 
    For the above quantities, we only consider $m \in [M]$ where $M := \lceil \log_2 2HK \rceil$. Now, we introduce the following lemmas to build the connection between these quantities. 
    
    To construct the connections and upper bounds of the quantities above, our proof in this subsection follows the previous approaches proposed by \citet{zhang2021variance} and \citet{zhou2022computationally}, but with a more fine-grained analysis to remove explicit $K$-dependence. 
     
    \begin{lemma} \label{lemma:rec:1}
        Let $\check S_m$, $A_m$, $R_0$, $G$ be defined in \eqref{eq:def:checksm}, \eqref{eq:def:am}, \eqref{eq:def:r0}, \eqref{eq:def:g}. On the event $\cE_{\mathrm{var}'} \cap \cE_{\mathrm{c}}$, we have the following inequalities for all $m \in [M]$: 
        \begin{align*} 
            \check S_m \le \check A_{m + 1} + G + 2^{m + 1}\cdot (R_0 + G + A_0). 
        \end{align*}
    \end{lemma}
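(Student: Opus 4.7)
The plan is to start from the variance identity
\[
[\VV f](s,a) = [\PP f^2](s,a) - ([\PP f](s,a))^2
\]
applied to $f = \check V_{k,h+1}^{2^m}$, and drop the non-negative subtracted term. This yields $\check S_m \le \sum_{k,h}[\PP \check V_{k,h+1}^{2^{m+1}}](s_h^k,a_h^k)$. Passing from the conditional expectation to the realized next-state value along the trajectory contributes a martingale difference whose absolute sum is exactly $\check A_{m+1}$, so it suffices to bound the realized piece $\sum_{k,h} \check V_{k,h+1}^{2^{m+1}}(s_{h+1}^k)$ by $G + 2^{m+1}(R_0 + G + A_0)$.

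To produce the crucial factor of $2^{m+1}$, I would invoke the optimism event $\cE_{\mathrm{c}}$, under which $\check V_{k,h}\in[0,1]$ and the Bellman-type inequality
\[
\check V_{k,h}(s_h^k) \le [\PP \check V_{k,h+1}](s_h^k,a_h^k) + [\mathrm{BE}_h^k]^+
\]
holds, where $\mathrm{BE}_h^k := V_{k,h}(s_h^k) - r(s_h^k,a_h^k) - [\PP V_{k,h+1}](s_h^k,a_h^k)$; this follows from the definition of $V_{k,h}$ combined with $V_h^*(s_h^k) \ge r(s_h^k,a_h^k) + [\PP V_{h+1}^*](s_h^k,a_h^k)$. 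Combining the elementary inequality $(a+b)^{2^{m+1}} \le a^{2^{m+1}} + 2^{m+1}\,b$, valid for $a,b\ge 0$ with $a+b\le 1$ (by a mean-value estimate, since the derivative $2^{m+1}(a+\theta b)^{2^{m+1}-1}$ is bounded by $2^{m+1}$), together with Jensen's inequality $([\PP f])^{2^{m+1}} \le [\PP f^{2^{m+1}}]$, yields the amplified one-step recursion
\[
\check V_{k,h}^{2^{m+1}}(s_h^k) \le [\PP \check V_{k,h+1}^{2^{m+1}}](s_h^k,a_h^k) + 2^{m+1}[\mathrm{BE}_h^k]^+.
\]

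I would then partition the realized sum according to the indicator $I_h^k$. On stages with $I_h^k = 1$, iterating the recursion above telescopes per episode, contributing a martingale noise increment (already absorbed into $\check A_{m+1}$) together with a $2^{m+1}R_0$ term via the definition of $R_0$ as the sum of $I_h^k$-weighted positive Bellman residuals; the additional noise introduced by expressing the $\check V$-expectation in terms of the $V$-expectation (since $\check V = V - V^*$ and $V^*$-noise telescopes via the optimal action) yields a matching $2^{m+1}A_0$ contribution. On stages with $I_h^k = 0$, the boundedness $\check V_{k,h+1}^{2^{m+1}} \le 1$ reduces the contribution to a counting term, which is controlled by $G$ (standalone) and $2^{m+1}G$ (inside the parenthesis) through Lemma \ref{lemma:g}. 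The main obstacle I anticipate is precisely this last step: the naive bound $\sum_{k,h}(1-I_h^k) \le HG$ is too weak, and collapsing it to $G$ requires combining the monotonicity of $I_h^k$ in $h$ with the layer-wise determinant-growth accounting behind Lemma \ref{lemma:g}, so that the bad stages are amortized into the logarithmic quantity captured by $G$ rather than inflating by an $H$ factor.
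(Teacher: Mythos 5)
There is a genuine gap, and it is introduced by your very first step. By dropping the subtracted term $\bigl([\PP \check V_{k,h+1}^{2^m}](s_h^k,a_h^k)\bigr)^2$ from the variance identity, you commit to bounding $\sum_{k,h}[\PP \check V_{k,h+1}^{2^{m+1}}](s_h^k,a_h^k)$, i.e.\ (after paying $\check A_{m+1}$ for the expectation-to-realization step) the raw trajectory sum $\sum_{k,h}\check V_{k,h+1}^{2^{m+1}}(s_{h+1}^k)$, by $G+2^{m+1}(R_0+G+A_0)$. That intermediate claim is not provable by your telescoping and is in fact false in natural regimes: unrolling your amplified recursion $\check V_{k,h}^{2^{m+1}}(s_h^k)\le[\PP\check V_{k,h+1}^{2^{m+1}}](s_h^k,a_h^k)+2^{m+1}[\mathrm{BE}_h^k]^+$ from \emph{every} starting stage and then summing over stages counts the residual and the noise at stage $j$ roughly $j$ times, so you get terms of order $2^{m+1}HR_0$ and $H$ times the noise rather than $2^{m+1}R_0$ and $A_0$. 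For instance, in a deterministic MDP where each optimistic residual equals $1/H$, one has $\check V_{k,h}(s_h^k)\approx (H-h)/H$, hence $\sum_{k,h}\check V_{k,h+1}^{2^{m+1}}(s_{h+1}^k)=\Theta(KH/2^m)$, while $R_0\approx K$, $A_0=0$ and $G$ is polylogarithmic, so your target inequality fails for large $H$ even though the lemma itself holds trivially there ($\check S_m=0$). The subtracted square is precisely what carries the cancellation that makes the lemma horizon-free; discarding it cannot be repaired by the $I_h^k$/$G$ bookkeeping you sketch at the end, which addresses a different (and smaller) issue.

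The paper keeps that term: up to an index shift using $\check V_{k,H+1}=0$, it writes $\check S_m\le\check A_{m+1}+\sum_{k,h}\bigl[\check V_{k,h}^{2^{m+1}}(s_h^k)-([\PP\check V_{k,h+1}^{2^m}](s_h^k,a_h^k))^2\bigr]$, lower-bounds the square by $([\PP\check V_{k,h+1}](s_h^k,a_h^k))^{2^{m+1}}$ via iterated Jensen, and applies $a^x-b^x\le x\max\{a-b,0\}$ for $a,b\in[0,1]$, so each $(k,h)$ contributes only its own one-step surplus $2^{m+1}\max\{\check V_{k,h}(s_h^k)-[\PP\check V_{k,h+1}](s_h^k,a_h^k),0\}$, which by $V_h^*(s_h^k)\ge r(s_h^k,a_h^k)+[\PP V_{h+1}^*](s_h^k,a_h^k)$ is at most $2^{m+1}$ times the optimistic Bellman residual. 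The split by $I_h^k$ (using monotonicity in $h$), a per-episode telescoping of $V_{k,h}(s_h^k)-V_{k,h+1}(s_{h+1}^k)$ on episodes with $I_H^k=0$ (yielding $G$), and the realized-noise contribution (absorbed into $A_0$) then give the stated bound with no $H$ factor. Your individual ingredients—the inequality $(a+b)^{2^{m+1}}\le a^{2^{m+1}}+2^{m+1}b$, Jensen, Bellman optimality of $V^*$, and the $I_h^k$ split—are the right ones, but they must be applied to the differenced quantity above, not to the undifferenced sum of $\check V^{2^{m+1}}$ along the trajectory.
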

    \begin{proof}
        Based on the definition of $\check S_m$, we compute 
        \begin{small}
        \begin{align} 
            \check S_m & = \sum_{k = 1}^K \sum_{h = 1}^H [\VV \check V_{k, h + 1}^{2^m}](s_h^k, a_h^k) \notag 
            \\&= \sum_{k = 1}^K \sum_{h = 1}^H \left[[\PP \check V_{k, h + 1}^{2^{m + 1}}](s_h^k, a_h^k) - \left([\PP \check V_{k, h + 1}^{2^m}](s_h^k, a_h^k)\right)^2\right] \notag
            \\&= \sum_{k = 1}^K \sum_{h = 1}^H \left[[\PP \check V_{k, h + 1}^{2^{m + 1}}](s_h^k, a_h^k) - \check V_{k, h + 1}^{2^{m + 1}}(s_{h + 1}^k)\right]
             +  \sum_{k = 1}^K \sum_{h = 1}^H \left[\check V_{k, h}^{2^{m + 1}}(s_{h}^k) - \left([\PP \check V_{k, h + 1}^{2^m}](s_h^k, a_h^k)\right)^2\right]. \label{eq:rec:0}
        \end{align}
        \end{small}
        For the second term, it can be further upper bounded by \begin{align} 
            &\sum_{k = 1}^K \sum_{h = 1}^H \left[\check V_{k, h}^{2^{m + 1}}(s_{h}^k) - \left([\PP \check V_{k, h + 1}^{2^m}](s_h^k, a_h^k)\right)^2\right] \notag
            \\&\le \sum_{k = 1}^K \sum_{h = 1}^H \left[\check V_{k, h}^{2^{m + 1}}(s_{h}^k) - \left([\PP \check V_{k, h + 1}](s_h^k, a_h^k)\right)^{2^{m + 1}}\right] \notag
            \\&\le 2^{m + 1} \sum_{k = 1}^K \sum_{h = 1}^H \max\left\{\check V_{k, h}(s_h^k) - [\PP \check V_{k, h + 1}](s_h^K, a_h^k), 0\right\} \notag
            \\&\le 2^{m + 1} \sum_{k = 1}^K \sum_{h = 1}^H I_h^k \max\Big\{\big[V_{k,h}(s_h^k) - r(s_h^k, a_h^k) - [\PP V_{k,h+1}](s_h^k, a_h^k)\big], 0\Big\} \notag
            \\&\quad + 2^{m + 1} \sum_{k = 1}^K (1 - I_H^k) \sum_{h = 1}^H \max\Big\{\big[V_{k,h}(s_h^k) - r(s_h^k, a_h^k) - [\PP V_{k,h+1}](s_h^k, a_h^k)\big], 0\Big\} \notag
            \\&\le 2^{m + 1} R_0 
             + 2^{m + 1} \sum_{k = 1}^K (1 - I_H^k) \sum_{h = 1}^H  \big[V_{k,h + 1}(s_{h + 1}^k) - [\PP V_{k,h+1}](s_h^k, a_h^k)\big] \notag
             \\&\quad + 2^{m + 1} \sum_{k = 1}^K (1 - I_H^k) \sum_{h = 1}^H \left(V_{k, h}(s_h^k) - V_{k, h + 1}(s_{h + 1}^k)\right) \notag
            \\&\le 2^{m + 1} \cdot (R_0 + G + A_0), \label{eq:rec:1}
        \end{align}
        where the first inequality holds due to 
        \begin{align*}
            \left([\PP \check V_{k, h + 1}^{2^m}](s_h^k, a_h^k)\right)^2 \ge \left([\PP \check V_{k, h + 1}^{2^{m - 1}}](s_h^k, a_h^k)\right)^4 \ge \cdots \ge \left([\PP \check V_{k, h + 1}](s_h^k, a_h^k)\right)^{2^{m + 1}},
        \end{align*}
         the second inequality follows from the fact that $a^x - b^x \le x \max\{a - b, 0\}$ for $a, b \in [0, 1]$ and $x\ge 1$, the third inequality follows from the monotonicity of indicator function $I_h^k$ and the definition of function $\check{V}_{k,h}$, the fourth holds since $r(s_h^k,a_h^k)\ge 0$ and
         the last inequality holds due to Lemma \ref{lemma:upper:bellman}. 
         
        Substituting \eqref{eq:rec:1} into \eqref{eq:rec:0}, we have \begin{align*} 
            \check S_m \le \check A_{m + 1} + G + 2^{m + 1}\cdot (R_0 + G + A_0). 
        \end{align*}
        Thus, we complete the proof of Lemma \ref{lemma:rec:1}.
    \end{proof}
    
    \begin{lemma} \label{lemma:rec:2}
        Let $A_m$, $Q_0$, $\check S_m$ be defined in \eqref{eq:def:am}, \eqref{eq:def:q0}, \eqref{eq:def:checksm}. Then with probability at least $1 - 2\delta$, we have \begin{align*} 
            A_0 \le 2\sqrt{(Q_0 + \check S_0) \log(1 / \delta)} + (2 / 3)\cdot \log(1 / \delta). 
        \end{align*}
        For simplicity, we denote the corresponding event by $\cE_{\mathrm{r}_1}$. 
    \end{lemma}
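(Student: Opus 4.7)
}
The quantity $A_0$ is the absolute value of a sum of martingale differences, so the plan is to apply Freedman's inequality (Lemma~\ref{lemma:freedman}) twice (once to the sum and once to its negation) after establishing an appropriate bound on the total conditional variance. Concretely, set $X_{k,h} = [\PP V_{k,h+1}](s_h^k, a_h^k) - V_{k,h+1}(s_{h+1}^k)$. Since $V_{k,h+1}$ is determined by the history at the beginning of episode $k$ (it is computed before the trajectory is rolled out) and $s_{h+1}^k \sim \PP(\cdot \mid s_h^k, a_h^k)$, each $X_{k,h}$ is $\cF_{k,h+1}$-measurable with $\EE[X_{k,h} \mid \cF_{k,h}] = 0$. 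Moreover $V_{k,h+1}(\cdot) \in [0,1]$ (by the truncation in Line~\ref{alg2:line:7} of Algorithm~\ref{alg:2} together with $V_{k,H+1} \equiv 0$), hence $|X_{k,h}| \le 1$.

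The main calculation is to bound $\sum_{k,h}\EE[X_{k,h}^2 \mid \cF_{k,h}] = \sum_{k,h}[\VV V_{k,h+1}](s_h^k, a_h^k)$ in terms of $Q_0$ and $\check S_0$. The key trick is the decomposition $V_{k,h+1} = V_{h+1}^* + \check V_{k,h+1}$ together with the subadditivity of standard deviation under addition: for any random variables $Y, Z$ one has $\sqrt{\VV(Y+Z)} \le \sqrt{\VV(Y)} + \sqrt{\VV(Z)}$, so squaring and using $(a+b)^2 \le 2a^2 + 2b^2$ yields $\VV(Y+Z) \le 2\VV(Y) + 2\VV(Z)$. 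Applied pointwise at each $(s_h^k, a_h^k)$ this gives
\begin{align*}
    \sum_{k=1}^K \sum_{h=1}^H [\VV V_{k,h+1}](s_h^k, a_h^k) \le 2 Q_0 + 2 \check S_0,
\end{align*}
by the definitions in \eqref{eq:def:checksm} and \eqref{eq:def:q0}.

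With the bound $R = 1$ on the individual terms and the variance bound $v = 2(Q_0 + \check S_0)$ in hand, Freedman's inequality gives that with probability at least $1-\delta$,
\begin{align*}
    \sum_{k,h} X_{k,h} \le \sqrt{2 \cdot 2(Q_0 + \check S_0) \log(1/\delta)} + \tfrac{2}{3}\log(1/\delta) = 2\sqrt{(Q_0+\check S_0)\log(1/\delta)} + \tfrac{2}{3}\log(1/\delta),
\end{align*}
and applying the same argument to $-X_{k,h}$ and taking a union bound yields the same inequality for $|A_0|$ with probability at least $1-2\delta$. There is no real obstacle here beyond correctly book-keeping the variance decomposition; the only subtle point is to recognize that $V_{k,h+1}$ is a predictable quantity with respect to $\cF_{k,h}$ because Algorithm~\ref{alg:2} computes all of $\{Q_{k,h},V_{k,h}\}_{h=1}^{H+1}$ at the start of episode $k$ (before observing any transitions in that episode), so the martingale-difference structure is clean.
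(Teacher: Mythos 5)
Your proposal is correct and follows essentially the same route as the paper: apply Freedman's inequality to the martingale differences $[\PP V_{k,h+1}](s_h^k,a_h^k) - V_{k,h+1}(s_{h+1}^k)$ (two-sided, accounting for the $1-2\delta$ probability), and bound the conditional variances via $[\VV V_{k,h+1}] \le 2[\VV V_{h+1}^*] + 2[\VV \check V_{k,h+1}]$ from the decomposition $V_{k,h+1}=V_{h+1}^*+\check V_{k,h+1}$, giving the variance budget $2(Q_0+\check S_0)$. The only difference is bookkeeping order (you decompose the variance before invoking Freedman, the paper after), which is immaterial.
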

    \begin{proof} 
        Applying the Freedman's inequality in Lemma \ref{lemma:freedman}, we have with probability at least $1 - 2\delta$, \begin{align}
            A_0 = \Bigg|\sum_{k=1}^K \sum_{h=1}^H \big[[\PP V_{k,h+1}](s_h^k, a_h^k) - V_{k,h+1}(s_{h+1}^k)\big] \Bigg|\le \sqrt{2\sum_{k = 1}^K \sum_{h = 1}^H \sigma_{k, h}^2 \log(1 / \delta)} + \frac{2}{3} \log(1 / \delta), \label{eq:rec:2}
        \end{align}
        where the variance $\sigma_{k, h}$ is defined in \eqref{eq:def:sigma}. For variance $\sigma_{k, h}$, we 
     further have \begin{align} 
            \sigma_{k, h}^2 = [\VV V_{k, h + 1}] (s_h^k, a_h^k) \le 2[\VV V_{h + 1}^*] (s_h^k, a_h^k) + 2[\VV \check V_{k, h + 1}] (s_h^k, a_h^k), \label{eq:rec:3}
        \end{align}
        where the inequality holds due to the fact that $\Var(x+y)\leq 2\Var(x)+2\Var(y)$.
    Substituting \eqref{eq:rec:3} into \eqref{eq:rec:2}, we  complete the proof of Lemma \ref{lemma:rec:2}. 
    \end{proof}
    \begin{lemma} \label{lemma:rec:3}
        Let $\check A_m$, $\check S_m$ be defined in \eqref{eq:def:am}, \eqref{eq:def:checksm}. 
        With probability at least $1 - 2(M + 1) \delta$, for all $m \in [M] \cup \{0\}$, we have
        \begin{align*} 
            \check A_m \le \sqrt{2 \check S_m \log(1 / \delta)}+ \frac{4}{3} \cdot \log(1 / \delta). 
        \end{align*}
        We denote the corresponding event by $\cE_{\mathrm{r}_2}$.
    \end{lemma}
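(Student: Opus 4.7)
The plan is to apply Freedman's inequality (Lemma \ref{lemma:freedman}) to the martingale difference sequence
$$X_{k,h}^{(m)} := [\PP \check V_{k,h+1}^{2^m}](s_h^k, a_h^k) - \check V_{k,h+1}^{2^m}(s_{h+1}^k), \qquad (k,h) \in [K] \times [H],$$
adapted to the filtration $\{\cF_{k,h}\}$ defined at the beginning of this section. Note that $\check A_m$ is exactly the absolute value of $\sum_{k,h} X_{k,h}^{(m)}$, so any Freedman-type bound on this sum immediately translates to the desired bound on $\check A_m$.

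To invoke Freedman's inequality I would verify its three hypotheses. The martingale property $\EE[X_{k,h}^{(m)} \mid \cF_{k,h}] = 0$ follows from the definition of the operator $\PP$. For the almost-sure bound on each increment, I would combine Lemma \ref{lemma:upper:finite} (optimism: $V_{k,h+1}(s) \ge V_{h+1}^*(s)$ on the event $\cE_{\mathrm{c}} \cap \cE_{\mathrm{var}'}$) with the clipping $V_{k,h} \le 1$ in Line \ref{alg2:line:7} of Algorithm \ref{alg:2} and $V_{h+1}^* \ge 0$, yielding $\check V_{k,h+1}(s) \in [0,1]$, so $\check V_{k,h+1}^{2^m}(s) \in [0,1]$ and $|X_{k,h}^{(m)}| \le 1$ almost surely. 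The total conditional variance is then
$$\sum_{k=1}^K \sum_{h=1}^H \EE\bigl[(X_{k,h}^{(m)})^2 \bigm| \cF_{k,h}\bigr] = \sum_{k=1}^K \sum_{h=1}^H [\VV \check V_{k,h+1}^{2^m}](s_h^k, a_h^k) = \check S_m$$
by the definition \eqref{eq:def:checksm}. Freedman's inequality thus gives, for each fixed $m$, a bound of $\sqrt{2 \check S_m \log(1/\delta)} + (4/3)\log(1/\delta)$ on $\check A_m$ with probability at least $1 - 2\delta$ (the factor of two absorbing both tails of the absolute value), and a union bound over the $M+1$ values $m \in \{0\} \cup [M]$ inflates the failure probability to the claimed $2(M+1)\delta$.

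The main subtlety is that $\check S_m$ is itself random, being a function of the algorithm's trajectory. Consequently the version of Freedman's inequality being invoked must accommodate a random conditional-variance sum, either directly as an empirical-variance form or via the standard dyadic peeling of $\check S_m$ across logarithmically many scales absorbed into the leading constants. The very same variant is already applied in the proof of Lemma \ref{lemma:var} (yielding the analogous $4/3$ coefficient there), so we simply reuse it here; apart from this, the argument is a routine verification of the hypotheses of Freedman's inequality.
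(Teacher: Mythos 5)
Your overall route is the same as the paper's: apply Freedman's inequality (Lemma \ref{lemma:freedman}) to the martingale differences $[\PP \check V_{k,h+1}^{2^m}](s_h^k,a_h^k) - \check V_{k,h+1}^{2^m}(s_{h+1}^k)$, identify the conditional-variance sum with $\check S_m$, use two applications (one per sign) to cover the absolute value, and union bound over the $M+1$ values of $m$. That matches the paper, and your remark about the randomness of $\check S_m$ is a fair observation that the paper itself glosses over.

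However, there is one genuine misstep in how you certify the boundedness hypothesis. You invoke Lemma \ref{lemma:upper:finite} (optimism) to claim $\check V_{k,h+1}\in[0,1]$ and hence $|X_{k,h}^{(m)}|\le 1$ \emph{almost surely}. Optimism only holds on the event $\cE_{\mathrm{c}}\cap\cE_{\mathrm{var}'}$, so this is not an almost-sure bound, and Freedman's inequality (as stated in Lemma \ref{lemma:freedman}) requires unconditional boundedness of the increments; restricting to a high-probability event that is not adapted to the filtration (indeed $\cE_{\mathrm{c}}$ involves all episodes simultaneously) breaks the martingale structure, and the lemma you are proving is an unconditional probability statement, not one conditioned on those events. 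The paper avoids this entirely: since $V_{k,h}\in[0,1]$ by the clipping in Line \ref{alg2:line:7} and $V_h^*\in[0,1]$, one has $\check V_{k,h}\in[-1,1]$ deterministically, hence $|X_{k,h}^{(m)}|\le 2$, and Freedman with $M=2$ yields exactly the $\tfrac{4}{3}\log(1/\delta)$ term. This also explains a small inconsistency in your accounting: with your claimed bound $|X|\le 1$ Freedman would give $\tfrac{2}{3}\log(1/\delta)$, not $\tfrac{4}{3}\log(1/\delta)$; your stated bound is still a valid upper bound, but the constant actually originates from the increment bound $2$, not $1$. Replacing your optimism-based step with the trivial two-sided bound fixes the argument and recovers the paper's proof.
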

    \begin{proof} 
        Note that for all state $s$, $\check V_{k, h}(s)=V_{k,h}(s)-V_h^{*}(s) \in [-1, 1]$. 
        Applying Freedman's inequality in Lemma \ref{lemma:freedman}, we have with probability at least $ 1- 2\delta$: \begin{align*}
            \check A_m \le \sqrt{2 \check S_m \log(1 / \delta)} + \frac{4}{3} \cdot \log(1 / \delta),
        \end{align*}
        for each $m \in [M]$. Thus, we complete the proof of Lemma \ref{lemma:rec:3} by using a union bound over $m \in [M]$. 
    \end{proof}
    
    \begin{lemma} \label{lemma:rec:4}
        Let $A_m$, $Q_0$, $\check A_m$, $R_0$, $G$ be defined in \eqref{eq:def:am}, \eqref{eq:def:q0}, \eqref{eq:def:checkam}, \eqref{eq:def:r0}, \eqref{eq:def:g}. On the event $\cE_{\mathrm{r}_1} \cap \cE_{\mathrm{var}'} \cap \cE_{\mathrm{c}}$, we have \begin{align*} 
            A_0 \le 4\sqrt{\big(Q_0 + \check A_1 + G + 2(R_0 + G)\big)\log(1 / \delta)} + 10\cdot \log(1 / \delta). 
        \end{align*}
    \end{lemma}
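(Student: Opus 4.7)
The plan is to combine the three previous recursive bounds (Lemmas \ref{lemma:rec:1}, \ref{lemma:rec:2}, \ref{lemma:rec:3}) and then solve the resulting self-referential inequality for $A_0$. Concretely, the events $\cE_{\mathrm{r}_1}$, $\cE_{\mathrm{var}'}$, $\cE_{\mathrm{c}}$ together give Lemma \ref{lemma:rec:2}, and on that event we have
\begin{align*}
A_0 \le 2\sqrt{(Q_0 + \check S_0)\log(1/\delta)} + \tfrac{2}{3}\log(1/\delta).
\end{align*}
We then substitute Lemma \ref{lemma:rec:1} with $m = 0$, namely $\check S_0 \le \check A_1 + G + 2(R_0 + G + A_0)$, which is valid on $\cE_{\mathrm{var}'}\cap \cE_{\mathrm{c}}$. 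Writing $B := Q_0 + \check A_1 + G + 2(R_0 + G)$ and $c := \log(1/\delta)$, this yields
\begin{align*}
A_0 \le 2\sqrt{(B + 2A_0)\,c} + \tfrac{2}{3} c.
\end{align*}

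The next step is to decouple $A_0$ from the square root. Using $\sqrt{u+v} \le \sqrt{u} + \sqrt{v}$,
\begin{align*}
A_0 \le 2\sqrt{Bc} + 2\sqrt{2 A_0 c} + \tfrac{2}{3} c.
\end{align*}
Now I would apply AM-GM in the form $2\sqrt{2 A_0 c} \le A_0/2 + 4c$ (which is equivalent to $(A_0/2 - 4c)^2 \ge 0$). Plugging this in and moving $A_0/2$ to the left-hand side gives
\begin{align*}
\tfrac{1}{2}A_0 \le 2\sqrt{Bc} + \tfrac{14}{3}c,
\end{align*}
so $A_0 \le 4\sqrt{Bc} + \tfrac{28}{3}c \le 4\sqrt{Bc} + 10 c$, which is exactly the claimed bound.

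There is no real obstacle here beyond being careful with the AM-GM constants: the entire argument is an algebraic rearrangement of the three cited lemmas, so the only thing that could go wrong is picking a looser inequality than $(A_0/2 - 4c)^2 \ge 0$, which would produce constants that fail to match $10 \log(1/\delta)$. Since $\cE_{\mathrm{r}_1}$ already bakes in the high-probability event required by Lemma \ref{lemma:rec:2} and the other two lemmas are deterministic on the assumed events, no additional union bound or concentration step is needed.
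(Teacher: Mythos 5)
Your proposal is correct and follows essentially the same route as the paper: apply Lemma \ref{lemma:rec:2}, substitute the $m=0$ case of Lemma \ref{lemma:rec:1}, split the square root, and then solve the self-referential inequality in $A_0$ (the paper cites the fact $x \le a\sqrt{x}+b \Rightarrow x \le a^2+2b$, which is exactly your AM-GM decoupling, and both yield the same constant $\tfrac{28}{3}\log(1/\delta) \le 10\log(1/\delta)$).
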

    
    \begin{proof} 
        According to Lemma \ref{lemma:rec:1} and Lemma \ref{lemma:rec:2}, we have \begin{align*} 
            A_0 &\le 2\sqrt{(Q_0 + \check S_0) \log(1 / \delta)} + (2 / 3)\cdot \log(1 / \delta)
            \\&\le 2\sqrt{\big(Q_0 + \check A_1 + G + 2(R_0 + G + A_0)\big)\log(1 / \delta)} + (2 / 3)\cdot \log(1 / \delta)
            \\&\le 2\sqrt{\big(Q_0 + \check A_1 + G + 2(R_0 + G)\big)\log(1 / \delta)} + (2 / 3)\cdot \log(1 / \delta) + 2\sqrt{2A_0\log(1 / \delta)} 
            \\&\le 4\sqrt{\big(Q_0 + \check A_1 + G + 2(R_0 + G)\big)\log(1 / \delta)} + 10\cdot \log(1 / \delta), 
        \end{align*}
        where the first inequality holds due to Lemma \ref{lemma:rec:1}, the second inequality holds due to \ref{lemma:rec:2} and
        the last inequality holds due to $x \le a\sqrt{x} + b \Rightarrow x \le a^2 + 2b$. 
        Thus, we complete the proof of Lemma \ref{lemma:rec:4}.
    \end{proof}
    
    \begin{lemma} \label{lemma:rec:6}
        Let $A_m$, $G$, $R_0$ be defined in \eqref{eq:def:am}, \eqref{eq:def:g}, \eqref{eq:def:r0}. On the event $\cE_{\mathrm{r}_2} \cap \cE_{\mathrm{var}'} \cap \cE_{\mathrm{c}}$, we have \begin{align*} 
            \check A_1 \le 4\sqrt{(R_0 + 2G + A_0) \log (1 / \delta)} + 11 \log(1 / \delta). 
        \end{align*}
    \end{lemma}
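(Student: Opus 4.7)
The plan is to invoke Lemma \ref{lemma:rec:3} with $m=1$ so that
\[
\check A_1 \le \sqrt{2\check S_1 \log(1/\delta)} + (4/3)\log(1/\delta),
\]
and then whittle the variance sum $\check S_1$ down to quantities already controlled by the earlier lemmas. This reduces the problem to obtaining a useful upper bound on $\check S_1$ in terms of $R_0$, $G$, $A_0$, and possibly $\check A_1$ itself.

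The first key step is a variance-reduction from $\check S_1$ to $\check S_0$ exploiting $\check V_{k,h+1}\in[0,1]$. By optimism (Lemma \ref{lemma:upper:finite}) together with the clipping $V_{k,h}\le 1$ in Algorithm \ref{alg:2}, the gap function $\check V_{k,h+1}$ takes values in $[0,1]$. For any random variable $X\in[0,1]$ with $Y=\EE[X]$, the minimizer property of variance combined with $(X+Y)^2\le 4$ gives
\[
\VV[X^2] \le \EE[(X^2-Y^2)^2] = \EE[(X-Y)^2(X+Y)^2] \le 4\,\VV[X].
\]
Applying this to $X=\check V_{k,h+1}(s_{h+1}^k)$ conditionally on $\cF_{k,h}$ and summing over $(k,h)$ yields $\check S_1\le 4\check S_0$.

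The second key step is to replace $\check S_0$ by the bound already available from Lemma \ref{lemma:rec:1} with $m=0$, namely $\check S_0\le \check A_1+G+2(R_0+G+A_0)$. Chaining this with $\check S_1\le 4\check S_0$ and the Freedman bound produces the self-referential inequality
\[
\check A_1 \le \sqrt{8(\check A_1 + 3G + 2R_0 + 2A_0)\log(1/\delta)} + (4/3)\log(1/\delta).
\]
I would close this with the elementary fact $x\le a\sqrt{x}+b \Rightarrow x\le a^2+2b$ (using $a^2=8\log(1/\delta)$), and then use $3G+2R_0+2A_0\le 2(R_0+2G+A_0)$ to rewrite the resulting square-root in the claimed form, after which the leading $11\log(1/\delta)$ arises from collecting the $8\log(1/\delta) + (8/3)\log(1/\delta)$ contributions and the sub-Gaussian tail term.

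The main obstacle is the conditional variance-comparison $[\VV \check V^2]\le 4[\VV \check V]$: without it the Freedman proxy $\check S_1$ can only be bounded via Lemma \ref{lemma:rec:1} at level $m=1$, which would introduce the unwanted higher-order quantity $\check A_2$. Once this step is in place, the rest is careful bookkeeping of constants through the self-referential inequality to reach exactly the stated bound.
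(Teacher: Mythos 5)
Your route is genuinely different from the paper's and is structurally sound. The paper bounds $\check A_1$ by running the moment recursion $\check A_m \le \sqrt{2\big(\check A_{m+1}+2^{m+1}(R_0+2G+A_0)\big)\log(1/\delta)}+\tfrac{4}{3}\log(1/\delta)$ over all levels $m$ and then invoking the peeling lemma (Lemma \ref{lemma:rec:5}), whereas you collapse everything at level one: the pointwise comparison $[\VV f^2](s,a)\le 4[\VV f](s,a)$, valid for any $f$ with values in $[-1,1]$ (so optimism is not even needed; $\check V_{k,h+1}\in[-1,1]$ already suffices), gives $\check S_1\le 4\check S_0$, and then Lemma \ref{lemma:rec:1} at $m=0$ plus one self-referential inequality finishes. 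What you gain is that no higher-order quantities $\check A_m$, $m\ge 2$, ever enter; what you pay is the factor $4$ from the variance comparison. One wording point: Lemma \ref{lemma:rec:1} is stated for $m\in[M]$, but its proof covers $m=0$ verbatim and the paper itself uses it there (e.g.\ in Lemma \ref{lemma:rec:4}), so this is not a real obstruction.

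The genuine problem is your constant bookkeeping at the closing step. Splitting $\sqrt{\check A_1+D}\le \sqrt{\check A_1}+\sqrt{D}$ with $D=2R_0+3G+2A_0$ and then applying $x\le a\sqrt{x}+b\Rightarrow x\le a^2+2b$ with $a^2=8\log(1/\delta)$ doubles the term $\sqrt{8D\log(1/\delta)}$; since $\sqrt{8D}\le 4\sqrt{R_0+2G+A_0}$, your steps yield $\check A_1\le 8\sqrt{(R_0+2G+A_0)\log(1/\delta)}+\tfrac{32}{3}\log(1/\delta)$ — coefficient $8$, not the claimed $4$. To recover the stated constants, resolve the self-referential inequality without that doubling: from $\check A_1\le \sqrt{8\log(1/\delta)\,(\check A_1+D)}+\tfrac{4}{3}\log(1/\delta)$, either $\check A_1\le \tfrac{4}{3}\log(1/\delta)$ (and the claim is trivial), or you may square $\big(\check A_1-\tfrac{4}{3}\log(1/\delta)\big)^2\le 8\log(1/\delta)(\check A_1+D)$ and solve the quadratic in $\check A_1$, which gives $\check A_1\le \sqrt{8D\log(1/\delta)}+11\log(1/\delta)\le 4\sqrt{(R_0+2G+A_0)\log(1/\delta)}+11\log(1/\delta)$. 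With that sharper closing step your argument proves exactly the lemma; as written, it only proves the bound with a worse leading constant.
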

    
    \begin{proof} 
        By the definition of $\cE_{\mathrm{r}_2}$ in Lemma \ref{lemma:rec:3}, we have \begin{align} 
            \check A_m \le \sqrt{2 \check S_m \log(1 / \delta)}+ \frac{4}{3} \cdot \log(1 / \delta). \label{eq:rec:5}
        \end{align}
    Substituting the bound of $\check S_m$ in Lemma \ref{lemma:rec:1} into \eqref{eq:rec:5}, \begin{align*} 
            \check A_m \le \sqrt{2 \left(\check A_{m + 1} + 2^{m + 1}\cdot (R_0 + 2G + A_0)\right) \log(1 / \delta)}+ \frac{4}{3} \cdot \log(1 / \delta). 
        \end{align*}
        Applying Lemma \ref{lemma:rec:5}, we have \begin{align*} 
            \check A_1 &\le \max\left\{11 \log (1 / \delta), 4\sqrt{(R_0 + 2G + A_0) \log (1 / \delta)} + 2 \log(1 / \delta)\right\}
            \\&\le  4\sqrt{(R_0 + 2G + A_0) \log (1 / \delta)} + 11 \log(1 / \delta). 
        \end{align*}
        Thus, we complete the proof of Lemma \ref{lemma:rec:6}.
    \end{proof}
    
    \begin{lemma} \label{lemma:rec:7}
         Let $A_m$, $G$, $R_0$, $Q_0$ be defined in \eqref{eq:def:am}, \eqref{eq:def:g}, \eqref{eq:def:r0}, \eqref{eq:def:q0}. On the event $\cE_{\mathrm{r}_1}\cap \cE_{\mathrm{r}_2} \cap \cE_{\mathrm{var}'} \cap \cE_{\mathrm{c}}$, we have \begin{align*} 
            A_0 \le 132 \log(1 / \delta) + 28 \sqrt{R_0 \log(1 / \delta)} + 40 \sqrt{G \log(1 / \delta)} + 8\sqrt{Q_0 \log(1 / \delta)}. 
        \end{align*}
    \end{lemma}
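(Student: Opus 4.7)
The plan is to combine Lemma \ref{lemma:rec:4} and Lemma \ref{lemma:rec:6} to obtain a self-bounding inequality in $A_0$, then solve. Writing $L = \log(1/\delta)$ for brevity, Lemma \ref{lemma:rec:4} gives
\begin{align*}
A_0 \le 4\sqrt{(Q_0 + \check A_1 + 3G + 2R_0)L} + 10L,
\end{align*}
and Lemma \ref{lemma:rec:6} gives
\begin{align*}
\check A_1 \le 4\sqrt{(R_0 + 2G + A_0)L} + 11L.
\end{align*}
My first step is to substitute the second into the first and apply the subadditivity of the square root $\sqrt{a+b+c+\cdots}\le \sqrt{a}+\sqrt{b}+\sqrt{c}+\cdots$ to split the single large square root into a sum of terms of the form $\sqrt{Q_0 L}$, $\sqrt{GL}$, $\sqrt{R_0 L}$, $L$, plus a residual cross term $\sqrt{L\cdot\sqrt{XL}}=L^{3/4}X^{1/4}$ for each of $X\in\{R_0,G,A_0\}$.

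The main (mildly) non-trivial step is handling each $L^{3/4}X^{1/4}$ term so that it is absorbed into the target form. For this I will use the AM-GM inequality $2uv\le u^2+v^2$ with $u=(XL)^{1/4}$ and $v=L^{1/2}$ to get
\begin{align*}
L^{3/4}X^{1/4} \;=\; (XL)^{1/4}\cdot L^{1/2} \;\le\; \tfrac{1}{2}\sqrt{XL}+\tfrac{1}{2}L.
\end{align*}
Applying this with $X=R_0$ and $X=G$ directly yields contributions of the desired form $\sqrt{R_0 L}$, $\sqrt{GL}$, $L$. Applying it with $X=A_0$ produces a $\sqrt{A_0 L}$ term, which is the only place where $A_0$ reappears on the right-hand side.

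Finally, I will absorb the remaining $\sqrt{A_0 L}$ term using Young's inequality $\sqrt{A_0 L}\le \tfrac{1}{8}A_0 + 2L$ (or equivalently the standard implication $x\le a\sqrt{x}+b\Rightarrow x\le 2a^2+2b$), which transfers a fraction of $A_0$ to the left side, leaving
\begin{align*}
\tfrac{1}{2}A_0 \;\le\; c_Q\sqrt{Q_0 L}+c_G\sqrt{GL}+c_R\sqrt{R_0 L}+c_0 L
\end{align*}
for explicit constants. Multiplying through by $2$ and tracking the numerics through the chain ($4\sqrt{3}+4\cdot 2^{1/4}$ for $G$, $4\sqrt{2}+4$ for $R_0$, plus the $L$-constants from \ref{lemma:rec:4}, \ref{lemma:rec:6}, and the two applications of AM-GM) will yield the stated coefficients $8,\,40,\,28,\,132$, which have enough slack to cover the arithmetic. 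The only delicate bookkeeping is keeping the $A_0$-feedback term small enough that a single Young's-inequality absorption closes the loop; this is guaranteed because $\check A_1$ enters $A_0$'s bound only under a square root, so the feedback term is $O(L^{3/4}A_0^{1/4})$ rather than $O(A_0)$.
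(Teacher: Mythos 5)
Your proposal is correct and follows essentially the same route as the paper: both combine Lemma \ref{lemma:rec:4} with Lemma \ref{lemma:rec:6} and close the loop with a single self-bounding absorption of the $A_0$-feedback term, and your constants indeed fit under $8,\,40,\,28,\,132$ with room to spare. The only (cosmetic) difference is the order of operations — the paper first linearizes $4\sqrt{\check A_1\log(1/\delta)}\le 2\check A_1+2\log(1/\delta)$ and then substitutes Lemma \ref{lemma:rec:6}, thereby avoiding the nested fourth-root cross terms $L^{3/4}X^{1/4}$ that you handle via AM-GM.
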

    
    \begin{proof} 
        We compute \begin{align*} 
            A_0 &\le 4\sqrt{\big(Q_0 + G + 2(R_0 + G)\big)\log(1 / \delta)} + 10\cdot \log(1 / \delta) + 4\sqrt{\check A_1 \log(1 / \delta)}
            \\&\le 4\sqrt{\big(Q_0 + G + 2(R_0 + G)\big)\log(1 / \delta)} + 2\check A_1 + 12\log(1 / \delta)
            \\&\le  8\sqrt{(R_0 + 2G + A_0) \log (1 / \delta)} + 34 \log(1 / \delta) + 4\sqrt{(Q_0 + G + 2(R_0 + G))\log(1 / \delta)}
            \\&\le 132 \log(1 / \delta) + 28 \sqrt{R_0 \log(1 / \delta)} + 40 \sqrt{G \log(1 / \delta)} + 8\sqrt{Q_0 \log(1 / \delta)}, 
        \end{align*}
        where the first inequality follows from Lemma \ref{lemma:rec:4}, the second inequality holds due to the fact that $2ab\leq a^2+b^2$, the third inequality holds due to Lemma \ref{lemma:rec:6} and
        the last inequality holds due to the fact that $x \le a\sqrt{x} + b \Rightarrow x \le a^2 + 2b$. Thus, we complete the proof of Lemma \ref{lemma:rec:7}.
    \end{proof}
    
    \begin{lemma} \label{lemma:rec:8}
        Let $\tilde S_m$, $A_m$, $R_0$, $G$ be defined in \eqref{eq:def:tildesm}, \eqref{eq:def:am}, \eqref{eq:def:r0}, \eqref{eq:def:g}. On the event $\cE_{\mathrm{var}'} \cap \cE_{\mathrm{c}}$, we have the following inequalities for all $m \in [M]$: 
        \begin{align*} 
            \tilde S_m \le \tilde A_{m + 1} + G + 2^{m + 1}\cdot (R_0 + G + A_0). 
        \end{align*}
    \end{lemma}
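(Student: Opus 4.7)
My plan is to mirror the proof of Lemma \ref{lemma:rec:1} as closely as possible, modifying only the single Bellman-algebra step that previously exploited $V_h^* \ge Q_h^*$. I will start from the variance identity
\[
\tilde S_m \;=\; \sum_{k,h}\Big([\PP\tilde V_{k,h+1}^{2^{m+1}}](s_h^k,a_h^k) - \big([\PP\tilde V_{k,h+1}^{2^m}](s_h^k,a_h^k)\big)^2\Big),
\]
split off $\tilde A_{m+1}$ using the telescoping inequality $\sum_{k,h}\tilde V_{k,h+1}^{2^{m+1}}(s_{h+1}^k) \le \sum_{k,h}\tilde V_{k,h}^{2^{m+1}}(s_h^k)$ (valid because $\tilde V_{k,H+1}\equiv 0$ and $\tilde V_{k,1}\ge 0$), and then chain Jensen's inequality $(\EE[X^{2^m}])^2 \ge (\EE[X])^{2^{m+1}}$ (legal for $\tilde V_{k,h+1}\in[0,1]$) with the elementary bound $a^n - b^n \le n\max\{a-b,0\}$ for $a,b\in[0,1]$, $n\ge 1$, to reduce the residual to $2^{m+1}\sum_{k,h}\max\{\tilde V_{k,h}(s_h^k) - [\PP\tilde V_{k,h+1}](s_h^k,a_h^k),0\}$.

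The step that differs from Lemma \ref{lemma:rec:1} is the bound on $\tilde V_{k,h}(s_h^k) - [\PP\tilde V_{k,h+1}](s_h^k,a_h^k)$. Because $a_h^k = \pi_k(s_h^k,h)$, the Bellman equation for $\pi_k$ gives $V_h^{\pi_k}(s_h^k) = r(s_h^k,a_h^k) + [\PP V_{h+1}^{\pi_k}](s_h^k,a_h^k)$, so the $[\PP V^{\pi_k}]$ contributions cancel and the expression equals $V_h^*(s_h^k) - Q_h^*(s_h^k,a_h^k)\ge 0$; the outer $\max\{\cdot,0\}$ is therefore redundant. Applying optimism (Lemma \ref{lemma:upper:finite}) together with the decomposition $[\PP V_{h+1}^*] = [\PP V_{k,h+1}] - [\PP\check V_{k,h+1}]$, I obtain the pointwise upper bound
\[
\tilde V_{k,h}(s_h^k) - [\PP\tilde V_{k,h+1}](s_h^k,a_h^k) \;\le\; \max\{V_{k,h}(s_h^k) - r(s_h^k,a_h^k) - [\PP V_{k,h+1}](s_h^k,a_h^k),\,0\} + [\PP\check V_{k,h+1}](s_h^k,a_h^k).
\]
The first summand is then handled exactly as in Lemma \ref{lemma:rec:1}: splitting by $I_h^k$ (with $1-I_h^k \le 1-I_H^k$ by the non-increasing monotonicity of $I_h^k$ in $h$), invoking Lemma \ref{lemma:upper:bellman} to ensure $V_{k,h} - [\PP V_{k,h+1}] \ge 0$, and using the telescope $\sum_h [V_{k,h}(s_h^k) - V_{k,h+1}(s_{h+1}^k)] = V_{k,1}(s_1^k) \le 1$ together with the martingale bound on $V_{k,h+1}(s_{h+1}^k) - [\PP V_{k,h+1}](s_h^k,a_h^k)$ produces exactly the $R_0 + G + A_0$ contribution.

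The main obstacle will be absorbing the extra $\sum_{k,h}[\PP\check V_{k,h+1}](s_h^k,a_h^k)$ correction, which has no counterpart in the $\check$ case because Bellman optimality caused $[\PP V_{h+1}^*]$ to drop out there. To dispatch it, I plan to use the pointwise identity
\[
(\check V_{k,h}(s_h^k) - [\PP\check V_{k,h+1}](s_h^k,a_h^k)) + (\tilde V_{k,h}(s_h^k) - [\PP\tilde V_{k,h+1}](s_h^k,a_h^k)) \;=\; V_{k,h}(s_h^k) - r(s_h^k,a_h^k) - [\PP V_{k,h+1}](s_h^k,a_h^k),
\]
which follows from $V_{k,h} - V_h^{\pi_k} = \check V_{k,h} + \tilde V_{k,h}$ combined with the Bellman equation for $\pi_k$. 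Together with the re-indexing $\sum_{k,h=1}^H \check V_{k,h+1}(s_{h+1}^k) \le \sum_{k,h=1}^H \check V_{k,h}(s_h^k)$ (using $\check V_{k,H+1}\equiv 0$) and the martingale decomposition $\sum_{k,h}[\PP\check V_{k,h+1}](s_h^k,a_h^k) = \sum_{k,h}\check V_{k,h+1}(s_{h+1}^k) + M$ with $|M|\le \check A_0$, this should reduce the correction to the same $R_0 + G + A_0$ bound modulo lower-order martingale terms that can be controlled by the downstream lemmas (e.g.\ Lemma \ref{lemma:rec:3}) when $\tilde S_m$ is fed back into the recursive machinery.
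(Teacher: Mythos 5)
Your proposal mirrors the paper's proof essentially step for step: the same variance identity, the same telescoping/Jensen/$(a^n-b^n\le n\max\{a-b,0\})$ reduction to $2^{m+1}\sum_{k,h}\max\{\tilde V_{k,h}(s_h^k)-[\PP\tilde V_{k,h+1}](s_h^k,a_h^k),0\}$, and the same key observation that, because $a_h^k=\pi_k(s_h^k,h)$, the $V^{\pi_k}$ contributions cancel so that this gap equals $V_h^*(s_h^k)-r(s_h^k,a_h^k)-[\PP V_{h+1}^*](s_h^k,a_h^k)\ge 0$; the subsequent use of optimism, the indicator split $1-I_h^k\le 1-I_H^k$, the telescope of the optimistic values and the martingale term to produce $R_0+G+A_0$ is also the paper's route.

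The only point of concern is your final hand-wave about the correction $\sum_{k,h}[\PP\check V_{k,h+1}](s_h^k,a_h^k)$. It cannot be reduced to $R_0+G+A_0$ within this lemma: after the martingale decomposition, the martingale part is exactly $\check A_0$ (the telescoping residual $-\sum_k\check V_{k,1}(s_1^k)$ only contributes a favorable sign), so your argument, carried out honestly, yields
$\tilde S_m\le\tilde A_{m+1}+G+2^{m+1}\,(R_0+G+A_0+\check A_0)$.
This is precisely what the paper's own proof derives (its closing display includes $\check A_0$), and it is the form actually invoked downstream in Lemma \ref{lemma:rec:10}; the printed statement without $\check A_0$ appears to be a typo. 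So rather than appealing to ``downstream lemmas'' to absorb the correction here, you should simply retain the $2^{m+1}\check A_0$ term, after which your proof coincides with the paper's.
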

    
    \begin{proof}
        Based on the definition of $\tilde S_m$, we compute 
        \begin{small}
        \begin{align} 
            \tilde S_m & = \sum_{k = 1}^K \sum_{h = 1}^H [\VV \tilde V_{k, h + 1}^{2^m}](s_h^k, a_h^k) \notag 
            \\&= \sum_{k = 1}^K \sum_{h = 1}^H \left[[\PP \tilde V_{k, h + 1}^{2^{m + 1}}](s_h^k, a_h^k) - \left([\PP \tilde V_{k, h + 1}^{2^m}](s_h^k, a_h^k)\right)^2\right] \notag
            \\&= \sum_{k = 1}^K \sum_{h = 1}^H \left[[\PP \tilde V_{k, h + 1}^{2^{m + 1}}](s_h^k, a_h^k) - \tilde V_{k, h + 1}^{2^{m + 1}}(s_{h + 1}^k)\right]
             +  \sum_{k = 1}^K \sum_{h = 1}^H \left[\tilde V_{k, h}^{2^{m + 1}}(s_{h}^k) - \left([\PP \tilde V_{k, h + 1}^{2^m}](s_h^k, a_h^k)\right)^2\right], \label{eq:rec:11}
        \end{align}
        \end{small}
        For the second term, we further have \begin{align} 
            &\sum_{k = 1}^K \sum_{h = 1}^H \left[\tilde V_{k, h}^{2^{m + 1}}(s_{h}^k) - \left([\PP \tilde V_{k, h + 1}^{2^m}](s_h^k, a_h^k)\right)^2\right] \notag
            \\&\le \sum_{k = 1}^K \sum_{h = 1}^H \left[\tilde V_{k, h}^{2^{m + 1}}(s_{h}^k) - \left([\PP \tilde V_{k, h + 1}](s_h^k, a_h^k)\right)^{2^{m + 1}}\right] \notag
            \\&\le 2^{m + 1} \sum_{k = 1}^K \sum_{h = 1}^H \max\left\{\tilde V_{k, h}(s_h^k) - [\PP \tilde V_{k, h + 1}](s_h^k, a_h^k), 0\right\} \notag
            \\&\le 2^{m + 1} \sum_{k = 1}^K \sum_{h = 1}^H I_h^k \big[V_{h}^*(s_h^k) - r(s_h^k, a_h^k) - [\PP V_{h+1}^*](s_h^k, a_h^k)\big] \notag
            \\&\quad + 2^{m + 1} \sum_{k = 1}^K (1 - I_H^k) \sum_{h = 1}^H  \big[V_{h}^*(s_h^k) - r(s_h^k, a_h^k) - [\PP V_{h+1}^*](s_h^k, a_h^k)\big] \notag
            \\&\le 2^{m + 1} R_0 +  2^{m + 1} \check A_0 
             + 2^{m + 1} \sum_{k = 1}^K (1 - I_H^k) \sum_{h = 1}^H  \big[V_{k,h + 1}(s_{h + 1}^k) - [\PP V_{k,h+1}](s_h^k, a_h^k)\big] \notag
             \\&\quad + 2^{m + 1} \sum_{k = 1}^K (1 - I_H^k) \sum_{h = 1}^H \left(V_{k, h}(s_h^k) - r(s_h^k, a_h^k) - V_{k, h + 1}(s_{h + 1}^k)\right) \notag
            \\&\le 2^{m + 1} \cdot (R_0 + G + A_0 + \check A_0), \label{eq:rec:12}
        \end{align}
        where the first inequality holds since 
        \begin{align*}
            \left([\PP \tilde V_{k, h + 1}^{2^m}](s_h^k, a_h^k)\right)^2 \ge \left([\PP \tilde V_{k, h + 1}^{2^{m - 1}}](s_h^k, a_h^k)\right)^4 \ge \cdots \ge \left([\PP \tilde V_{k, h + 1}](s_h^k, a_h^k)\right)^{2^{m + 1}},
        \end{align*}
         the second inequality follows from the fact that $a^x - b^x \le x \max\{a - b, 0\}$ for $a, b \in [0, 1]$ and $x\ge 1$, the third inequality follows from the monotonicity of $I_h^k$ and the definition of function $\tilde{V}_{k,h}$, the fourth inequality holds due to the definition of $R_0$ and $\check A_0$, the last inequality follows from the fact that $r(s_h^k, a_h^k) \ge 0$.
    
        Substituting \eqref{eq:rec:12} into \eqref{eq:rec:11}, we have \begin{align*} 
            \tilde S_m \le \tilde A_{m + 1} + G + 2^{m + 1}\cdot (R_0 + G + A_0 + \check A_0). 
        \end{align*}
        Thus, we complete the proof of Lemma \ref{lemma:rec:8}.
    \end{proof}
    
    \begin{lemma} \label{lemma:rec:9}
        Let $\tilde A_m$, $\tilde S_m$ be defined in \eqref{eq:def:tildeam}, \eqref{eq:def:tildesm}. 
        With probability at least $1 - 2(M + 1) \delta$, for all $m \in [M] \cup \{0\}$, \begin{align*} 
            \tilde A_m \le \sqrt{2 \tilde S_m \log(1 / \delta)}+ \frac{4}{3} \cdot \log(1 / \delta). 
        \end{align*}
        We denote the corresponding event by $\cE_{\mathrm{r}_3}$.
    \end{lemma}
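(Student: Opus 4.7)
The plan mirrors the proof of Lemma \ref{lemma:rec:3} for $\check A_m$, since the difference sequence defining $\tilde A_m$ has the same structural properties as the one defining $\check A_m$. Specifically, for fixed $m \in [M]\cup\{0\}$, I set
\[
X_{k,h} = \big[\PP \tilde V_{k,h+1}^{2^m}\big](s_h^k, a_h^k) - \tilde V_{k,h+1}^{2^m}(s_{h+1}^k),
\]
and observe that $\{X_{k,h}\}$, ordered lexicographically in $(k,h)$, is a martingale difference sequence with respect to the filtration $\{\cF_{k,h+1}\}$ defined at the beginning of the section. Indeed, $\tilde V_{k,h+1}$ depends only on $\pi^k$ and $V_{h+1}^{*}$, both of which are $\cF_{k,h}$-measurable, so $\EE[X_{k,h} \mid \cF_{k,h}] = 0$.

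Next I record the bounds needed to invoke Freedman's inequality. Because $V_{h+1}^{*}(s), V_{h+1}^{\pi^k}(s) \in [0,1]$ (the total reward in any trajectory is bounded by $1$), we have $\tilde V_{k,h+1}(s) \in [0,1]$ and hence $\tilde V_{k,h+1}^{2^m}(s) \in [0,1]$, which yields $|X_{k,h}| \le 1$ almost surely. The conditional-variance bound is immediate:
\[
\sum_{k=1}^K \sum_{h=1}^H \EE\!\left[X_{k,h}^2 \,\big|\, \cF_{k,h}\right] = \sum_{k=1}^K \sum_{h=1}^H \big[\VV \tilde V_{k,h+1}^{2^m}\big](s_h^k, a_h^k) = \tilde S_m.
\]

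Applying Freedman's inequality (Lemma \ref{lemma:freedman}) to both $\sum X_{k,h}$ and $-\sum X_{k,h}$, a union bound over the two tails gives, with probability at least $1 - 2\delta$,
\[
\tilde A_m \le \sqrt{2 \tilde S_m \log(1/\delta)} + \frac{4}{3}\log(1/\delta),
\]
where the $4/3$ absorbs the factor from bounding both tails of the absolute value. Finally, I take a union bound over $m \in [M]\cup\{0\}$ (there are $M+1$ such values), which reduces the confidence from $2\delta$ to $2(M+1)\delta$ and yields exactly the statement of Lemma \ref{lemma:rec:9}.

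There is no real obstacle: the only points one must be careful about are checking that $\tilde V_{k,h+1}^{2^m}$ is indeed $\cF_{k,h}$-measurable (so the martingale structure is valid) and that it remains in $[0,1]$ after the $2^m$-power (so the almost-sure bound on $X_{k,h}$ is preserved). Both facts follow directly from the assumption $\sum_{h} r(s_h,a_h) \le 1$ and the definitions, so the argument is essentially a direct application of Freedman's inequality plus a union bound.
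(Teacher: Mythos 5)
Your proof is correct and follows essentially the same route as the paper, which simply invokes the argument of Lemma \ref{lemma:rec:3}: Freedman's inequality applied to the martingale difference sequence (whose terms are bounded by $1$ since $\tilde V_{k,h+1}^{2^m}\in[0,1]$), applied to both signs, then a union bound over the $M+1$ values of $m$. One cosmetic note: the $4/3$ is not needed to "absorb" the two-sided bound (that only costs probability, not magnitude); with $|X_{k,h}|\le 1$ you actually get the sharper constant $2/3$, which trivially implies the stated bound.
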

    
    \begin{proof} 
        The proof is equivalent to the proof of Lemma \ref{lemma:rec:3}. 
    \end{proof}
    
    \begin{lemma} \label{lemma:rec:10}
         Let $A_m$, $\check A_m$, $R_0$, $G$ be defined in \eqref{eq:def:am}, \eqref{eq:def:checkam}, \eqref{eq:def:r0}, \eqref{eq:def:g}. On the event $\cE_{\mathrm{r}_1}\cap\cE_{\mathrm{r}_2}\cap\cE_{\mathrm{r}_3}\cap \cE_{\mathrm{var}'}\cap \cE_c$, we have \begin{align*} 
            \tilde A_1 \le 4\sqrt{(R_0 + 2G + A_0 + \check A_0) \log (1 / \delta)} + 11 \cdot \log(1 / \delta), \\
            \tilde A_0 \le 2\sqrt{(R_0 + 2G + A_0 + \check A_0) \log (1 / \delta)} + 7 \cdot \log(1 / \delta). 
        \end{align*}
    \end{lemma}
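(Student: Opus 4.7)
The plan is to mirror the proof of Lemma \ref{lemma:rec:6}, but using the $\tilde{}$-versions of the key recursions. Specifically, Lemma \ref{lemma:rec:9} (event $\cE_{\mathrm{r}_3}$) provides the Freedman-type bound $\tilde A_m \le \sqrt{2 \tilde S_m \log(1/\delta)} + (4/3)\log(1/\delta)$, and Lemma \ref{lemma:rec:8} provides the variance recursion $\tilde S_m \le \tilde A_{m+1} + G + 2^{m+1}(R_0 + G + A_0 + \check A_0)$ (note that the statement of Lemma \ref{lemma:rec:8} drops the $\check A_0$ term, but its proof actually establishes the bound with $\check A_0$ included). Composing these two inequalities yields the recursion
\begin{align*}
\tilde A_m \le \sqrt{2\bigl(\tilde A_{m+1} + G + 2^{m+1}(R_0 + G + A_0 + \check A_0)\bigr)\log(1/\delta)} + \tfrac{4}{3}\log(1/\delta),
\end{align*}
which has exactly the same form as the one solved in the proof of Lemma \ref{lemma:rec:6}, with $R_0 + G + A_0$ replaced by $R_0 + G + A_0 + \check A_0$.

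I would then invoke Lemma \ref{lemma:rec:5} (the same recursion-solving lemma used in Lemma \ref{lemma:rec:6}) with the enlarged base term. This gives
\begin{align*}
\tilde A_1 \le \max\bigl\{11 \log(1/\delta),\ 4\sqrt{(R_0 + 2G + A_0 + \check A_0) \log(1/\delta)} + 2\log(1/\delta)\bigr\},
\end{align*}
and the first claimed inequality follows by bounding the max by the sum. For the $\tilde A_0$ bound, I would apply Lemma \ref{lemma:rec:9} with $m=0$ and then substitute the $m=0$ case of Lemma \ref{lemma:rec:8}, i.e. $\tilde S_0 \le \tilde A_1 + G + 2(R_0 + G + A_0 + \check A_0)$, and plug in the $\tilde A_1$ bound just derived. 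Using $\sqrt{a+b} \le \sqrt{a} + \sqrt{b}$ and AM--GM to absorb the mixed cross terms into $R_0 + 2G + A_0 + \check A_0$ (at the cost of an additional constant times $\log(1/\delta)$) produces the claimed $\tilde A_0 \le 2\sqrt{(R_0 + 2G + A_0 + \check A_0)\log(1/\delta)} + 7\log(1/\delta)$.

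The main obstacle is bookkeeping of constants: one must verify that the recursion-solving Lemma \ref{lemma:rec:5} applies with the inflated right-hand side (since the $\check A_0$ term only enters through the $2^{m+1}$ factor, the dependence on $m$ is unchanged, so the same geometric summation goes through), and that the square-root manipulations in the $\tilde A_0$ step do not spoil the coefficient $2$ in front of the square root or the constant $7$ in front of $\log(1/\delta)$. Since $\check A_0$ enters on equal footing with $A_0$ throughout both recursions, no structural modification beyond this symbol substitution is required, and the constants match those in Lemma \ref{lemma:rec:6} after a direct recomputation.
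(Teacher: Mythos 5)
Your proposal is correct and follows essentially the same route as the paper: composing Lemma \ref{lemma:rec:9} with the ($\check A_0$-inclusive) recursion established in the proof of Lemma \ref{lemma:rec:8}, solving it via Lemma \ref{lemma:rec:5} to get the $\tilde A_1$ bound, and then substituting that bound into the $m=0$ recursion to obtain the $\tilde A_0$ bound. Your observation that the statement of Lemma \ref{lemma:rec:8} omits the $\check A_0$ term while its proof actually establishes it is precisely the version the paper's own argument relies on.
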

    
    \begin{proof}
        By Lemma \ref{lemma:rec:8} and Lemma \ref{lemma:rec:9}, we have for all $m \in [M] \cup\{0\}$, \begin{align}
            \tilde A_m \le \sqrt{2 \left(\tilde A_{m + 1} + 2^{m + 1}\cdot (R_0 + 2G + A_0 + \check A_0)\right) \log(1 / \delta)}+ \frac{4}{3} \cdot \log(1 / \delta). \label{eq:rec:15}
        \end{align}
        Applying Lemma \ref{lemma:rec:5}, we have \begin{align*} 
            \tilde A_1 &\le \max\left\{11 \log (1 / \delta), 4\sqrt{(R_0 + 2G + A_0 + \check A_0) \log (1 / \delta)} + 2 \log(1 / \delta)\right\}
            \\&\le  4\sqrt{(R_0 + 2G + A_0 + \check A_0) \log (1 / \delta)} + 11 \log(1 / \delta). 
        \end{align*}
        By \eqref{eq:rec:15}, it can be further deduced that \begin{align*} 
            \tilde A_0 &\le \sqrt{18 \log(1 / \delta) + 4\left(\sqrt{R_0 + 2G + A_0 + \check A_0} + \sqrt{\log(1 / \delta)}\right)^2} \sqrt{\log(1 / \delta)} + \frac{4}{3} \cdot \log(1 / \delta)
            \\&\le 2\sqrt{(R_0 + 2G + A_0 + \check A_0) \log (1 / \delta)} + 7 \cdot \log(1 / \delta). 
        \end{align*}
        Thus, we complete the proof of Lemma \ref{lemma:rec:10}.
    \end{proof}
    
    \begin{lemma} \label{lemma:rec:11}
        Let $Q_0$, $S_m$ be defined in \eqref{eq:def:q0}, \eqref{eq:def:sm}. With probability at least $1 - \delta$, it holds that \begin{align*}
            Q_0 \le 2\tilde S_0 + \tilde{O}(K). 
        \end{align*}
        We denote the corresponding event by $\cE_{\mathrm{r}_4}$. 
    \end{lemma}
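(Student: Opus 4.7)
The plan is to decompose $V_{h+1}^*$ using the definition $\tilde V_{k,h+1}(s) = V_{h+1}^*(s) - V_{h+1}^{\pi_k}(s)$, writing $V_{h+1}^* = V_{h+1}^{\pi_k} + \tilde V_{k,h+1}$. Applying the elementary inequality $\mathrm{Var}(X+Y) \le 2\mathrm{Var}(X) + 2\mathrm{Var}(Y)$ to the conditional variance at each $(s_h^k, a_h^k)$ and summing over $(k,h) \in [K] \times [H]$ yields
\begin{align*}
Q_0 \le 2\sum_{k=1}^K \sum_{h=1}^H [\VV V_{h+1}^{\pi_k}](s_h^k, a_h^k) + 2\tilde S_0.
\end{align*}
Thus it suffices to show that $Y := \sum_{k,h} [\VV V_{h+1}^{\pi_k}](s_h^k, a_h^k) \le \tilde O(K)$ with high probability.

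To bound $Y$, I would invoke the law of total variance for finite-horizon Markov reward processes. For each fixed episode $k$, a standard telescoping argument along the trajectory generated by $\pi_k$ starting from $s_1^k$ shows that $V_1^{\pi_k}(s_1^k) - \sum_h r(s_h^k, a_h^k)$ equals the martingale sum $\sum_h \big([\PP V_{h+1}^{\pi_k}](s_h^k, a_h^k) - V_{h+1}^{\pi_k}(s_{h+1}^k)\big)$, whose conditional quadratic variation is exactly $\sum_h [\VV V_{h+1}^{\pi_k}](s_h^k, a_h^k)$. Taking the conditional variance with respect to $\cF_{k,1}$ (which contains both $\pi_k$ and $s_1^k$) and using the assumption $\sum_h r(s_h^k, a_h^k) \le 1$ almost surely gives
\begin{align*}
\EE\Big[\sum_{h=1}^H [\VV V_{h+1}^{\pi_k}](s_h^k, a_h^k) \,\Big|\, \cF_{k, 1}\Big] \le 1.
\end{align*}

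Finally, I would apply Freedman's inequality (Lemma \ref{lemma:freedman}) to the per-episode sequence $Y_k := \sum_h [\VV V_{h+1}^{\pi_k}](s_h^k, a_h^k)$. Each $Y_k$ is $\cF_{k+1, 1}$-measurable and lies in $[0, H]$, with conditional mean at most $1$ by the previous step and conditional second moment at most $H$. Summing the conditional means over $k$ gives $\sum_k \EE[Y_k \mid \cF_{k,1}] \le K$, while the Freedman deviation contributes $O(\sqrt{KH \log(1/\delta)} + H \log(1/\delta))$, which is absorbed into $\tilde O(K)$ after an AM--GM split of $\sqrt{KH}$. Combining these yields $Y \le \tilde O(K)$, and together with the decomposition above, $Q_0 \le 2\tilde S_0 + \tilde O(K)$ with probability at least $1 - \delta$.

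The main obstacle is threading the filtration carefully: one must verify that both $\pi_k$ and $s_1^k$ are $\cF_{k,1}$-measurable (the former because $\pi_k$ is computed at the start of episode $k$ from data accrued in earlier episodes) so that the law-of-total-variance identity applies to the conditional law of the trajectory under $\pi_k$. A secondary subtlety is that the crude per-episode variance bound $\mathrm{Var}[Y_k \mid \cF_{k,1}] \le H$ only just suffices to keep the Freedman deviation at $\tilde O(K)$; tightening this or noting that the horizon dependence sits in polylogarithmic factors is what makes the claimed $\tilde O(K)$ (rather than a strictly larger $\tilde O(K+H)$) legitimate in the paper's notation.
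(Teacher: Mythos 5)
Your first two steps coincide with the paper's proof: the decomposition $Q_0 \le 2\tilde S_0 + 2\sum_{k,h}[\VV V_{h+1}^{\pi_k}](s_h^k,a_h^k)$ via $\Var(x+y)\le 2\Var(x)+2\Var(y)$, and the law-of-total-variance identity $\EE\big[\sum_h [\VV V_{h+1}^{\pi_k}](s_h^k,a_h^k)\,\big|\,\cF_{k,1}\big] = \Var\big[\sum_h r(s_h,a_h) - V_1^{\pi_k}(s_1^k)\,\big|\,\cF_{k,1}\big] \le 1$, exactly as in \eqref{eq:cor:1}. The gap is in how you convert the per-episode conditional mean bound into a high-probability bound on $Y:=\sum_k Y_k$ with $Y_k=\sum_h [\VV V_{h+1}^{\pi_k}](s_h^k,a_h^k)$. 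You apply Freedman \emph{across episodes} with the crude bounds $Y_k\in[0,H]$ and $\EE[Y_k^2\mid\cF_{k,1}]\le H$, which gives a deviation of order $\sqrt{KH\log(1/\delta)}+H\log(1/\delta)$. This is \emph{not} absorbed into $\tilde O(K)$: AM--GM turns $\sqrt{KH}$ into $K+H$, and in this horizon-free setting $\tilde O(\cdot)$ hides only polylogarithmic factors of $H$, so an additive $H$ (or $\sqrt{KH}$) is a genuinely larger quantity when $H\gg K$. You flag this yourself as a "secondary subtlety," but the suggested resolution ("the horizon dependence sits in polylogarithmic factors") is false under your bounds: with $M=H$ and $v=KH$ the $H$-dependence is polynomial, and propagating $Q_0\le 2\tilde S_0+\tilde O(K+H)$ through Corollary \ref{coro:regret2} would contaminate the final regret with a $d\sqrt{H}$ term, defeating the horizon-free guarantee the lemma is serving.

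The paper avoids this by concentrating \emph{within} each episode rather than across episodes: for fixed $k$, each summand satisfies $[\VV V_{h+1}^{\pi_k}](s_h^k,a_h^k)\le 1$ and the summed conditional variance is bounded by $\sum_h \Var\big[[\VV V_{h+1}^{\pi_k}](s_h^k,a_h^k)\mid\cF_{k,1}\big]\le \sum_h \EE\big[[\VV V_{h+1}^{\pi_k}](s_h^k,a_h^k)\mid\cF_{k,1}\big]\le 1$ (again by \eqref{eq:cor:1}), so Freedman with $M=1$, $v=1$ and confidence $\delta/K$ gives $Y_k\le 1+\sqrt{2\log(K/\delta)}+\tfrac23\log(K/\delta)$ per episode; a union bound over $k\in[K]$ then yields $Y\le\tilde O(K)$ with only polylogarithmic dependence on $H$ and $K$. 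To repair your argument you would need this per-episode (or an equivalently $H$-independent) concentration step; the cross-episode application with $M=H$ cannot deliver the claimed $\tilde O(K)$.
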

    
    \begin{proof} 
        By the definition of $Q_0$, we have
        \begin{align} 
            Q_0 \le 2 \tilde S_0 + 2\sum_{k = 1}^K \sum_{h = 1}^H [\VV V_{h + 1}^{\pi_k}](s_h^k, a_h^k). \label{eq:dec:q0}
        \end{align}
        Note that for all $k \in [K]$, \begin{align} 
            \EE_{\{(s_h, a_h)\}_{h\in [H]} \sim \pi_k}\left[\sum_{h = 1}^H [\VV V_{h + 1}^{\pi_k}](s_h, a_h)\right] = \Var_{\{(s_h, a_h)\}_{h\in [H]} \sim \pi_k} \left[\sum_{h = 1}^H r(s_h, a_h) - V_{1}^{\pi_k}(s_1^k)\right] \le 1, \label{eq:cor:1}
        \end{align}
        where the last inequality holds due to the fact that $\sum_{h = 1}^H r(s_h, a_h), V_{1}^{\pi_k}(s_1^k) \in [0,1].$
       In addition, the variance is upper bounded by:
        \begin{align*} 
            \sum_{h = 1}^H \Var\left[[\VV V_{h + 1}^{\pi_k}](s_h^k, a_h^k) \big| \cF_{k, 1}\right] &\le \sum_{h = 1}^H \EE\left[\left([\VV V_{h + 1}^{\pi_k}](s_h^k, a_h^k)\right)^2 \big| \cF_{k, 1}\right]
            \\&\le \sum_{h = 1}^H 1 \cdot \EE\left[[\VV V_{h + 1}^{\pi_k}](s_h^k, a_h^k) \big| \cF_{k, 1}\right]
            \\&\le 1, 
        \end{align*}
        where the last inequality holds due to \eqref{eq:cor:1}. By Freedman's inequality (Lemma \ref{lemma:freedman}), with probability at least $1 - \delta / K$, \begin{align*}
            \sum_{h = 1}^H [\VV V_{h + 1}^{\pi_k}](s_h^k, a_h^k) \le 1 + \sqrt{2\log(K / \delta)} + 2 / 3 \cdot \log(K / \delta). 
        \end{align*}
        Using a union bound over $k \in [K]$, we can conclude that with probability at least $1 - \delta$, \begin{align} 
            \sum_{k = 1}^K \sum_{h = 1}^H [\VV V_{h + 1}^{\pi_k}](s_h^k, a_h^k) \le \tilde{O}(K). \label{eq:dec:q0:1}
        \end{align}
      Thus, we complete the proof of Lemma \ref{lemma:rec:11} by substituting \eqref{eq:dec:q0:1} into \eqref{eq:dec:q0}. 
    \end{proof}

    \subsection{Regret Analysis}
    
    \begin{proof}[Proof of Theorem \ref{thm:regret2}]
        We prove this theorem on the event $\cE_{\mathrm{r}_1} \cap \cE_{\mathrm{r}_2} \cap \cE_{\mathrm{r}_3} \cap \cE_{\mathrm{c}} \cap \cE_{\mathrm{var}'}$, which occurs with probability at least $1 - (4M + 9)\delta$ by Lemmas \ref{lemma:rec:2}, \ref{lemma:rec:3}, \ref{lemma:rec:9}, \ref{lemma:event:sub}, \ref{lemma:var'}. On these events, 
        we have the following decomposition of $\regret(K)$, \begin{align*} 
            \regret(K) &= \sum_{k = 1}^K \left[V_{1}^*(s_1^k) - V_1^{\pi_k}(s_1^k)\right]
            \\&\le \sum_{k = 1}^K \left[V_{k, 1}(s_1^k) - V_1^{\pi_k}(s_1^k)\right]
            \\&\le \sum_{k = 1}^K\sum_{h = 1}^H I_h^k \left[V_{k, h}(s_h^k) - V_{k, h + 1}(s_{h + 1}^k)\right]  - \sum_{k = 1}^K V_1^{\pi_k}(s_1^k) + G
            \\&= \sum_{k = 1}^K \sum_{h = 1}^H I_h^k \cdot r(s_h^k, a_h^k) + \sum_{k = 1}^K\sum_{h = 1}^H I_h^k \left[V_{k, h}(s_h^k) - r(s_h^k, a_h^k) - [\PP V_{k, h + 1}](s_h^k, a_h^k)\right] \\&\quad + \sum_{k = 1}^K\sum_{h = 1}^H I_h^k \left[[\PP V_{k, h + 1}](s_h^k, a_h^k) - V_{k, h + 1}(s_{h + 1}^k)\right] - \sum_{k = 1}^K V_1^{\pi_k}(s_1^k) + G
            \\&\le R_0 + A_0 + G + \underbrace{\sum_{k = 1}^K \left(\sum_{h = 1}^H  r(s_h^k, a_h^k) - V_1^{\pi_k}(s_1^k)\right)}_{I_1}, 
        \end{align*}
    where the first inequality holds due to Lemma \ref{lemma:upper:finite}, the second inequality holds due to the monotonicity of indicator function $I_h^k$, the last inequality holds due to $I_h^k\leq 1$ and $r(s_h^k,a_h^k)\ge 0$.
    
    For the term $I_1$, we have \begin{align} 
            \sum_{k = 1}^K \left(\sum_{h = 1}^H  r(s_h^k, a_h^k) - V_1^{\pi_k}(s_1^k)\right) &= \sum_{k = 1}^K \sum_{h = 1}^H \left[V_h^{\pi_k}(s_h^k) - [\PP V_{h + 1}^{\pi_k}](s_h^k, a_h^k)\right] - \sum_{k = 1}^K V_1^{\pi_k}(s_1^k) \notag
            \\&= \sum_{k = 1}^K \sum_{h = 1}^H \left[V_{h + 1}^{\pi_k}(s_{h + 1}^k) - [\PP V_{h + 1}^{\pi_k}](s_h^k, a_h^k)\right] \notag
            \\&\le |A_0| + |\check A_0| + |\tilde A_0|, \label{eq:bound:i1}
        \end{align}
        where the inequality holds due to $|x+y+z|\leq |x|+|y|+|z|$.
    
        For the term $R_0$, according to Lemma \ref{lemma:sumbell}, we have \begin{align} 
            R_0 &\le \tilde{O}\left(d\sqrt{\sum_{k = 1}^K \sum_{h = 1}^H \sigma_{h, k}^2} + d\right) \notag
            \\&\le \tilde{O}\left(d\sqrt{Q_0} + d\sqrt{\check S_0} + d\right) \notag
            \\&\le \tilde{O}\left(d\sqrt{Q_0} + d\sqrt{\check A_1 + G + R_0 + A_0} + d\right) \notag
            \\&\le \tilde{O}\left(d\sqrt{Q_0} + d\sqrt{ G + R_0 + \sqrt{Q_0} + \sqrt{R_0}} + d\right) \notag
            \\&\le \tilde{O}\left(d\sqrt{Q_0} + d^{2}\right),  \label{eq:r0}
        \end{align}where the first inequality follows from Lemma \ref{lemma:sumbell}, the second inequality follows from the definition of $\check S_0$ and $Q_0$, the third inequality holds due to Lemma \ref{lemma:rec:1}, the fourth inequality is obtained by applying Lemma \ref{lemma:rec:6} and \ref{lemma:rec:7}, the last inequality follows from the fact that $x \le a\sqrt{x} + b \Rightarrow x \le a^2 + 2b$ and the upper bound of $G$ in Lemma \ref{lemma:g}. 
    
        For the term $A_0$, by Lemma \ref{lemma:rec:7}, we have
        \begin{align}
            A_0 \le 132 \log(1 / \delta) + 28 \sqrt{R_0 \log(1 / \delta)} + 40 \sqrt{G \log(1 / \delta)} + 8\sqrt{Q_0 \log(1 / \delta)}. \notag
        \end{align}
    Putting everything together, we have \begin{align*} 
            \regret(K) \le \tilde{O}\left(d\sqrt{Q_0} + d^{2}\right). 
        \end{align*}
    Thus, we complete the proof of Theorem \ref{thm:regret2}.
    \end{proof}
    
    \begin{corollary}
    Under the same condition of Theorem \ref{thm:regret2}, with probability at least $1 - (4M + 10)\delta$, the regret of Algorithm \ref{alg:2} is bounded by: 
    \begin{align*} 
        \regret(K) \le \tilde{O}\left(d\sqrt{K} + d^2\right). 
    \end{align*}
    \end{corollary}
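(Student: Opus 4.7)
The plan is to convert the instance-dependent bound $\tilde O(d\sqrt{Q_0} + d^2)$ of Theorem \ref{thm:regret2} (with $Q_0 := \Var_K^*$) into a worst-case $\tilde O(d\sqrt K + d^2)$ bound by showing $Q_0 \le \tilde O(K + d^2)$. To do so, I will work on the same good event $\cE_{\mathrm{r}_1}\cap\cE_{\mathrm{r}_2}\cap\cE_{\mathrm{r}_3}\cap\cE_{\mathrm{c}}\cap\cE_{\mathrm{var}'}$ as in the proof of Theorem \ref{thm:regret2}, and additionally intersect with the event $\cE_{\mathrm{r}_4}$ of Lemma \ref{lemma:rec:11}. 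By a union bound over the extra failure probability $\delta$, this reduces the success probability by one more $\delta$, which is exactly the gap between Theorem \ref{thm:regret2} and the corollary.

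First I would invoke Lemma \ref{lemma:rec:11} to reduce the task of bounding $Q_0$ to bounding $\tilde S_0$, namely $Q_0 \le 2\tilde S_0 + \tilde O(K)$. Next, to control $\tilde S_0$, I would apply Lemmas \ref{lemma:rec:8}, \ref{lemma:rec:9}, and \ref{lemma:rec:10} in succession: Lemma \ref{lemma:rec:8} yields $\tilde S_0 \le \tilde A_1 + G + 2(R_0 + G + A_0 + \check A_0)$, Lemma \ref{lemma:rec:9} (via Freedman) controls $\tilde A_0, \tilde A_1$ in terms of $\tilde S_0, \tilde S_1$, and Lemma \ref{lemma:rec:10} closes the loop to give $\tilde A_1 = \tilde O(\sqrt{R_0 + G + A_0 + \check A_0})$. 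Absorbing the square-root factors via Young's inequality, one obtains $\tilde S_0 \le \tilde O(R_0 + G + A_0 + \check A_0)$.

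Then I would import the bounds already established inside the proof of Theorem \ref{thm:regret2}: from \eqref{eq:r0}, $R_0 \le \tilde O(d\sqrt{Q_0} + d^2)$; from Lemma \ref{lemma:rec:7}, $A_0 \le \tilde O(\sqrt{Q_0 \log(1/\delta)} + \sqrt{R_0} + \sqrt{G}\,)$; from Lemma \ref{lemma:rec:6}, $\check A_0 \le \check A_1 \le \tilde O(\sqrt{R_0 + G + A_0}\,)$; and from Lemma \ref{lemma:g}, $G = \tilde O(d)$. Chaining these, each of $R_0$, $A_0$, $\check A_0$ is $\tilde O(d\sqrt{Q_0} + d^2)$, so $\tilde S_0 \le \tilde O(d\sqrt{Q_0} + d^2)$. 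Substituting back into Lemma \ref{lemma:rec:11}, I arrive at the self-bounding recursion
\begin{align*}
Q_0 \le \tilde O\bigl(d\sqrt{Q_0} + d^2 + K\bigr),
\end{align*}
and the standard consequence $x \le a\sqrt{x} + b \Rightarrow x \le a^2 + 2b$ gives $Q_0 \le \tilde O(K + d^2)$. Plugging this into the conclusion of Theorem \ref{thm:regret2} yields
\begin{align*}
\regret(K) \le \tilde O\bigl(d\sqrt{K + d^2} + d^2\bigr) = \tilde O\bigl(d\sqrt{K} + d^2\bigr),
\end{align*}
which is exactly the claimed bound.

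The main obstacle is really the bookkeeping: the quantities $Q_0$, $\tilde S_0$, $R_0$, $A_0$, $\check A_0$ are all coupled through square-root inequalities, and one has to be careful that the recursion closes without any hidden polynomial dependence on $K$ on the right-hand side of the bound on $\tilde S_0$ (otherwise the Law-of-Total-Variance gain in Lemma \ref{lemma:rec:11} would be lost). Once the bounds for $R_0$, $A_0$, $\check A_0$ are written as $\tilde O(d\sqrt{Q_0} + d^2)$ with no explicit $K$ factor, the self-bounding trick resolves everything cleanly.
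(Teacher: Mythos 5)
Your proposal is correct and follows essentially the same route as the paper's own proof: intersect with $\cE_{\mathrm{r}_4}$ (costing one extra $\delta$), use Lemma \ref{lemma:rec:11} to get $Q_0 \le 2\tilde S_0 + \tilde O(K)$, control $\tilde S_0$ via Lemmas \ref{lemma:rec:8} and \ref{lemma:rec:10}, import the bounds on $R_0$, $A_0$, $G$ from the proof of Theorem \ref{thm:regret2}, and close the self-bounding recursion $Q_0 \le \tilde O(d\sqrt{Q_0} + d^2 + K)$. The only small imprecision is the step ``$\check A_0 \le \check A_1$ by Lemma \ref{lemma:rec:6}'', which is not a valid monotonicity claim; instead $\check A_0$ is bounded directly via event $\cE_{\mathrm{r}_2}$ (Lemma \ref{lemma:rec:3}) combined with Lemma \ref{lemma:rec:1}, which yields the same order and does not affect the conclusion.
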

    
    \begin{proof} 
        We prove this corollary on the event $\cE_{\mathrm{r}_1} \cap \cE_{\mathrm{r}_2} \cap \cE_{\mathrm{r}_3} \cap \cE_{\mathrm{r}_4} \cap \cE_{\mathrm{c}} \cap \cE_{\mathrm{var}'}$, which occurs with probability at least $1 - (4M + 10)\delta$ by Lemmas \ref{lemma:rec:2}, \ref{lemma:rec:3}, \ref{lemma:rec:9}, \ref{lemma:rec:11}, \ref{lemma:event:sub}, \ref{lemma:var'}. 
    
        By  the definition of $\cE_{\mathrm{r}_4}$ in Lemma \ref{lemma:rec:11}, we have \begin{align*} 
            Q_0 &\le 2S_0 + \tilde{O}(K)
            \\&\le 2\tilde A_1 + G + 2(R_0 + G + A_0) + \tilde{O}(K)
            \\&\le 8\sqrt{(R_0 + 2G + A_0 + \check A_0) \log (1 / \delta)} + 22 \cdot \log(1 / \delta) + G + 2(R_0 + G + A_0) + \tilde{O}(K)
            \\&\le \tilde{O}\left(d\sqrt{Q_0} + d^2 + K\right)\notag\\
            &\le \tilde{O}(K + d^2), 
        \end{align*}
        where the second inequality follows from Lemma \ref{lemma:rec:8}, the third inequality holds due to \ref{lemma:rec:10}, the fourth inequality is derived by Lemma \ref{lemma:rec:7}, Lemma \ref{lemma:rec:6}, \eqref{eq:r0} and omitting the lower order terms, the last inequality holds due to the fact that $x \le a\sqrt{x} + b \Rightarrow x \le a^2 + 2b$. 
    
        By Theorem \ref{thm:regret2}, we can obtain \begin{align*} 
        \regret(K) \le \tilde{O}(d\sqrt{Q_0} + d^2) \le \tilde{O}(d\sqrt{K} + d^2). 
        \end{align*}
        Thus, we complete the proof of Corollary  \ref{coro:regret2}.
    \end{proof}

    \section{Auxiliary Lemmas}
    
    \begin{lemma}[Azuma-Hoeffding inequality, \citealt{cesa2006prediction}]\label{lemma:azuma}
        Let $\{x_i\}_{i=1}^n$ be a martingale difference sequence with respect to a filtration $\{\cG_{i}\}$ satisfying $|x_i| \leq M$ for some constant $M$, $x_i$ is $\cG_{i+1}$-measurable, $\EE[x_i|\cG_i] = 0$. Then for any $0<\delta<1$, with probability at least $1-\delta$, we have 
        \begin{align}
            \sum_{i=1}^n x_i\leq M\sqrt{2n \log (1/\delta)}.\notag
        \end{align} 
    \end{lemma}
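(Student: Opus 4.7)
The plan is to use the standard Chernoff-bound plus conditional-MGF argument. First I would fix $\lambda > 0$ and apply Markov's inequality to $e^{\lambda \sum_{i=1}^n x_i}$: for any $t > 0$,
\begin{align*}
\PP\Bigl(\sum_{i=1}^n x_i \geq t\Bigr) \leq e^{-\lambda t}\,\EE\bigl[e^{\lambda \sum_{i=1}^n x_i}\bigr].
\end{align*}
The key step is to bound the moment generating function by conditioning. Using the tower property,
\begin{align*}
\EE\bigl[e^{\lambda \sum_{i=1}^n x_i}\bigr] = \EE\Bigl[e^{\lambda \sum_{i=1}^{n-1} x_i}\cdot \EE\bigl[e^{\lambda x_n}\mid \cG_n\bigr]\Bigr].
\end{align*}
For the inner conditional MGF, I would invoke Hoeffding's lemma: since $x_n$ is $\cG_{n+1}$-measurable with $\EE[x_n\mid \cG_n]=0$ and $|x_n|\leq M$, it lies in the deterministic interval $[-M,M]$ conditionally on $\cG_n$, so $\EE[e^{\lambda x_n}\mid \cG_n]\leq e^{\lambda^2(2M)^2/8}=e^{\lambda^2 M^2/2}$.

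Iterating this bound $n$ times yields $\EE[e^{\lambda \sum_{i=1}^n x_i}]\leq e^{n\lambda^2 M^2/2}$, hence
\begin{align*}
\PP\Bigl(\sum_{i=1}^n x_i \geq t\Bigr) \leq \exp\Bigl(-\lambda t + \tfrac{n\lambda^2 M^2}{2}\Bigr).
\end{align*}
Optimizing over $\lambda>0$ gives $\lambda^\star = t/(nM^2)$ and the bound $\exp\bigl(-t^2/(2nM^2)\bigr)$. Finally, setting the right-hand side equal to $\delta$ and solving for $t$ produces $t = M\sqrt{2n\log(1/\delta)}$, which rearranges to the statement of the lemma.

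The only mildly non-routine step is the conditional application of Hoeffding's lemma; all other pieces (Markov, tower property, optimization of $\lambda$) are mechanical. Since $x_i$ is bounded by a \emph{deterministic} constant $M$ rather than a conditionally random range, no extra care is needed to handle the conditional subgaussian parameter, and the union-bound style extensions required elsewhere in the paper (e.g., for all $k\geq 1$) are not needed here since $n$ is fixed. I would expect no real obstacle; the proof is a textbook application of the Chernoff method to martingale differences.
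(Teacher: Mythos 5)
The paper does not prove this lemma at all: it is stated in the auxiliary-lemmas appendix and imported directly from \citet{cesa2006prediction}, so there is no in-paper argument to compare against. Your derivation via the Chernoff method with a conditional application of Hoeffding's lemma is the standard textbook proof and is correct, including the constant ($(2M)^2/8 = M^2/2$ in the conditional MGF bound) and the final calibration $t = M\sqrt{2n\log(1/\delta)}$; the only implicit point worth noting is that peeling off $x_n$ requires $\sum_{i=1}^{n-1}x_i$ to be $\cG_n$-measurable, which indeed holds under the paper's convention that $x_i$ is $\cG_{i+1}$-measurable.
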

            
    \begin{lemma}[Lemma 11,  \citealt{AbbasiYadkori2011ImprovedAF}]\label{Lemma:abba}
        For any $\lambda>0$ and sequence $\{\xb_k\}_{k=1}^K \subset \RR^d$
    for $k\in [K]$, define $\Zb_k = \lambda \Ib+ \sum_{i=1}^{k-1}\xb_i\xb_i^\top$.
    Then, provided that $\|\xb_k\|_2 \leq L$ holds for all $k\in [K]$,
    we have
    \begin{align}
        \sum_{k=1}^K \min\big\{1, \|\xb_k\|_{\Zb_{k}^{-1}}^2\big\} \leq 2d\log\big(1+KL^2/(d\lambda)\big).\notag
    \end{align}
    \end{lemma}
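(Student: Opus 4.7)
}

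The plan is to follow the standard elliptical potential argument: first relate the elliptical norms $\|\xb_k\|_{\Zb_k^{-1}}^2$ to a telescoping ratio of determinants, then upper bound that ratio via a trace-determinant (AM--GM) inequality. Since $\Zb_{k+1}=\Zb_k+\xb_k\xb_k^\top$, the matrix determinant lemma gives
\begin{align*}
\det(\Zb_{k+1}) = \det(\Zb_k)\,\bigl(1 + \xb_k^\top \Zb_k^{-1}\xb_k\bigr) = \det(\Zb_k)\,\bigl(1 + \|\xb_k\|_{\Zb_k^{-1}}^2\bigr),
\end{align*}
so iterating yields $\det(\Zb_{K+1})/\det(\Zb_1)=\prod_{k=1}^K(1+\|\xb_k\|_{\Zb_k^{-1}}^2)$.

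Next I would use the elementary scalar inequality $\min\{1,y\}\le 2\log(1+y)$ for all $y\ge 0$ (easy to check by splitting at $y=1$: on $[0,1]$ use concavity of $\log(1+y)$ and $\log 2 \ge 1/2$; on $[1,\infty)$ the right side is at least $2\log 2 \ge 1$). Applying this term-by-term and then telescoping gives
\begin{align*}
\sum_{k=1}^K \min\bigl\{1,\|\xb_k\|_{\Zb_k^{-1}}^2\bigr\} \;\le\; 2\sum_{k=1}^K \log\bigl(1+\|\xb_k\|_{\Zb_k^{-1}}^2\bigr) \;=\; 2\log\frac{\det(\Zb_{K+1})}{\det(\Zb_1)}.
\end{align*}

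Finally I would bound the right-hand side. By AM--GM applied to the eigenvalues of $\Zb_{K+1}$,
\begin{align*}
\det(\Zb_{K+1}) \le \bigl(\mathrm{tr}(\Zb_{K+1})/d\bigr)^d = \Bigl(\lambda + \tfrac{1}{d}\textstyle\sum_{k=1}^K \|\xb_k\|_2^2\Bigr)^d \le \bigl(\lambda+KL^2/d\bigr)^d,
\end{align*}
using $\|\xb_k\|_2\le L$, while $\det(\Zb_1)=\lambda^d$. Substituting yields the claimed bound $2d\log\bigl(1+KL^2/(d\lambda)\bigr)$.

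The only step requiring any real care is the scalar lemma $\min\{1,y\}\le 2\log(1+y)$, since the factor of $2$ is tight and drives the constant in the final bound; everything else is either a purely algebraic identity (the matrix determinant lemma and the telescoping) or an AM--GM application. There is no serious obstacle to the proof.
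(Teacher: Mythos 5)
Your proof is correct and is essentially the same elliptical potential argument as in the cited source \citet{AbbasiYadkori2011ImprovedAF} (which the paper invokes without reproving): the matrix determinant lemma with telescoping, the scalar bound $\min\{1,y\}\le 2\log(1+y)$, and the trace--determinant (AM--GM) estimate, all with the correct indexing $\Zb_{k+1}=\Zb_k+\xb_k\xb_k^\top$. No gaps.
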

            
    \begin{lemma}[Lemma 12,  \citealt{AbbasiYadkori2011ImprovedAF}]\label{lemma:det}
        Suppose $\Ab, \Bb\in \RR^{d \times d}$ are two positive definite matrices satisfying that $\Ab \succeq \Bb$, then for any $\xb \in \RR^d$, $\|\xb\|_{\Ab} \leq \|\xb\|_{\Bb}\cdot \sqrt{\det(\Ab)/\det(\Bb)}$.
    \end{lemma}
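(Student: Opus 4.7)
The plan is to reduce the inequality to a statement about eigenvalues via a ``whitening'' change of basis against $\Bb$. Specifically, I would define $\Cb := \Bb^{-1/2}\Ab\Bb^{-1/2}$; the hypothesis $\Ab \succeq \Bb$ is equivalent to $\Cb \succeq \Ib$, so every eigenvalue of $\Cb$ is at least $1$. Setting $\yb := \Bb^{1/2}\xb$, we have
$$\|\xb\|_{\Ab}^2 \;=\; \yb^\top \Cb\, \yb \;\leq\; \lambda_{\max}(\Cb)\,\|\yb\|_2^2 \;=\; \lambda_{\max}(\Cb)\,\|\xb\|_{\Bb}^2$$
by the Rayleigh quotient bound.

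The next step is to replace $\lambda_{\max}(\Cb)$ by $\det(\Ab)/\det(\Bb)$. Because every eigenvalue $\lambda_i$ of $\Cb$ satisfies $\lambda_i \geq 1$, we obtain $\det(\Cb) = \prod_i \lambda_i \geq \lambda_{\max}(\Cb)\cdot 1^{d-1} = \lambda_{\max}(\Cb)$. Multiplicativity of the determinant gives $\det(\Cb) = \det(\Bb^{-1/2})\det(\Ab)\det(\Bb^{-1/2}) = \det(\Ab)/\det(\Bb)$. Combining, $\|\xb\|_{\Ab}^2 \leq \bigl(\det(\Ab)/\det(\Bb)\bigr)\,\|\xb\|_{\Bb}^2$, and taking square roots gives the statement.

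There is no serious obstacle here; the argument is a few lines of linear algebra. The one point worth flagging is that the step $\det(\Cb) \geq \lambda_{\max}(\Cb)$ uses the hypothesis $\Ab \succeq \Bb$ in an essential way: without the lower bounds $\lambda_i \geq 1$ on the remaining eigenvalues, the product could be arbitrarily smaller than the maximum eigenvalue, and the conclusion would fail. A minor technical detail is that one should either assume $\Bb \succ \zero$ outright (so that $\Bb^{-1/2}$ exists) or handle the degenerate case by a small-$\epsilon$ perturbation, replacing $\Bb$ by $\Bb + \epsilon \Ib$ and letting $\epsilon \to 0^+$ in the final inequality.
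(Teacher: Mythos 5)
Your proof is correct and matches the standard argument: the paper does not prove this lemma itself but imports it from \citet{AbbasiYadkori2011ImprovedAF}, whose proof is the same whitening argument, bounding $\|\xb\|_{\Ab}^2/\|\xb\|_{\Bb}^2$ by $\lambda_{\max}\big(\Bb^{-1/2}\Ab\Bb^{-1/2}\big)$ and then by $\det(\Ab)/\det(\Bb)$ using that all eigenvalues of $\Bb^{-1/2}\Ab\Bb^{-1/2}$ are at least one. Your closing caveat about $\Bb^{-1/2}$ is unnecessary, since the lemma already assumes $\Bb$ is positive definite.
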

    
    \begin{lemma}[\citealt{Freedman1975OnTP}] \label{lemma:freedman}
        Let $M, v > 0$ be fixed constants. Let $\{x_i\}_{i = 1}^n$ be a stochastic process, $\{\cG_i\}_i$ be a filtration so that for all $i \in [n]$, $x_i$ is $\cG_{i}$-measurable, while almost surely \begin{align*}
            \EE\left[x_i | \cG_{i - 1}\right] = 0, \quad |x_i| \le M, \quad \sum_{i = 1}^n \EE[x_i^2|\cG_{i-1}] \le v. 
        \end{align*}
        Then for any $\delta > 0$, with probability at least $1 - \delta$, we have \begin{align*} 
            \sum_{i = 1}^n x_i \le \sqrt{2v \log(1 / \delta)} + 2 / 3 \cdot M \log(1 / \delta). 
        \end{align*}
    \end{lemma}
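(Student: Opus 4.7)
The plan is to prove Lemma~\ref{lemma:freedman} via the classical exponential supermartingale argument underlying Freedman's original paper. The first step is a Bernstein-type one-step moment bound: for any $\lambda\in(0,3/M)$ and any $\cG$-conditionally mean-zero random variable $x$ with $|x|\le M$,
\[
\EE[e^{\lambda x}\mid \cG] \;\le\; \exp\!\left(\frac{\lambda^{2}\,\EE[x^{2}\mid \cG]}{2(1-\lambda M/3)}\right).
\]
I would derive this from the elementary pointwise inequality $e^{u}-1-u\le \tfrac{u^{2}/2}{1-u/3}$ valid for $u<3$ (a standard power-series comparison), applied with $u=\lambda x$, followed by taking conditional expectation, using $\EE[x\mid\cG]=0$, and the estimate $1+t\le e^{t}$.

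Next, chaining the per-step bounds across $i=1,\dots,n$ shows that
\[
Z_i \;:=\; \exp\!\Bigl(\lambda S_i - \tfrac{\lambda^{2}}{2(1-\lambda M/3)}\,V_i\Bigr),\qquad S_i=\textstyle\sum_{j\le i}x_j,\ \ V_i=\textstyle\sum_{j\le i}\EE[x_j^{2}\mid \cG_{j-1}],
\]
is a nonnegative $\{\cG_i\}$-supermartingale with $Z_0=1$. Hence $\EE[Z_n]\le 1$, and Markov's inequality gives that with probability at least $1-\delta$,
\[
S_n \;\le\; \frac{\lambda v}{2(1-\lambda M/3)} + \frac{\log(1/\delta)}{\lambda},
\]
where I have used the hypothesis $V_n\le v$.

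It remains to optimize over $\lambda\in(0,3/M)$. The cleanest route I would take is to first rearrange the Chernoff step into the standard Bernstein tail form $\PP(S_n\ge \epsilon)\le \exp\!\bigl(-\tfrac{\epsilon^{2}}{2v+2M\epsilon/3}\bigr)$, obtained by the parametric choice $\lambda=\epsilon/(v+M\epsilon/3)$ inside the Markov step. Inverting this tail bound at confidence level $\delta$ reduces to solving the quadratic $\epsilon^{2}=2(v+M\epsilon/3)\log(1/\delta)$ for $\epsilon>0$; its positive root is bounded via $\sqrt{a+b}\le\sqrt{a}+\sqrt{b}$ by $\sqrt{2v\log(1/\delta)}+\tfrac{2M}{3}\log(1/\delta)$, which is exactly the stated conclusion.

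I expect the main obstacle to be executing this final optimization and inversion cleanly; the one-step MGF estimate is the load-bearing ingredient but is otherwise routine once the correct Bennett--Bernstein form is chosen. An alternative is to optimize directly in $\lambda$ by splitting cases according to whether the variance term $\lambda v/(2(1-\lambda M/3))$ or the range term $\log(1/\delta)/\lambda$ dominates at the near-optimal $\lambda^{\star}\approx \sqrt{2\log(1/\delta)/v}/(1+M\sqrt{\log(1/\delta)/(2v)}/3)$; both routes yield the same two-term bound.
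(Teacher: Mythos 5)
The paper does not prove this lemma at all: it is imported verbatim as an auxiliary result with a citation to \citet{Freedman1975OnTP}, so there is no in-paper argument to compare against. Your proposal is the standard (and correct) exponential-supermartingale proof of martingale Bernstein/Freedman: the one-step bound $\EE[e^{\lambda x}\mid\cG]\le\exp\bigl(\lambda^2\EE[x^2\mid\cG]/(2(1-\lambda M/3))\bigr)$, the supermartingale $Z_i$, Markov, the choice $\lambda=\epsilon/(v+M\epsilon/3)$, and inversion of the resulting quadratic via $\sqrt{a+b}\le\sqrt a+\sqrt b$ all check out, and the almost-sure bound $V_n\le v$ assumed in the statement is exactly what lets you avoid the stopping-time refinement in Freedman's original theorem. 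The only nitpick is that the pointwise inequality $e^u-1-u\le\frac{u^2/2}{1-u/3}$ is obtained by power-series comparison only for $0\le u<3$; for $u<0$ either note the cruder bound $e^u-1-u\le u^2/2\le\frac{u^2/2}{1-\lambda M/3}$ (which suffices since the denominator is less than $1$) or use monotonicity of $(e^u-1-u)/u^2$ — a one-line fix, not a gap.
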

    
    \begin{lemma}[Lemma 2, \citealt{zhang2021reinforcement}] \label{lemma:rec:5}
        Let $\lambda_1, \lambda_2, \lambda_4  > 0$, $\lambda_3 \ge 1$, and $i' = \log_2 \lambda_1$. Let $a_1, a_2, \cdots, a_{i'}$ be non-negative reals such that $a_i \le \lambda_1$ and $a_i \le \lambda_2 \sqrt{a_{i + 1} + 2^{i + 1} \lambda_3} + \lambda_4$ hold for any $1 \le i \le i'$. Then we have that \begin{align*} 
            a_1 \le \max\left\{\left(\lambda_2 + \sqrt{\lambda_2^2 + \lambda_4}\right)^2, \lambda_2 \sqrt{8 \lambda_3} + \lambda_4\right\}. 
        \end{align*}
    \end{lemma}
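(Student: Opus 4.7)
The plan is to prove this purely deterministic scalar recursion by case analysis on which of the two bracketed terms on the right-hand side of the one-step inequality dominates. Let me write $T_1 = (\lambda_2+\sqrt{\lambda_2^2+\lambda_4})^2$ and $T_2 = \lambda_2\sqrt{8\lambda_3}+\lambda_4$. The first useful observation is that $T_1$ is precisely the positive root of $x = 2\lambda_2\sqrt{x}+\lambda_4$: one checks this by rewriting as $(\sqrt{T_1}-\lambda_2)^2 = \lambda_2^2+\lambda_4$. This makes $T_1$ the natural fixed point whenever $a_{i+1}$ dominates $2^{i+1}\lambda_3$ inside the square root; and $T_2$ is the one-step bound that arises when the $2^{i+1}\lambda_3$ term dominates at level $i=1$.

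First I would split according to $a_{i+1}$ versus $2^{i+1}\lambda_3$ at each step. If $a_{i+1}\ge 2^{i+1}\lambda_3$, then $\sqrt{a_{i+1}+2^{i+1}\lambda_3}\le \sqrt{2a_{i+1}}$, so the recursion sharpens to $a_i\le \sqrt{2}\lambda_2\sqrt{a_{i+1}}+\lambda_4$. If instead $a_{i+1}<2^{i+1}\lambda_3$, then $\sqrt{a_{i+1}+2^{i+1}\lambda_3}<\sqrt{2\cdot 2^{i+1}\lambda_3}$, giving the closed-form bound $a_i\le \lambda_2\sqrt{2^{i+2}\lambda_3}+\lambda_4$, and in particular $a_1\le T_2$ when $i=1$.

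Second, to handle the ``large'' regime I would argue by backward induction on $i$ that $a_i\le T_1$ provided every $a_{j+1}$ along the way satisfies $a_{j+1}\ge 2^{j+1}\lambda_3$. The induction uses the fixed-point identity: if $a_{i+1}\le T_1$, then $a_i\le \sqrt{2}\lambda_2\sqrt{T_1}+\lambda_4\le 2\lambda_2\sqrt{T_1}+\lambda_4=T_1$. For the base case $a_{i'}\le \lambda_1$ one splits once more: either $\lambda_1\le T_1$ and the induction starts directly, or $\lambda_1>T_1$, in which case we find the largest index $i_0\le i'$ with $a_{i_0}>T_1$ and use a contradiction argument — by solving the quadratic in $a_{i_0}$ coming from $a_{i_0}\le \sqrt{2}\lambda_2\sqrt{a_{i_0+1}}+\lambda_4$ one obtains $a_{i_0+1}\ge \frac{(a_{i_0}-\lambda_4)^2}{2\lambda_2^2}>a_{i_0}$, so the sequence strictly grows, which combined with $i'=\log_2\lambda_1$ eventually violates $a_{i'}\le \lambda_1$.

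Third, to combine the two regimes I would let $j^\star$ be the smallest index at which the ``small'' regime triggers (i.e., $a_{j^\star+1}<2^{j^\star+1}\lambda_3$); if no such index exists, the previous step delivers $a_1\le T_1$. Otherwise, $a_{j^\star}\le \lambda_2\sqrt{2^{j^\star+2}\lambda_3}+\lambda_4$, and for indices $i<j^\star$ we are in the large regime, so the backward fixed-point induction propagates the bound up to $a_1\le T_1$ (since the small-regime bound on $a_{j^\star}$ is itself dominated by $T_1$ or by $T_2$ at $j^\star=1$). The main obstacle I anticipate is the intertwining of the two regimes — specifically, verifying that a one-time excursion into the ``small'' regime at some intermediate level never inflates $a_1$ beyond $\max\{T_1,T_2\}$. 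The cleanest way to close this is the monotone-growth contradiction sketched above, which is effectively Lemma~\ref{lemma:rec:5} as stated; the rest is algebraic verification using the fixed-point identity for $T_1$.
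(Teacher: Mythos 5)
First, a framing remark: the paper does not prove Lemma~\ref{lemma:rec:5} at all --- it is imported verbatim from \citet{zhang2021reinforcement} --- so there is no in-paper argument to compare against and your proposal has to stand on its own. Your skeleton is the right one: the fixed-point identity $T_1 = 2\lambda_2\sqrt{T_1}+\lambda_4$, the split according to whether $a_{i+1}$ or $2^{i+1}\lambda_3$ dominates inside the square root, and the downward induction $a_{i+1}\le T_1 \Rightarrow a_i \le \sqrt{2}\lambda_2\sqrt{T_1}+\lambda_4 \le T_1$ are all correct. The genuine gap is in the step that splices the two regimes. You assert that the small-regime bound $a_{j^\star}\le \lambda_2\sqrt{2^{j^\star+2}\lambda_3}+\lambda_4$ ``is itself dominated by $T_1$''; as a standalone claim this is false (it is equivalent to $2^{j^\star}\lambda_3\le T_1$, which fails for large $\lambda_3$ or large $j^\star$), and you give no argument for it. The monotone-growth contradiction you fall back on does not repair this: it is aimed at the always-large branch rather than the mixed branch, and in any case ``the sequence strictly grows'' does not by itself contradict $a_{i'}\le\lambda_1$ --- one needs a quantitative rate, and the natural one ($a_i\ge cT_1$ with $c\ge 1$ implies $a_{i+1}\ge (a_i-\lambda_4)^2/(2\lambda_2^2)\ge 2c^2T_1$) only yields $a_{i'}\ge 2^{2^{i'-1}-1}T_1$, which need not exceed $\lambda_1=2^{i'}$ when $T_1<1$.

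The missing ingredient is to use the \emph{minimality} of $j^\star$ on both sides of $a_{j^\star}$. For $j^\star\ge 2$, the large regime at index $j^\star-1$ gives the lower bound $a_{j^\star}\ge 2^{j^\star}\lambda_3$, while the small regime at index $j^\star$ gives the upper bound $a_{j^\star}\le \lambda_2\sqrt{2\cdot 2^{j^\star+1}\lambda_3}+\lambda_4 = 2\lambda_2\sqrt{2^{j^\star}\lambda_3}+\lambda_4$. Writing $x=2^{j^\star}\lambda_3$, these combine to $x\le 2\lambda_2\sqrt{x}+\lambda_4$, whose solution set is exactly $x\le T_1$; hence $2^{j^\star}\lambda_3\le T_1$ and therefore $a_{j^\star}\le 2\lambda_2\sqrt{T_1}+\lambda_4=T_1$. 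From there the downward induction closes cleanly, and for $i<j^\star$ you even have $2^{i+1}\lambda_3\le 2^{j^\star}\lambda_3\le T_1$, so $a_i\le\lambda_2\sqrt{a_{i+1}+2^{i+1}\lambda_3}+\lambda_4\le\lambda_2\sqrt{2T_1}+\lambda_4\le T_1$ without invoking the regime dichotomy again; the case $j^\star=1$ gives $a_1\le\lambda_2\sqrt{8\lambda_3}+\lambda_4=T_2$ directly. Finally, under the only sensible reading of the recursion at $i=i'$ (namely $a_{i'+1}\le\lambda_1$), the hypothesis $\lambda_3\ge 1$ forces $a_{i'+1}\le\lambda_1<2^{i'+1}\lambda_3$, so $j^\star$ always exists, the always-large branch is vacuous, and the monotone-growth machinery can be discarded entirely.
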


    \bibliographystyle{ims}
    \bibliography{ref.bib}
\end{document}